\documentclass[11pt]{article}

\usepackage{amsmath,amssymb,amsthm,mathtools,bm}
\usepackage{algorithm}
\usepackage{algpseudocode}
\usepackage{float}
\usepackage{array}
\usepackage{booktabs}
\usepackage{enumitem}
\usepackage{microtype}
\usepackage{hyperref}
\usepackage{url}
\usepackage{geometry}
\hypersetup{
  colorlinks=true,
  linkcolor=blue,
  citecolor=blue,
  urlcolor=blue
}

\newtheorem{theorem}{Theorem}[section]
\newtheorem{lemma}[theorem]{Lemma}
\newtheorem{proposition}[theorem]{Proposition}
\newtheorem{corollary}[theorem]{Corollary}
\theoremstyle{definition}

\newtheorem{assumption}[theorem]{Assumption}
\theoremstyle{remark}
\newtheorem{remark}[theorem]{Remark}
\theoremstyle{plain}

\newcommand{\R}{\mathbb R}
\newcommand{\E}{\mathbb E}
\newcommand{\Prb}{\mathbb P}
\newcommand{\one}{\mathbf 1}
\newcommand{\zero}{\mathbf 0}
\newcommand{\ip}[2]{\left\langle #1,#2\right\rangle}
\newcommand{\norm}[1]{\left\lVert #1\right\rVert}
\newcommand{\diam}{\operatorname{diam}}
\newcommand{\argmax}{\operatorname*{arg\,max}}

\newcommand{\Reg}{\operatorname{Reg}}
\newcommand{\BalReg}{\operatorname{BalReg}}
\newcommand{\dist}{\operatorname{dist}}

\title{\textbf{Online Non-Monotone DR-Submodular Maximization\\
Matching the Offline $0.401$ Factor}}

\author{Vaneet Aggarwal\\Purdue University
\and Yiyang Lu\\Purdue University}
\date{}

\begin{document}
\maketitle

\begin{abstract}
We study online maximization of nonnegative, non-monotone DR-submodular
functions over compact convex down-closed subsets of the $d$-dimensional
unit cube.  The best known constructive offline approximation factor is
$0.401$ under the corresponding meta-solvability assumptions, whereas
comparable adversarial online guarantees had remained at $1/e$.  We show that
this factor is also achievable online.  In the post-decision full-information
value-oracle model, our algorithm attains factor $0.401$ with sublinear
approximate regret when oracle feedback is conditionally unbiased and
bounded.

The online algorithm does not run the offline construction on a changing
objective.  Instead, it replaces the offline objective-dependent box step by
a weighted online learner that controls the required residual terms
cumulatively.  An exact asymmetric balance theorem preserves the offline
coefficients despite adversarial variation.  The direct implementation has
$O(T^{3/4})$ regret and uses $O(dT^{1/4})$ oracle calls per round.  More
generally, for every $\delta\in[0,1/4]$, batching gives $O(T^\delta)$ calls
per round and $O(T^{4/5-\delta/5})$ regret, including a one-call
$O(T^{4/5})$ endpoint.  Under a positive-anchor condition, randomized
blocking retains factor $0.401$ with $O(T^{5/6})$ one-point bandit regret.
\end{abstract}

\section{Introduction}
\label{sec:introduction}

DR-submodular functions are continuous objectives with diminishing marginal
returns: increasing one coordinate can only decrease the marginal value of
increasing another.  They provide a natural model when the coordinates of a
decision encode divisible allocations, activation probabilities, or
continuous intensities.  Non-monotonicity is essential in many such models,
because allocating more of a resource can introduce congestion, redundancy,
or interference and can therefore reduce the objective.  Down-closed
constraints capture the complementary feasibility principle that reducing
an allocation preserves feasibility.

In the online problem, the learner repeatedly chooses a feasible point while
the reward function changes over time.  The action at round $t$ must be
committed before any feedback from $f_t$ is available, and performance is
compared with the best fixed feasible point in hindsight.  This setting
combines two genuinely different sources of loss.  First, the objective is
nonconcave, so even an offline algorithm that knows a single function in
advance generally offers only an approximation guarantee.  Second, the
learner must estimate useful directions and structural choices from past
observations, which produces an additive regret term.  A satisfactory online
result should keep these two effects separate: temporal variation should
contribute sublinear additive loss without unnecessarily degrading the best
available approximation factor.

\paragraph{The online/offline gap.}
The central question of this paper is an approximation-factor question.
For non-monotone DR-submodular maximization under down-closed convex
constraints, the best known constructive offline factor is $0.401$: Buchbinder
and Feldman obtain this guarantee using a delayed continuous-greedy
construction together with an asymmetric Box-Maximization step
\cite{BF2024}.  Here and throughout, $0.401$ refers to the best known
constructive offline guarantee under the corresponding meta-solvability
assumptions of \cite{BF2024}; our comparison concerns the approximation
coefficient, not identical algorithms or oracle assumptions.  Comparable
adversarial online results, however, had remained at $1/e$.  Starting with
Thang and Srivastav, later post-decision value-oracle, one-shot,
projection-free, and bandit methods improved regret and oracle complexity
without improving the leading $1/e$ coefficient
\cite{ThangSrivastav2021,ZhangEtAl2023,PedramfarEtAl2024,Lu2026}.  Lu
et al.\ explicitly identified breaking this barrier as an open problem
\cite{Lu2026}.

We ask whether the best known constructive offline factor can survive
adversarial temporal variation when the action must be selected before the
current objective is available.  Our answer is yes:
\begin{equation}
\boxed{\text{online approximation factor }=0.401}\,.
\label{eq:headline-factor}
\end{equation}
The guarantee has sublinear approximate regret and therefore matches the
best known constructive offline approximation factor of $0.401$.  We do not
claim that $0.401$ is optimal.  A related value-oracle hardness benchmark is
$0.478$: multilinear-extension maximization over down-closed polytopes is
inapproximable beyond this value already for a partition matroid polytope,
and the same threshold holds under a
cardinality constraint \cite{OveisGharanVondrak2011,Qi2024}.  Thus $0.401$ is
the current constructive offline benchmark, while the known hardness
threshold is higher; our result shows that this constructive benchmark can
also be attained online in the present value-oracle model.

\paragraph{Motivating settings.}
The ingredients of our model occur across several established application
classes.  Continuous submodular objectives have been used for influence and
revenue maximization with continuous assignments, sensor-energy management,
and facility location \cite{BianMirzasoleiman2017}; robust continuous budget
allocation gives a closely related influence-maximization model
\cite{StaibJegelka2017}.  Non-monotone DR-submodular objectives arise in MAP
inference for determinantal point processes and in provable mean-field
inference for probabilistic submodular models \cite{Bian2017,Bian2019}.
Online continuous submodular optimization was initiated by Chen, Hassani,
and Karbasi \cite{ChenHassaniKarbasi2018}; time-varying DR-submodular welfare
also appears in online resource allocation and shared-mobility rebalancing
\cite{SessaEtAl2021}, while bandit DR-submodular methods have been used to
derive adversarial submodular-bandit guarantees \cite{WanEtAl2023}.  These
papers motivate the individual modeling ingredients; they do not all impose
our exact combination of non-monotonicity, down-closed feasibility, and
adversarial value feedback.

Table~\ref{tab:offline-online-factor} isolates the approximation-factor
comparison that motivates the paper.  Table~\ref{tab:value-comparison} then
records only online value-feedback guarantees; it does not mix the offline
progression into the online regret comparison.  We show the $F$-oracle rows
reported by Lu et al.~\cite{Lu2026}, rather than first-order gradient-feedback
results.

\begin{table}[H]
\centering
\caption{The offline benchmark and the headline online result.  The offline
row uses the meta-solvability assumptions of \cite{BF2024}; the online row
has sublinear approximate regret.  The comparison is between approximation
coefficients, not identical oracle models.}
\label{tab:offline-online-factor}
\small
\renewcommand{\arraystretch}{1.12}
\begin{tabular}{@{}lll@{}}
\toprule
Role & Reference & Factor \\
\midrule
Offline achievable & Buchbinder--Feldman \cite{BF2024} & $0.401$ \\
Adversarial online achievable & \textbf{This work} & $\mathbf{0.401}$ \\
Related multilinear hardness & Oveis Gharan--Vondr\'ak; Qi
\cite{OveisGharanVondrak2011,Qi2024} & $0.478$ \\
\bottomrule
\end{tabular}
\end{table}

\begin{table}[H]
\centering
\caption{Online value-feedback guarantees for non-monotone DR-submodular
maximization over down-closed convex sets.  Problem-dependent constants and
lower-order terms are suppressed.}
\label{tab:value-comparison}
\small
\renewcommand{\arraystretch}{1.12}
\begin{tabular}{@{}>{\raggedright\arraybackslash}p{0.21\textwidth}>{\raggedright\arraybackslash}p{0.25\textwidth}ccc@{}}
\toprule
Feedback & Reference & Approx. $\alpha$ & Calls/round & Static $\alpha$-regret \\
\midrule
PD value-oracle
& Pedramfar et al.~\cite{PedramfarEtAl2024}
& $1/e$ & $1$ & $O(T^{4/5})$ \\
PD value-oracle
& Lu et al.~\cite{Lu2026}
& $1/e$ & $1$ & $O(T^{3/4})$ \\
\midrule
PD value-oracle
& \textbf{Ours} (tradeoff)
& $\mathbf{0.401}$ & $O(T^\delta)$ & $O(T^{4/5-\delta/5})$ \\
PD value-oracle
& \textbf{Ours} ($\delta=0$)
& $\mathbf{0.401}$ & $1$ & $O(T^{4/5})$ \\
PD value-oracle
& \textbf{Ours} ($\delta=1/4$)
& $\mathbf{0.401}$ & $O(T^{1/4})$ & $O(T^{3/4})$ \\
\midrule
One-point bandit
& Zhang et al.~\cite{ZhangEtAl2023}
& $1/e$ & $1$ & $O(T^{8/9})$ \\
One-point bandit
& Pedramfar et al.~\cite{PedramfarEtAl2024}
& $1/e$ & $1$ & $O(T^{5/6})$ \\
One-point bandit
& Lu et al.~\cite{Lu2026}
& $1/e$ & $1$ & $O(T^{4/5})$ \\
\midrule
One-point bandit
& \textbf{Ours}
& $\mathbf{0.401}$ & $1$ & $O(T^{5/6})$ \\
\bottomrule
\end{tabular}
\vspace{1mm}

\parbox{0.96\textwidth}{\footnotesize
``PD value-oracle'' denotes post-decision full-information value-oracle
feedback:
after committing to the played action,
the learner may call the oracle at points different from the played action; in
the bandit model the single observation is the noisy value of the played
action, whose true value is the reward earned.  More generally, for every
$\delta\in[0,1/4]$, our batched value-oracle result uses $O(T^\delta)$ calls
per physical round and has $O(T^{4/5-\delta/5})$ regret.  The endpoint
$\delta=0$ uses one call, whereas $\delta=1/4$ gives the best regret rate.
Dimension and regularity factors are absorbed into the constants in this
table, and the ``Approx.\ $\alpha$'' column is the coefficient of the
benchmark in \eqref{eq:regret}, an asymptotic quantity in the sense of
Remark~\ref{rem:asymptotic-factor}, not a finite-horizon approximation
ratio.  Our bandit row additionally assumes the positive-anchor condition
stated in Section~\ref{sec:bandit}.  Our rows allow a conditionally unbiased
oracle with bounded error $\sigma\ge0$ and assume an oblivious adversary.}
\end{table}

The approximation factor is the leading-order quantity in approximate
regret.  If
\[
\operatorname{OPT}_T
:=
\max_{o\in K}\sum_{t=1}^T f_t(o)
=\Theta(T),
\]
then replacing $1/e$ by $0.401$ improves the guaranteed reward by
approximately $0.0331\,\operatorname{OPT}_T$, a linear-in-$T$ term.  By
contrast, every regret term in Table~\ref{tab:value-comparison} is $o(T)$
for fixed problem parameters.
Thus the factor and regret rate are not interchangeable: the approximation
factor determines the asymptotic fraction of the benchmark, while regret
determines how quickly the algorithm approaches that fraction.  In this
precise sense, crossing the approximation barrier is the more fundamental
advance, even though finite-horizon performance still depends on the regret
exponent, dimension, oracle budget, and hidden constants.

\paragraph{Why online is harder than offline.}
The standard $1/e$ analysis is unusually compatible with online learning.
Measured continuous greedy produces a local linear certificate, and an
online linear-optimization algorithm can control its comparator-dependent
linear term after summation over rounds.  The stronger offline construction
uses information in a qualitatively different way.  It works with one fixed
objective that is available throughout the computation, and its local-search
and Box-Maximization queries may adapt repeatedly to that same objective.
Given $f$ and an outer point $x$, it solves an asymmetric box problem whose
answer depends on the complete objective.  That answer supplies two nonlinear
values that cancel the negative terms in the delayed-trajectory certificate.

This corrective step is unavailable online.  At round $t$, the outer state
$x_t$, the box decision, and the played action must all be selected before
any feedback from $f_t$ is available.  Choosing the offline box solution
after observing $f_t$ would violate the protocol, while using the solution
from a previous round is useless against an adversarially changing sequence.
Moreover, a pointwise linearization of the current played point cannot
retain the two asymmetric value terms that create the improvement beyond
$1/e$.  This is why the offline proof does not admit a black-box online
conversion, even under post-decision full-information value-oracle feedback:
the extra queries can update future states, but they cannot retroactively change the
current action.

In particular, the offline construction underlying the $0.401$ guarantee
cannot simply be inserted into
an online wrapper.  Its Box-Maximization step is anticipatory from the online
viewpoint: it must inspect the current function before selecting the
corrective point.  Running that step after round $t$ produces an action too
late to earn reward from $f_t$, while running it on past or averaged
objectives does not control the current reward under adversarial variation.
Consequently, the existing offline guarantee does not by itself yield an
online guarantee through follow-the-leader, delayed execution, or a
standard offline-to-online black-box reduction.

The online construction therefore does not run the offline algorithm on a
changing objective.  It transfers the delayed-trajectory inequalities and
the asymmetric coefficient requirements, then replaces the unavailable
per-round box optimum by an independent online unconstrained-maximization
state.  Its certificate is cumulative: it pays the delayed-trajectory debts
after summation over the sequence, rather than eliminating them pointwise.
This amortization is the conceptual reason the offline coefficient can be
preserved under changing objectives.

\paragraph{The online composition.}
The central technical contribution is the chain
\begin{equation}
\boxed{
f_t
\;\longrightarrow\;
G_t(a)=f_t(x_t\odot a)
\;\longrightarrow\;
\text{weighted online USM}
\;\longrightarrow\;
\text{outer delayed trajectory}.
}
\label{eq:intro-composition}
\end{equation}
Here USM denotes unconstrained submodular maximization.  The structural
inequality exposes the debts that the transformed objectives $G_t$ must
control; the weighted online USM learner realizes exactly the asymmetric
coefficients needed to pay those debts cumulatively; and the outer online
learner absorbs the remaining linear term.  A randomized choice between the
inner and outer candidates then produces the reward guarantee without using
concavity along an invalid direction.

\paragraph{Online timing.}
The adversary fixes $f_1,\ldots,f_T$ before play begins.  At the start of
round $t$, the history determines the outer state $x_t$ and the states of all
weighted online USM learners.  Using only those states and fresh internal
randomness, the inner learner selects $a_t$; the algorithm then forms
$x_t\odot a_t$ and $Y_1(x_t)$, draws its mixture coin, and commits to the
played action $p_t$.  Only after this commitment does $f_t$ become available,
and then only through calls to a noisy oracle.  Those observations update
the inner learners and construct the field used to form $x_{t+1}$.  Thus
$x_t,a_t,p_t$ are all selected without current-round feedback, while $f_t$
is used only for future decisions.  Placing every oracle call strictly after
every decision of the round is also what makes the noise analysis go through:
each estimate is conditionally unbiased given the entire round's decisions,
which is the hypothesis of Lemma~\ref{lem:noisy-balance}.

The outer ingredient is a comparator-uniform specialization of the delayed
continuous-greedy comparison developed by Buchbinder and Feldman
\cite{BF2024}; Appendix~\ref{app:endpoint} supplies a self-contained proof of
the precise form used here.  For an outer state $x\in K$, a delay parameter
$s$, and the feasible endpoint $Y_1(x)$, the inequality has the schematic
form
\begin{equation}
f(Y_1(x))
\ge
\theta_s f(o)-\chi_s f(x\odot o)-2\zeta_s f(x)
-\ip{\widetilde q_s(f,x)}{o-x}.
\label{eq:intro-structural}
\end{equation}
Here $o\in K$ is an arbitrary fixed comparator, $\theta_s,\chi_s,\zeta_s$ are
explicit scalar functions of the delay, $\odot$ denotes coordinatewise
multiplication, and $\widetilde q_s(f,x)$ is a path-integrated gradient
field.  The first term contains the desired comparator value.  The next two
terms are the \emph{structural debts} that must be repaid, and the final term
is linear in $o-x$.  The outer learner performs projected online gradient
ascent with $\widetilde q_s$.  Consequently, the last term in
\eqref{eq:intro-structural} contributes only ordinary online
linear-optimization regret.

The nonlinear debts require a different mechanism.  Before receiving any
current-function feedback, an inner learner chooses $a_t\in[0,1]^d$ and,
after the action is committed, obtains the value-oracle evaluations needed
for the transformed objective
\begin{equation}
G_t(a)=f_t(x_t\odot a).
\label{eq:intro-G}
\end{equation}
This transformation is geometrically exact: $x_t\odot a\le x_t$, so
down-closedness guarantees that every inner action maps to a feasible point.
It also exposes precisely the values appearing in the structural inequality:
\begin{equation}
G_t(o)=f_t(x_t\odot o),\qquad
G_t(\one)=f_t(x_t),\qquad
G_t(\zero)=f_t(\zero),
\label{eq:intro-special}
\end{equation}
where $\one$ and $\zero$ are the all-ones and all-zeros vectors, respectively,
and nonnegativity makes $G_t(\zero)$ harmless.  We prove a weighted online
USM guarantee that simultaneously lower-bounds the cumulative inner reward
by weighted sums of these three values.  Combining this guarantee with
\eqref{eq:intro-structural}, and randomly choosing between $Y_1(x_t)$ and
$x_t\odot a_t$, cancels both structural debts.  Optimizing the delay,
asymmetry, and mixture weights yields the certified factor $0.401$.

The central new technical result behind the inner guarantee is an exact
weighted online balance theorem.  We discretize each coordinate into $m$
levels, lift the box objective to a submodular set function on $dm$ elements,
and run an online Double-Greedy decision for each lifted element.  The two
Double-Greedy marginals enter with unequal coefficients, so the usual
symmetric balance theorem is insufficient.  For arbitrary positive weights
$c_X,c_Y$, we characterize the largest feasible balance constant as
\begin{equation}
a_{\max}(c_X,c_Y)=2\sqrt{c_Xc_Y}.
\label{eq:intro-balance}
\end{equation}
The proof uses the support function of the Blackwell target set and supplies
an explicit polynomial-time approachability strategy.  This distinction is
important: a response that balances one adversarial marginal pair need not
approach the target set over a sequence.  The support-function calculation
establishes the required uniform halfspace condition and also proves
optimality of the constant.  Under the Buchbinder--Feldman normalization
$(c_X,c_Y)=(1/r,r)$, the geometric mean is one, so
\eqref{eq:intro-balance} retains the full constant $2$ for every asymmetry
parameter $r>0$.  Thus the offline asymmetric coefficients survive online;
only additive balance and discretization errors remain.

This is also why the online and offline factor-revealing programs agree at
the certified delay.  In Section~\ref{sec:agreement} we show that both
programs equalize at the same $r_s=4\zeta_s/\chi_s$ and, whenever
$r_s\ge2$, are optimized over the asymmetry at that common switch.  The two
constructions repay the $f(x\oplus o)$ debt by different mechanisms: an
approximate local maximum offline and a lattice inequality whose linear
residue is absorbed by the outer online learner here.  At the certified delay
and balanced asymmetry, the two mechanisms nevertheless have the same value.

The primary feedback theorem gives a complete post-decision full-information
value-oracle realization of this composition.  All Double-Greedy marginals
are estimated from
$O(dm)$ calls to the current function.  The outer field is an integral of
gradients along the delayed trajectory; adapting standard black-box
finite-difference ideas for continuous submodular optimization
\cite{Chen2020}, midpoint quadrature and inward one-sided differences
reconstruct it, at a field accuracy $\varepsilon$, using $O(d/\varepsilon)$
additional calls with $O((B+L)\varepsilon\sqrt d)$ bias, where $B$ bounds the
gradients and $L$ their Lipschitz constant.  Crucially, the
complete query list is
nonadaptive once the pre-round outer and inner decisions have been made.
This preserves the online timing requirement.  Directly, the method uses
$O(d(m+\varepsilon^{-1}))$ calls per round; with
$m=\Theta(T^{1/4})$ and $\varepsilon=\Theta(T^{-1/4})$, it has
$O(T^{3/4})$ regret and $O(dT^{1/4})$ calls per round.

\paragraph{Noisy feedback throughout.}
We do not assume that the oracle answers exactly.  Each call returns a value
that is conditionally unbiased given everything decided before it and differs
from the truth by at most a known $\sigma\ge0$, with independent errors on
distinct calls; the exact model is the case $\sigma=0$.  We call this
\emph{bounded conditionally unbiased noise}.  The analysis requires three
adjustments.  Values are recomputed rather than cached across occurrences;
the outer field estimate is left unclipped and its step size is calibrated to
a second moment; and the outer regret is split so that only estimator bias is
charged at rate $T$, while random fluctuation is charged at rate $\sqrt T$.
Consequently, the
noise level enters the guarantees only through the observation scale
$M_\sigma=M+\sigma$, where $M$ bounds the reward values, and one lower-order
term.  For every fixed $\sigma$, the approximation factor, regret exponents,
and per-round call budgets are unchanged.

As secondary consequences, the nonadaptive query list can be spread over a
block, giving $O(T^\delta)$ calls per round and
$O(T^{4/5-\delta/5})$ regret for every $\delta\in[0,1/4]$.  The same
nonadaptivity also enables a one-point bandit conversion by randomized block
injection, which under the positive-anchor condition gives $O(T^{5/6})$
regret.  These extensions alter the additive terms and oracle budget, not the
$0.401$ approximation factor.

\paragraph{What is inherited and what is new.}
The outer comparison inequalities are specializations of Lemmas~5.7 and~5.9
of Buchbinder and Feldman \cite{BF2024}; Appendix~\ref{app:endpoint} proves
the specialized, comparator-uniform statements from first principles.  The
asymmetric coefficient triple $(u_r,v_r,w_r)$ is likewise inherited from the
offline unbalanced Double-Greedy analyses of Mualem and Feldman and of
Buchbinder and Feldman \cite{MualemFeldman2022,BF2024}.  On the online side,
our sequential binary reduction follows the online Double-Greedy template of
Roughgarden and Wang, whose symmetric balance subproblem can be implemented
through Blackwell approachability \cite{Blackwell1956,RW2018}.  The outer
update is projected online gradient ascent in the sense of Zinkevich
\cite{Zinkevich2003}, and the value reconstruction builds on the black-box
finite-difference literature \cite{Chen2020}.

The new statements proved here are the exact weighted balance constant
$2\sqrt{c_Xc_Y}$ and its explicit approachability strategy, including the
estimated-marginal guarantee needed for noisy feedback; the cumulative
asymmetric online box certificate that retains both endpoint terms; and the
amortized composition coupling this certificate to the delayed outer
trajectory.  We also prove that the resulting factor-revealing program has
the same asymmetry-optimized value as the offline program at the certified
delay.  Finally, we use randomized rather than convex mixing of the two
candidate actions and exploit query-list nonadaptivity through a uniform
injection for limited-query and bandit feedback.  The contribution is the
weighted online realization and its composition, not a re-derivation of the
offline coefficient frontier or the standard online-learning primitives.

\paragraph{Contributions.}
Our results are as follows.

\begin{enumerate}[leftmargin=22pt]
\item We prove an adversarial online approximation factor of $0.401$ with
      sublinear regret, matching the best known constructive offline factor
      of Buchbinder and Feldman \cite{BF2024}.  We make no optimality claim;
      a related multilinear-extension value-oracle hardness benchmark is
      $0.478$ \cite{OveisGharanVondrak2011,Qi2024}.
\item We introduce the online composition
      $f_t\mapsto G_t(a)=f_t(x_t\odot a)\mapsto$ weighted online USM
      $\mapsto$ outer delayed trajectory.  Its cumulative box certificate
      amortizes the comparator-dependent debts that cannot be removed
      pointwise.  Proposition~\ref{prop:agreement} gives the explicit algebra
      showing that, at the certified delay, the online and offline coefficient
      programs agree after optimizing the asymmetry parameter.  We stress
      that this is an equality of two \emph{numbers}, namely the optimized
      coefficients, and not a correspondence between the two algorithms:
      offline, the $f(x\oplus o)$ debt is discharged by an approximate local
      maximum computed from the current objective; here it is discharged by a
      lattice inequality whose linear residue the outer learner absorbs over
      time.
\item We formulate the \emph{weighted} balance game, in which the two
      Double-Greedy decisions earn at unequal rates, and solve it exactly:
      the largest approachable constant is
      $a_{\max}(c_X,c_Y)=2\sqrt{c_Xc_Y}$, with a matching impossibility
      direction and an explicit polynomial-time Blackwell strategy of
      $O(\sqrt T)$ balance regret.  New here are the weighted game, the exact
      constant in both directions, the uniform support-function argument, and
      the strategy; the coefficient triple $(u_r,v_r,w_r)$ that the constant
      then yields at $(c_X,c_Y)=(1/r,r)$ is the offline frontier of
      \cite{MualemFeldman2022,BF2024}, reached here through a count lifting
      and a cumulative online guarantee rather than reproved
      (Remark~\ref{rem:balance-novelty}).
\item We implement the composition in the post-decision full-information
      value-oracle model with $O(T^{3/4})$ regret and
      $O(dT^{1/4})$ oracle calls per round.  Secondary extensions give the
      limited-query frontier and an $O(T^{5/6})$ one-point bandit rate under
      the positive-anchor condition.  These guarantees hold under
      conditionally unbiased bounded noisy value feedback.
\end{enumerate}

Together, these results isolate the role of feedback.  Neither adversarial
variation nor observation noise nor the considered one-point bandit model
under the additional positive-anchor condition forces the approximation
factor back to $1/e$; weaker or noisier feedback instead appears through the
additive learning rate and the number of rounds needed to simulate the
structural certificate.

\section{Related work}
\label{sec:related}

\paragraph{Offline DR-submodular maximization.}
Continuous DR-submodular maximization extends multilinear-extension methods
for discrete submodular maximization.  Measured continuous greedy and its
continuous variants established $1/e$ as a basic guarantee for nonnegative
non-monotone objectives over down-closed convex bodies
\cite{FeldmanNaorSchwartz2011,Bian2017}.  Subsequent nonsymmetric and aided
constructions improved the guarantee through approximately $0.372$ and
$0.385$ \cite{EneNguyen2016,BuchbinderFeldman2019}.  Buchbinder and Feldman
introduced a sharper DR-submodular inequality and an asymmetric
Box-Maximization component, obtaining the $0.401$ benchmark used in this
paper \cite{BF2024}.  Jadav, Singh, and Aggarwal subsequently extended this
framework to non-monotone $\gamma$-weakly DR-submodular objectives, recovering
the same $0.401$ factor when $\gamma=1$ \cite{JadavSinghAggarwal2026}.  For
the related multilinear-extension value-oracle problem, Oveis Gharan and
Vondr\'ak proved that no polynomial-query algorithm can guarantee better
than $0.478$, even for a partition matroid polytope, and Qi showed that the
same bound applies under a cardinality constraint
\cite{OveisGharanVondrak2011,Qi2024}.  Continuous
Double-Greedy gives the optimal $1/2$ guarantee on the unconstrained box,
matching the hardness of \cite{FeigeMirrokniVondrak2011}, and motivates the
inner primitive in our construction
\cite{BuchbinderEtAl2015,Bian2019,Niazadeh2018}.  The unbalanced form of the
Double-Greedy guarantee that we reproduce online, namely the
$(u_r,v_r,w_r)$ frontier, was isolated by Mualem and Feldman
\cite{MualemFeldman2022} and, for DR-submodular objectives, is Theorem~2.5
and Corollary~2.6 of \cite{BF2024}.

\paragraph{Online submodular maximization.}
Chen, Hassani, and Karbasi initiated online continuous submodular
maximization for monotone objectives \cite{ChenHassaniKarbasi2018}, and
Sessa et al.\ studied time-varying DR-submodular resource allocation motivated by
shared mobility \cite{SessaEtAl2021}.  For set functions, Roughgarden and Wang
gave an optimal no-regret reduction for online unconstrained submodular
maximization based on Blackwell approachability
\cite{Blackwell1956,RW2018}.  We require a weighted, asymmetric form of their
balance game and determine its exact constant.  For continuous non-monotone
DR-submodular objectives, Thang and Srivastav gave early approximate-regret
guarantees over down-closed and general convex sets
\cite{ThangSrivastav2021}.  Mualem and Feldman studied the distinct geometry
of general, not necessarily down-closed, convex bodies
\cite{MualemFeldman2023}.  More recent linearization frameworks organize
first-order, adaptive, and nonstationary guarantees for broad classes of
DR-submodular problems
\cite{PA2024,pedramfar2026uniform,Lu2026,pedramfar2026gamma}.  Those works
reduce a structural inequality to online linear optimization; our focus is
instead the asymmetric box layer needed to reach the $0.401$ down-closed
benchmark.

\paragraph{Value-query and bandit feedback.}
Black-box continuous submodular optimization uses smoothing or finite
differences to replace gradients by value queries \cite{Chen2020}.
For monotone multilinear DR-submodular rewards, Wan et al.\ developed bandit
algorithms and reductions to adversarial submodular bandits
\cite{WanEtAl2023}.
In the online non-monotone setting, Zhang et al.\ obtained a $1/e$ guarantee
with $O(\sqrt T)$ regret using many stochastic gradients, a one-shot
$O(T^{4/5})$ first-order method, and a one-point bandit method with
$O(T^{8/9})$ regret \cite{ZhangEtAl2023}.  Pedramfar et al.\ developed unified
projection-free algorithms and obtained, for value feedback, $O(T^{4/5})$
post-decision value-oracle regret and $O(T^{5/6})$ bandit regret
\cite{PedramfarEtAl2024}.  Pedramfar and Aggarwal subsequently gave general
feedback conversions through upper-linearizability \cite{PA2024}.  Lu
et al.\ report value-feedback rates of $O(T^{3/4})$ under post-decision
value-oracle queries and $O(T^{4/5})$ under bandit feedback, while retaining
factor $1/e$ \cite{Lu2026}.  They explicitly left breaking the
$1/e$ online approximation barrier as an open problem.  Our result resolves
that question under the feedback models studied here.  Its bandit
$O(T^{5/6})$ additive term is larger than the preceding $1/e$ rate, but it
preserves the strictly stronger $0.401$ approximation factor.  The argument
is tailored to the present value-oracle construction and uses the
nonadaptivity of the within-round query list to estimate an entire
block-average objective from one observation per physical round.

\section{Problem formulation}
\label{sec:model}

Let $K\subseteq[0,1]^d$ be nonempty, compact, convex, and down-closed:
if $x\in K$ and $0\le y\le x$, then $y\in K$.  Since $K$ is nonempty and
down-closed, $\zero\in K$.  Write $D=\diam(K)$, so that $D\le\sqrt d$.
For a positive integer $n$, write $[n]=\{1,\ldots,n\}$, and let $e_i$ denote the $i$th
standard basis vector of $\R^d$.  For vectors $x,o\in[0,1]^d$, define
\[
(x\odot o)_i=x_io_i,
\qquad
(x\oplus o)_i=x_i+o_i-x_io_i.
\]
Inequalities between vectors are coordinatewise, and a vector in an exponent
is understood coordinatewise.

A differentiable function $f:[0,1]^d\to\R$ is DR-submodular if
\[
x\le y
\quad\Longrightarrow\quad
\partial_i f(x)\ge\partial_i f(y)
\]
for every coordinate $i$; equivalently, $\nabla f$ is an antitone map.  It is
$L$-smooth if its gradient is $L$-Lipschitz in Euclidean norm.  Throughout,
$T,d\ge1$, $M>0$, and $B,L\ge0$.  An oblivious adversary fixes
differentiable, nonnegative, $L$-smooth
DR-submodular rewards $f_1,\ldots,f_T:[0,1]^d\to[0,M]$ satisfying
\[
\sup_{t,x}\norm{\nabla f_t(x)}_2\le B.
\]
The bounds $D,M,B,L$, and the noise level $\sigma$ introduced below, are
known to the learner for parameter tuning.
For a nonempty closed convex set $C$, let $\Pi_C$ denote Euclidean projection
onto $C$.  The learner has a polynomial-time oracle for $\Pi_K$.

At round $t$, the learner chooses a played point $p_t\in K$ measurable with
respect to the history $\mathcal H_{t-1}$ and its fresh randomization, where
$\mathcal H_t$ is the sigma-field generated by the learner's actions,
randomization, and observations through round $t$.  The choice is committed
before any information about $f_t$ is received.  The reward credited on round
$t$ is the true value $f_t(p_t)$; what the learner \emph{observes} about that
function is produced by the stochastic oracle described next.

\paragraph{Noisy value oracle.}
All feedback in this paper comes from a single noisy value oracle.  We
introduce it here, rather than as a later robustness extension, because every
result below is proved directly in this model.  A \emph{call} to the oracle
specifies a point $u\in[0,1]^d$ and returns a scalar $\widehat f_t(u)$.  Let
$\mathcal G$ be the sigma-field generated by everything determined before the
call: the history $\mathcal H_{t-1}$, the obliviously fixed current function
$f_t$, all randomization used to select the round's actions and its query
list, and all previous calls.  For a known noise level $\sigma\ge0$ we assume
\begin{equation}
\E\bigl[\widehat f_t(u)\mid\mathcal G\bigr]=f_t(u),
\qquad
\bigl|\widehat f_t(u)-f_t(u)\bigr|\le\sigma
\quad\text{almost surely,}
\label{eq:noise-model}
\end{equation}
and that distinct calls carry conditionally independent errors, \emph{even
when they specify the same point}.  We refer to \eqref{eq:noise-model}
throughout as \emph{bounded conditionally unbiased noise}, and to the oracle
as a \emph{noisy value oracle}; $\sigma$ is called the noise level and bounds
the error in magnitude, so that the conditional variance of a single call is
at most $\sigma^2$.  Both quantities occur below: $\sigma$ wherever a
range is needed and $\sigma^2$ wherever a second moment is.  No other
notion of noise appears in the paper.  Conditional independence is what makes
repeated sampling informative; in exchange, no algorithm below may cache a
value across two of its occurrences, so every occurrence of a value inside a
marginal or a finite difference is charged as a separate call.  Observed
values lie in $[-\sigma,M+\sigma]$, and we write
\begin{equation}
M_\sigma:=M+\sigma
\label{eq:Msigma}
\end{equation}
for this observation scale.  The exact-feedback model is the special case
$\sigma=0$, in which case $M_\sigma=M$ and every guarantee below reduces to
its noiseless form; no separate exact-oracle statement is needed.

Within this oracle we consider two feedback models, which differ in
\emph{where} calls may be placed, not in how they are corrupted.

\begin{enumerate}[leftmargin=20pt]
\item \emph{Post-decision full-information value-oracle feedback.}  After
      playing $p_t$, the learner may call the oracle at polynomially many
      points of $[0,1]^d$.  The number of returned scalar values is part of
      the guarantee; the algorithm does not require a separate observation at
      $p_t$.
\item \emph{One-point bandit feedback.}  The learner receives exactly one
      scalar per round, the noisy value $\widehat f_t(p_t)$ of the point it
      actually played.  Every value used in an update must therefore be
      obtained as the observed reward of a feasible played point, and the
      learner never sees the noiseless reward it earns.
\end{enumerate}

\begin{remark}[Which parts of \eqref{eq:noise-model} are needed]
\label{rem:noise-scope}
Condition \eqref{eq:noise-model} states two requirements: conditional
unbiasedness, $\E[\widehat f_t(u)\mid\mathcal G]=f_t(u)$, and the almost-sure
error bound $|\widehat f_t(u)-f_t(u)|\le\sigma$.  The two are used
differently.  In the outer layer the error enters only through a conditional
second moment, so there the almost-sure bound may be replaced by a
conditional variance bound $\sigma^2$ or a sub-Gaussian parameter $\sigma$
without changing any exponent.  The inner layer is more demanding: Lemma~\ref{lem:noisy-balance}
needs the estimated payoff vectors to lie in a bounded set, both for the
Blackwell approachability rate and for the martingale bound that transfers
its conclusion to the true payoffs.  Almost-sure boundedness is therefore
assumed globally.  The frequently used
assumption that the observed value itself lies in $[0,M]$ is the special case
$\sigma\le M$, for which $M_\sigma\le2M$ and all our bounds coincide with
their exact-feedback form up to absolute constants.  Conditional
unbiasedness, by contrast, cannot be weakened for free: a systematic bias
$\beta$ propagates undamped through
the Double-Greedy marginals and is divided by the finite-difference step in
the outer field, contributing terms of order $dm\beta T$ and
$D\beta\sqrt d\,T/h$ that no amount of repetition removes.  At the balanced
parameters of Corollary~\ref{cor:zo-balanced} the second of these is of order
$\beta T^{5/4}$, so a bias must already vanish as fast as $T^{-1/2}$ merely
to stay within our $T^{3/4}$ rate, and any slower decay dominates it.
\end{remark}

For $\alpha\in(0,1]$, define approximate static regret
\begin{equation}
\Reg_\alpha(T)
=
\alpha\max_{o\in K}\sum_{t=1}^T f_t(o)
-
\E\left[\sum_{t=1}^T f_t(p_t)\right].
\label{eq:regret}
\end{equation}
The expectation is over all learner randomization.  We seek a constant
$\alpha$ and regret $o(T)$.  The horizon $T$ is assumed known when setting
the algorithmic parameters; a standard doubling schedule removes this
assumption at the cost of a constant factor (Remark~\ref{rem:doubling}).

\begin{remark}[The factor is asymptotic]
\label{rem:asymptotic-factor}
Throughout, ``approximation factor $\alpha$'' means the coefficient of the
benchmark in \eqref{eq:regret}, and nothing more.  A guarantee
$\Reg_{\alpha}(T)\le R(T)$ says that the ratio actually delivered at horizon
$T$ is at least
\[
\alpha-\frac{R(T)}{\operatorname{OPT}_T},
\qquad
\operatorname{OPT}_T=\max_{o\in K}\sum_{t=1}^Tf_t(o),
\]
which is a weaker statement than ``an $\alpha$-approximation'' and can be
vacuous at small $T$.  With $\operatorname{OPT}_T=\Theta(T)$, the deficit is
$O(T^{-1/4})$ for the direct algorithm and $O(T^{-1/6})$ for the
bandit algorithm, each multiplied by the problem-dependent constants of
Corollaries~\ref{cor:zo-balanced} and~\ref{cor:bandit-rate}; and if
$\operatorname{OPT}_T$ is itself sublinear, no constant factor is guaranteed
at all.  Our claim to match the offline factor is a claim about the
coefficient $\alpha^\star$ in the limit, not about finite-horizon
performance, and the reader should read every ``factor $0.401$'' in this
paper, including the rows of Table~\ref{tab:value-comparison}, in that sense.
\end{remark}

\paragraph{Notation.}
Because the construction has an outer and an inner layer, several families of
symbols appear together.  We fix them once here.

\begin{center}
\small
\renewcommand{\arraystretch}{1.15}
\begin{tabular}{@{}ll@{}}
\toprule
Symbol & Meaning \\
\midrule
$d$, $K$, $D$, $M$, $B$, $L$ & dimension, feasible set, diameter, value bound, gradient bound, smoothness \\
$s$ & delay parameter of the trajectory; fixed to $s_0$ in the theorems \\
$Y_\tau(x)$, $\omega_\tau(x)$ & delayed trajectory from $x$ and its multiplier; endpoint $Y_1(x)$ \\
$\theta_s,\mu_s,\nu_s,\kappa_s$ & endpoint coefficients of $f(o)$, $f(x\odot o)$, $f(x\oplus o)$ \\
$\zeta_s=\nu_s-\kappa_s$, $\chi_s=\mu_s-\zeta_s$ & two-coefficient form of the same inequality \\
$q_s,\widetilde q_s$, $\Gamma_s$ & path-integrated field, its modification, and the bound $\norm{\widetilde q_s}_2\le\Gamma_s$ \\
$\sigma$, $M_\sigma=M+\sigma$ & oracle noise level \eqref{eq:noise-model} and observation scale \eqref{eq:Msigma} \\
$\widehat q_t$, $\Delta_{h,Q}$ & stochastic field estimate and its bias bound \\
$V_{h,Q}$, $V^{\rm b}_{h,Q}$ & second-moment proxies for the direct and the block-simulated field estimate \\
$\bar M$ & generic payoff scale in the balance lemmas; $\bar M=2M_\sigma$ in every application \\
$r$, $(c_X,c_Y)=(1/r,r)$ & asymmetry parameter and balance weights \\
$u_r,v_r,w_r$ & box coefficients of $G(o)$, $G(\one)$, $G(\zero)$ \\
$a$, $\lambda$ & balance constant; probability of playing the trajectory endpoint \\
$\alpha(s,r)$, $\alpha^\star$ & factor-revealing expression \eqref{eq:alpha-sr} and the headline constant $0.401$ \\
$g^+_t,g^-_t$ & the two adversarial marginals of the balance game \\
$m$, $\phi_m$, $V_m$ & grid resolution, count lifting, lifted ground set \\
$\varepsilon$, $h$, $Q$ & field accuracy, finite-difference step, number of quadrature nodes \\
$H$, $N$, $I_b$, $\overline f_b$ & block length, number of blocks, $b$th block, block-average objective \\
$\gamma$, $\bar x$, $\rho$ & contraction parameter, positive anchor, anchor margin \\
$\delta$ & query-budget exponent in the limited-query frontier \\
\bottomrule
\end{tabular}
\end{center}

\section{Post-decision full-information value-oracle result}
\label{sec:zeroth-order}

All statements in this section are for the noisy oracle
\eqref{eq:noise-model} at an arbitrary level $\sigma\ge0$; exact feedback is
the case $\sigma=0$.

\paragraph{Main guarantees.}

Throughout the remainder of the paper, set
\begin{equation}
\alpha^\star:=0.401.
\label{eq:alpha-defined}
\end{equation}
We state all theorem-level guarantees at this certified rounded factor; the
explicit parameter certificate in Proposition~\ref{prop:optimized} yields a
slightly larger numerical coefficient.
The construction combines the delayed endpoint certificate of
\cite{BF2024}, the asymmetric offline box coefficients of
\cite{MualemFeldman2022,BF2024}, an online Double-Greedy reduction modeled on
\cite{RW2018}, and projected stochastic online gradient ascent.  The theorem
below is the guarantee of this composition under the noisy value oracle
\eqref{eq:noise-model}.  Since the structural parameters are fixed numerical
constants, all constants hidden by $O(\cdot)$ are absolute.

Two places in the algorithm are shaped by the noise and are worth flagging in
advance.  First, each lifted element's pair of Double-Greedy marginals is
formed from four fresh oracle calls, two per marginal, so the pair is an
unbiased estimate of the true pair on a scale $O(M_\sigma)$; the balance
learners are then driven by these estimates through
Lemma~\ref{lem:noisy-balance}.  Second, the outer field estimate is
\emph{not} clipped to the ball of radius $\Gamma_s$, because clipping a noisy
estimate would destroy its conditional unbiasedness; instead the outer step
size is set from the second-moment proxy $V_{h,Q}$ of
Lemma~\ref{lem:value-field}, which reduces to the deterministic choice when
$\sigma=0$.  Since $C_2$ and $C_3$ in that lemma are absolute
constants, $V_{h,Q}$ is computable from $B,L,\sigma,d,Q,h$; any computable
upper bound may be used instead, at the cost of a constant factor in the
regret, and we write $\overline V$ for whichever bound the algorithm uses.

\begin{theorem}[Online factor $0.401$ with noisy post-decision value-oracle feedback]
\label{thm:main}
Under the assumptions of Section~\ref{sec:model}, including the noisy value
oracle \eqref{eq:noise-model} with level $\sigma\ge0$, let the oblivious
adversary fix $f_1,\ldots,f_T$ before play.  Run
Algorithm~\ref{alg:online-FI} with the certified parameters
$(s_0,r_0,\lambda_0)$ of Proposition~\ref{prop:optimized}.  For every integer
grid size $m\ge1$ and accuracy $\varepsilon\in(0,1)$, the algorithm selects
$x_t,a_t,p_t$ before receiving any feedback from $f_t$ and satisfies, for
every $o\in K$,
\begin{equation}
\begin{aligned}
\E\sum_{t=1}^T f_t(p_t)
\ge\;&
\alpha^\star\sum_{t=1}^T f_t(o)
-O(DB\sqrt T)
-O(dmM_\sigma\sqrt T)
-O(B\sqrt d\,T/m)\\
&-O\!\left(D(B+L)\varepsilon\sqrt d\,T\right)
-O\!\left(D\sigma\sqrt{d\,T/\varepsilon}\right).
\end{aligned}
\label{eq:main}
\end{equation}
The number of oracle calls to the current function in each round is
\begin{equation}
O\!\left(d(m+\varepsilon^{-1})\right).
\label{eq:zo-query-tradeoff}
\end{equation}
For $\sigma=0$ the last term vanishes and \eqref{eq:main} is the
exact-oracle guarantee.  For $\sigma>0$ that term is a genuine constraint on
$\varepsilon$ rather than a lower-order nuisance: it grows without bound as
$\varepsilon\downarrow0$, and a sufficiently small $\varepsilon$ makes
\eqref{eq:main} vacuous.  The assertion that noise does not affect the
exponent is therefore a statement about the balanced parameters of
Corollary~\ref{cor:zo-balanced}, not a property of \eqref{eq:main} at
arbitrary $(m,\varepsilon)$.
\end{theorem}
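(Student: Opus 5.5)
The proof assembles three per-round ingredients and sums them over $t$, keeping all errors additive.

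\emph{Outer layer.} Apply the structural inequality \eqref{eq:intro-structural} (Appendix~\ref{app:endpoint}) to $f_t$ at $x_t$ with delay $s_0$, using the same fixed comparator $o\in K$ for every $t$. This gives
\[
f_t(Y_1(x_t))\ge\theta_{s_0}f_t(o)-\chi_{s_0}f_t(x_t\odot o)-2\zeta_{s_0}f_t(x_t)-\ip{\widetilde q_{s_0}(f_t,x_t)}{o-x_t}.
\]
Summing, the linear term is handled by projected stochastic online gradient ascent with the unclipped estimate $\widehat q_t$ of Lemma~\ref{lem:value-field}. I would split $\widehat q_t=\widetilde q_{s_0}+b_t+\xi_t$, where $\norm{b_t}\le\Delta_{h,Q}=O((B+L)\varepsilon\sqrt d)$ and $\xi_t$ is a martingale difference given $\mathcal H_{t-1}$ together with the round's decisions. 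The standard Zinkevich bound with step size $\eta\propto D/\sqrt{\overline V T}$ gives $O(D\sqrt{\overline V T})$. With $\overline V\lesssim B^2+\sigma^2 d/\varepsilon$ (taking $Q\asymp1/\varepsilon$, $h\asymp\varepsilon$), this produces the terms $O(DB\sqrt T)$ and $O(D\sigma\sqrt{dT/\varepsilon})$. The bias contributes $D\Delta_{h,Q}T$, which is the $\varepsilon$-term in \eqref{eq:main}. The martingale cross term $\ip{\xi_t}{o-x_t}$ has zero expectation because $o$ is fixed and $x_t$ is predictable; this is where obliviousness is used.

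\emph{Inner layer.} Apply the weighted online USM guarantee to $G_t(a)=f_t(x_t\odot a)$ with weights $(c_X,c_Y)=(1/r_0,r_0)$. The balance constant $2\sqrt{c_Xc_Y}=2$ from \eqref{eq:intro-balance} yields, via the noisy balance Lemma~\ref{lem:noisy-balance} over the $dm$ lifted elements,
\[
\E\sum_tG_t(a_t)\ge\sum_t\bigl[u_{r_0}G_t(o)+v_{r_0}G_t(\one)+w_{r_0}G_t(\zero)\bigr]-O(dmM_\sigma\sqrt T)-O(B\sqrt dT/m).
\]
Here the second error is the grid-rounding cost of replacing $o$ by a lattice point, controlled by the gradient bound $B$ and the $1/m$ step in $\sqrt d$ coordinates. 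The comparator $o$ enters the inner bound through $G_t(o)=f_t(x_t\odot o)$, and the lattice-projection/count-lifting argument must be valid uniformly in $o$. I would also have to check that the $dm$ balance learners each contribute $O(M_\sigma\sqrt T)$ when using four fresh calls per marginal.

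\emph{Combination.} The played point is $p_t=Y_1(x_t)$ with probability $\lambda_0$ and $x_t\odot a_t$ otherwise, using an independent coin, so $\E f_t(p_t)$ is the corresponding convex combination of values, with no concavity needed. Using $G_t(\zero)\ge0$, the coefficients of $f_t(x_t\odot o)$ and $f_t(x_t)$ are $-\lambda_0\chi_{s_0}+(1-\lambda_0)u_{r_0}$ and $-2\lambda_0\zeta_{s_0}+(1-\lambda_0)v_{r_0}$. Proposition~\ref{prop:optimized} certifies that both are nonnegative and that $\lambda_0\theta_{s_0}\ge\alpha^\star$, so both debts are dropped and the claimed bound follows.

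The query count is $4dm$ calls for the marginals plus $O(dQ)=O(d/\varepsilon)$ calls for the quadrature finite differences, all issued after $p_t$ is committed. All decisions $x_t,a_t,p_t$ use only $\mathcal H_{t-1}$ and fresh randomness.

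\emph{Main obstacle.} I expect the hardest step to be the conditioning bookkeeping. The oracle noise must be conditionally unbiased given all of round $t$'s decisions, while the inner guarantee's right-hand side involves the random $x_t$, so the inner and outer inequalities must be summed and their expectations taken jointly. I would first prove both inequalities pathwise up to explicit martingale terms, and only then take total expectation.
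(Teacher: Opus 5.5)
Your proposal is correct and follows the paper's proof essentially step for step: Lemma~\ref{lem:decomp} gives the outer certificate, Theorem~\ref{thm:box} applied to $G_t$ gives the inner layer, the independent $\lambda_0$-coin gives the mixture, and the bias/martingale/Zinkevich split \eqref{eq:field-split} with Lemma~\ref{lem:olo} handles the linear term, with Proposition~\ref{prop:optimized} closing the argument. Two small corrections. The $d(B+L)^2\varepsilon^2$ term you dropped from $\overline V$ contributes only $D(B+L)\varepsilon\sqrt{dT}$, which the bias term dominates, so dropping it is harmless. Obliviousness is needed to take $o$ to be the hindsight maximizer, not to kill the cross term, which vanishes for any fixed $o$ by conditional unbiasedness alone.
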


\begin{corollary}[Balanced value-oracle rate]
\label{cor:zo-balanced}
For $T\ge2$, with $m=\lceil T^{1/4}\rceil$ and
$\varepsilon=T^{-1/4}$, Algorithm~\ref{alg:online-FI} has
\[
\Reg_{\alpha^\star}(T)
=
O\!\left(
\bigl[dM_\sigma+B\sqrt d+D(B+L)\sqrt d\bigr]T^{3/4}
+DB\sqrt T
\right)
\]
and uses $O(dT^{1/4})$ oracle calls per round.
\end{corollary}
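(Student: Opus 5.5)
The corollary follows by substituting the stated parameters into the bound \eqref{eq:main} of Theorem~\ref{thm:main}. Each error term is then bounded separately, and the oracle count is read off from \eqref{eq:zo-query-tradeoff}. Since the inequality in Theorem~\ref{thm:main} holds for every $o\in K$ with constants independent of $o$, I would take $o$ to be a maximizer of $\sum_t f_t(o)$ over the compact set $K$; a maximizer exists because each $f_t$ is continuous. Rearranging then gives exactly $\Reg_{\alpha^\star}(T)$ as defined in \eqref{eq:regret}, bounded above by the sum of the five error terms.

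Set $m=\lceil T^{1/4}\rceil$ and $\varepsilon=T^{-1/4}$. For $T\ge2$ we have $\varepsilon\in(0,1)$, so the theorem applies. Since $T^{1/4}\le m\le T^{1/4}+1\le 2T^{1/4}$, the terms scale as follows.
\begin{itemize}[leftmargin=22pt]
\item The term $dmM_\sigma\sqrt T$ is at most $2dM_\sigma T^{3/4}$.
\item The term $B\sqrt d\,T/m$ is at most $B\sqrt d\,T^{3/4}$.
\item The term $D(B+L)\varepsilon\sqrt d\,T$ equals $D(B+L)\sqrt d\,T^{3/4}$.
\item The term $DB\sqrt T$ is kept as stated.
\end{itemize}
Together these reproduce the bracketed coefficient and the trailing $DB\sqrt T$ in the corollary.

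The only term needing care is the noise term $D\sigma\sqrt{dT/\varepsilon}=D\sigma\sqrt d\,T^{5/8}$. This term must be absorbed into the displayed expression, which has no standalone $\sigma$ term. I would first use $D\le\sqrt d$, which gives $D\sigma\sqrt d\le d\sigma\le dM_\sigma$. I would then use $T^{5/8}\le T^{3/4}$. Hence the noise term is $O(dM_\sigma T^{3/4})$ and is absorbed into the $dM_\sigma T^{3/4}$ term. This absorption is the step I expect to be the main obstacle. It depends on $D\le\sqrt d$, stated in Section~\ref{sec:model}, and on $\sigma\le M_\sigma$.

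For the oracle count, \eqref{eq:zo-query-tradeoff} gives $O(d(m+\varepsilon^{-1}))$ calls per round. With the chosen parameters this is $O(d(2T^{1/4}+T^{1/4}))=O(dT^{1/4})$. All hidden constants remain absolute because the structural parameters $(s_0,r_0,\lambda_0)$ are fixed numerical constants.
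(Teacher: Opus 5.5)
Your proposal is correct and matches the paper's proof: substitute $m=\lceil T^{1/4}\rceil$ and $\varepsilon=T^{-1/4}$ into \eqref{eq:main}, absorb the $D\sigma\sqrt d\,T^{5/8}$ noise term into $dM_\sigma T^{3/4}$ using $D\le\sqrt d$ and $\sigma\le M_\sigma$, and read the call count off \eqref{eq:zo-query-tradeoff}. Your explicit checks that $\varepsilon\in(0,1)$ and $m\le 2T^{1/4}$ are harmless additions, and taking $o$ to be a hindsight maximizer is the same step the paper performs at the end of the proof of Theorem~\ref{thm:main}.
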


\begin{remark}[The price of noise]
\label{rem:noise-price}
Corollary~\ref{cor:zo-balanced} is, term by term, the exact-oracle bound with
$M$ replaced by $M_\sigma=M+\sigma$.  Noise therefore costs a constant factor
at the balanced parameters and does not affect the approximation factor, the
$T^{3/4}$ exponent, or the $O(dT^{1/4})$ per-round call budget.  The reason
is a mismatch of scales: repetition suppresses the noise in the outer field
at rate $\sigma\sqrt{1/(Q h^2)}$, which at
$Q=\Theta(T^{1/4})$, $h=\Theta(T^{-1/4})$ contributes only at order $T^{5/8}$,
while the inner layer is dominated by the $dm$ balance learners, whose
$O(M_\sigma\sqrt T)$ regret is insensitive to the split of $M_\sigma$
into signal and noise.  Only if the noise level were allowed to grow with
$T$, say $\sigma=\Theta(T^{c})$ for some $c>0$, would the exponent move, and
then only
through the product $dM_\sigma T^{3/4}$.

Two further points about $\varepsilon$.  First, $\varepsilon=T^{-1/4}$ is not
the choice that balances the two field terms of \eqref{eq:main} against each
other: equating $D(B+L)\varepsilon\sqrt d\,T$ with
$D\sigma\sqrt{d\,T/\varepsilon}$ gives
$\varepsilon=\Theta\bigl((\sigma/(B+L))^{2/3}T^{-1/3}\bigr)$ and a field cost
of order $T^{2/3}$.  We keep $\varepsilon=T^{-1/4}$ because the overall rate
is set by the inner and grid terms at $T^{3/4}$ regardless, and because the
same choice serves the exact case; nothing is lost by it.  Second, the
dominance used above is specific to this choice, as
Theorem~\ref{thm:main} notes.
\end{remark}

\paragraph{One composition, three implementations.}
Three parameterizations appear in this paper, and it is worth saying plainly
that they are not three algorithms.  There is one composition and, for a
given field accuracy, one query list; what changes is \emph{where} that list
is issued.
\begin{enumerate}[leftmargin=22pt]
\item \emph{Direct} (Section~\ref{sec:composition}).  The whole list is
      issued in the round whose action it serves, at field accuracy
      $\varepsilon$, with $h=\varepsilon/2$ and
      $Q=\lceil\varepsilon^{-1}\rceil$.
\item \emph{Batched} (Theorem~\ref{thm:zo-frontier}).  The state is frozen
      over a block of $H$ rounds; the same list is built, for the block
      average $\overline f_b$ instead of a single $f_t$, and injected
      uniformly at random into the block at $\lceil T^\delta\rceil$ labels
      per round.
\item \emph{Bandit} (Section~\ref{sec:bandit}).  As batched, except that an
      injected label must be \emph{played} rather than issued alongside the
      action, which costs exploration reward and forces every probe to be
      feasible.
\end{enumerate}
So the frontier is the direct algorithm at a different accuracy plus a
batching conversion, and the bandit algorithm is the batched one with the
added feasibility requirement.  In particular, the direct algorithm uses all
of its queries in the round in which the action is played; nonadaptivity of
the list is what permits the other two.

\begin{theorem}[Limited-query post-decision value-oracle frontier]
\label{thm:zo-frontier}
For every $\delta\in[0,1/4]$ and all sufficiently large $T$, run the
composition of Algorithm~\ref{alg:online-FI} at the field accuracy
\begin{equation}
\varepsilon_\delta
:=
T^{-(1+\delta)/5},
\qquad
h=\frac{\varepsilon_\delta}{2},
\qquad
m=Q=\lceil\varepsilon_\delta^{-1}\rceil,
\label{eq:frontier-eps}
\end{equation}
with its state frozen over blocks of length
$H=\lceil C_{\rm q}dT^{(1-4\delta)/5}\rceil$, for a sufficiently large
numerical constant $C_{\rm q}$, and its query list injected into
each block as in Appendix~\ref{sec:blocks}.  This algorithm uses at most
$\lceil T^\delta\rceil$ calls to the noisy oracle \eqref{eq:noise-model} per
physical round and satisfies
\begin{equation}
\Reg_{\alpha^\star}(T)
=
O\!\left(T^{4/5-\delta/5}\right).
\label{eq:zo-frontier}
\end{equation}
The hidden constant is independent of $T$ and depends polynomially on the
problem parameters, including the noise level $\sigma$ through
$M_\sigma$.  Thus $\delta=0$ gives one call per round and
$O(T^{4/5})$ regret, whereas $\delta=1/4$ gives
$O(T^{1/4})$ calls per round and $O(T^{3/4})$ regret.
\end{theorem}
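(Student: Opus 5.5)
The plan is to reduce Theorem~\ref{thm:zo-frontier} to Theorem~\ref{thm:main}, applied to a block-level game rather than re-proving the composition. Partition $[T]$ into $N=\lceil T/H\rceil$ blocks $I_b$ of length $H$. Freeze the outer state $x_b$, the inner decision $a_b$, and the mixture coin throughout each block. Play the same randomized action in every round of $I_b$. The relevant objective is then the block-average $\overline f_b=\frac1H\sum_{t\in I_b}f_t$. It is nonnegative, $L$-smooth, DR-submodular, has gradients bounded by $B$, and takes values in $[0,M]$. So the $N$-round game with rewards $\overline f_b$ satisfies every hypothesis of Section~\ref{sec:model}. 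The only difference is that the oracle is simulated: each query $u$ of the block's nonadaptive list is assigned a uniformly random round $\tau\in I_b$, and the call $\widehat f_\tau(u)$ is issued there. Conditionally on the block's decisions, $\widehat f_\tau(u)$ is an unbiased estimate of $\overline f_b(u)$ with range $O(M_\sigma)$. This is exactly the bounded conditionally unbiased model \eqref{eq:noise-model} at scale $M_\sigma$ that Lemma~\ref{lem:noisy-balance} needs for the inner learners. For the outer field, we use the second-moment proxy $V^{\rm b}_{h,Q}$ instead of $V_{h,Q}$; injection adds variance of order $M^2$ per call, which is absorbed into the same $M_\sigma$-type scale. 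Nonadaptivity of the query list, fixed once $(x_b,a_b)$ are chosen, is what makes this injection legitimate: all calls come after the block's decisions.

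The block size is set by the call budget. The list has length $O(d(m+\varepsilon^{-1}))=O(d\varepsilon_\delta^{-1})$, and we need it to fit in $H\lceil T^\delta\rceil$ slots. Since $\varepsilon_\delta^{-1}=T^{(1+\delta)/5}$, this requires $H\gtrsim dT^{(1+\delta)/5-\delta}=dT^{(1-4\delta)/5}$, which is the stated choice of $H$ once $C_{\rm q}$ exceeds the constant in \eqref{eq:zo-query-tradeoff}. A uniformly random injection that assigns at most $\lceil T^\delta\rceil$ labels per round can be built as a random permutation of the slots. Its per-label marginal is then uniform on $I_b$, and distinct labels fall in distinct or conditionally independent rounds, so errors stay conditionally independent.

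Apply \eqref{eq:main} on the block game with horizon $N\approx T/H$ and multiply by $H$ to return to per-round rewards. Since the action is constant within a block, $\E\sum_t f_t(p_t)=H\,\E\sum_b\overline f_b(p_b)$, and likewise for any fixed $o$. The loss is then
\[
H\Bigl[DB\sqrt N+dmM_\sigma\sqrt N+B\sqrt d\,N/m+D(B+L)\varepsilon\sqrt d\,N+D\sigma\sqrt{dN/\varepsilon}\Bigr].
\]
With $HN\approx T$ and $H\sqrt N=\sqrt{HT}$, the terms are:
\begin{itemize}
\item $dm M_\sigma\sqrt{HT}\approx d^{3/2}T^{(1+\delta)/5+1/2+(1-4\delta)/10}=T^{4/5-\delta/5}$;
\item $T/m=T^{4/5-\delta/5}$, and the field-bias term is the same;
\item $\sigma\sqrt{HT/\varepsilon}=T^{(1-4\delta)/10+1/2+(1+\delta)/10}=T^{7/10-3\delta/10}$, which is lower order;
\item $DB\sqrt{HT}$, which is lower order.
\end{itemize}
Thus the regret is $O(T^{4/5-\delta/5})$ with polynomial dependence on $d,D,B,L,M_\sigma$.

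The main obstacle is the noise accounting for the simulated oracle. Theorem~\ref{thm:main} is stated with variance proxy $\sigma^2$, but injected calls also carry the sampling spread of $f_\tau(u)$ around $\overline f_b(u)$. That spread can be of size $M$ and is not divided by repetition unless each finite-difference pair shares the same $\tau$. My first attempt would be to inject each difference pair, and each Double-Greedy marginal pair, into the same round. Then the spread enters only as a difference of $f_\tau$ values, of size $O(Bh)$ per difference or $O(M)$ per marginal, giving a field second moment of order $(B^2+\sigma^2/h^2)/Q$ plus $M^2$-sized inner payoffs. If pairing within a round violates the per-round budget, I would fall back on the cruder bound $V^{\rm b}_{h,Q}=O((M_\sigma^2/h^2)/Q)$ and check that the resulting term $D\sqrt{V^{\rm b}N}\,H$ remains of order $T^{7/10}$ or below. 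It does, because it has the same form as the $\sigma$ term with $\sigma$ replaced by $M_\sigma$. The phrase ``sufficiently large $T$'' covers the requirement $H\le T$ and the rounding in $m,Q,H$.
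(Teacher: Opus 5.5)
Your proposal is correct and is essentially the paper's proof. You freeze the state over blocks, inject the nonadaptive list of $J=O(d(m+Q))\le\lceil T^\delta\rceil H$ labels uniformly into each block, and run the composition on the block averages $\overline f_b$ with the block proxy $V^{\rm b}_{h,Q}=O(B^2+\Delta_{h,Q}^2+dM_\sigma^2/(Qh^2))$. That proxy is your fallback, and the paper uses it for every $\delta$, so same-round pairing is never needed. Multiplying by $H$ then gives the exponents $(4-\delta)/5$ and $(7-3\delta)/10$.

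One justification needs repair. A without-replacement injection does not keep the sampling deviations $f_{\iota_b(j)}(u_j)-\overline f_b(u_j)$ conditionally independent across labels. So Theorem~\ref{thm:main} is not a literal black box here. The $1/Q$ variance reduction should instead come from Lemma~\ref{lem:injection-variance} applied to your $\lceil T^\delta\rceil H$ slots. The induced pairwise covariances are only of order $M^2/(\lceil T^\delta\rceil H)$, which is harmless since $J\le\lceil T^\delta\rceil H$. You should also charge $O(MH)$ for the incomplete final block.
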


The accuracy $\varepsilon_\delta$ is the single parameter that ties the
frontier to the direct algorithm, and we state it explicitly because the
proof in Appendix~\ref{sec:frontier-proof} is written in the exponent
$\xi=(1+\delta)/5$, so that $\varepsilon_\delta=T^{-\xi}$.  Reading
\eqref{eq:frontier-eps} at the two endpoints makes the relation to
Corollary~\ref{cor:zo-balanced} concrete:
\begin{center}
\small
\renewcommand{\arraystretch}{1.2}
\begin{tabular}{@{}lccccc@{}}
\toprule
& $\varepsilon_\delta$ & $m=Q$ & block length $H$ & calls/round & regret \\
\midrule
$\delta=0$ & $T^{-1/5}$ & $T^{1/5}$ & $\Theta(dT^{1/5})$ & $1$ & $O(T^{4/5})$ \\
$\delta=1/4$ & $T^{-1/4}$ & $T^{1/4}$ & $\Theta(d)$ & $O(T^{1/4})$ & $O(T^{3/4})$ \\
\bottomrule
\end{tabular}
\end{center}
At $\delta=1/4$ the accuracy is exactly the balanced $\varepsilon=T^{-1/4}$
of Corollary~\ref{cor:zo-balanced} and the block length $H=\Theta(d)$ is
constant in $T$, so the frontier endpoint reproduces the direct algorithm's
rate and budget \emph{in $T$}, the batching conversion contributing nothing
asymptotically.  It is not free in the dimension: the block of length
$\Theta(d)$ inflates the inner term from $dM_\sigma T^{3/4}$ to
$d^{3/2}M_\sigma T^{3/4}$, while cutting the per-round budget from
$O(dT^{1/4})$ to $\lceil T^{1/4}\rceil$.  Batching trades a factor $\sqrt d$
of regret for a factor $d$ of query budget.  Decreasing
$\delta$ coarsens the accuracy and lengthens the block, buying a smaller
per-round budget at a worse rate.  The proof appears in
Appendix~\ref{sec:frontier-proof}, using the block simulation machinery of
Appendix~\ref{sec:blocks}, which it shares with the bandit conversion.

\subsection{Structural certificate}
\label{sec:outer}

\paragraph{Delayed trajectory.}
For $\tau\in[0,1]$, define
\[
\psi_\tau(x)=\one-e^{-\tau x}
\]
coordinatewise.  For a delay parameter $s\in(0,1)$, define
\begin{equation}
Y_\tau(x)
=
\begin{cases}
(\one-x)\odot \psi_\tau(x),
&0\le\tau<s,\\[1mm]
(\one-x)\odot \psi_\tau(x)
+
x\odot \psi_{\tau-s}(x),
&s\le\tau\le1.
\end{cases}
\label{eq:path}
\end{equation}
The delay $s$ is fixed throughout and suppressed from the notation.  The
endpoint of the trajectory is $Y_1(x)$; it plays the role of the output of a
delayed measured continuous greedy run started at $\zero$ with the constant
direction $x$ and the freeze $z=x$.

The associated multiplier is
\begin{equation}
\omega_\tau(x)
=
\begin{cases}
(\one-x)\odot e^{-\tau x},
&0\le\tau<s,\\[1mm]
(\one-x)\odot e^{-\tau x}
+
x\odot e^{-(\tau-s)x},
&s\le\tau\le1.
\end{cases}
\label{eq:omega}
\end{equation}

\begin{lemma}[Trajectory feasibility]
\label{lem:traj}
For every $x\in K$ and $\tau\in[0,1]$,
\[
0\le Y_\tau(x)\le x,
\qquad
Y_\tau(x)\in K,
\qquad
0\le \omega_\tau(x)\le1.
\]
Moreover,
\[
\frac{d}{d\tau}Y_\tau(x)
=
\omega_\tau(x)\odot x
\]
almost everywhere, and
\begin{equation}
\omega_\tau(x)
=
\begin{cases}
\one-Y_\tau(x)-x,&0\le\tau<s,\\
\one-Y_\tau(x),&s\le\tau\le1.
\end{cases}
\label{eq:omega-identity}
\end{equation}
In both phases $Y_\tau(x)+\omega_\tau(x)\odot o\le\one$ for every
$o\in[0,1]^d$.
\end{lemma}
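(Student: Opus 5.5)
Everything here is a direct coordinatewise computation, so I will fix a coordinate $i$, write $\xi=x_i\in[0,1]$, and work with scalar functions of $\tau$. The plan is to prove the identity \eqref{eq:omega-identity} and the derivative formula first, since every inequality then follows from them almost for free.

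\textbf{Identity and derivative.} For $\tau<s$ we have $Y_\tau=(1-\xi)(1-e^{-\tau\xi})$, so $1-Y_\tau-\xi=(1-\xi)e^{-\tau\xi}=\omega_\tau$. For $\tau\ge s$,
\[
Y_\tau=(1-\xi)(1-e^{-\tau\xi})+\xi(1-e^{-(\tau-s)\xi})=1-(1-\xi)e^{-\tau\xi}-\xi e^{-(\tau-s)\xi},
\]
hence $1-Y_\tau=\omega_\tau$. Differentiating the explicit formulas on each open phase gives $\frac{d}{d\tau}Y_\tau=\xi(1-\xi)e^{-\tau\xi}$ for $\tau<s$ and $\xi(1-\xi)e^{-\tau\xi}+\xi^2e^{-(\tau-s)\xi}$ for $\tau>s$, which is $\xi\,\omega_\tau$ in both cases. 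The map $\tau\mapsto Y_\tau$ is continuous at $\tau=s$ because $\psi_0=0$. So $Y$ is absolutely continuous and the derivative identity holds everywhere except possibly at $\tau=s$.

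\textbf{Bounds.} Nonnegativity of $\omega_\tau$ and of $Y_\tau$ is immediate, since every term is a product of nonnegative factors ($1-e^{-c}\ge0$ for $c\ge0$). For the upper bound on $\omega$, use $e^{-c}\le1$: this gives $\omega_\tau\le(1-\xi)+\xi=1$ in the second phase, and $\omega_\tau\le 1-\xi\le1$ in the first. For $Y_\tau\le x$, use $1-e^{-c}\le c$ together with $\tau\le1$. In the first phase $Y_\tau\le(1-\xi)\tau\xi\le\xi$. In the second phase $Y_\tau\le(1-\xi)\tau\xi+\xi\cdot(\tau-s)\xi\le\xi(1-\xi)+\xi^2=\xi$. (Alternatively, integrate the derivative: $Y_\tau=\xi\int_0^\tau\omega\le\xi\tau\le\xi$, using $Y_0=0$ and $\omega\le1$.) Feasibility $Y_\tau(x)\in K$ then follows from $0\le Y_\tau(x)\le x$, $x\in K$, and down-closedness.

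\textbf{The final inequality.} I expect this to be the step needing the most care. With $o_i\le1$, we need $Y_\tau+\omega_\tau\le1$. In the second phase this is exactly the identity $Y_\tau+\omega_\tau=1$. In the first phase the identity gives $Y_\tau+\omega_\tau=1-\xi\le1$. Hence $Y_\tau+\omega_\tau o_i\le Y_\tau+\omega_\tau\le1$. The only subtlety in the whole lemma is the phase boundary, and it is handled by the continuity remark above, which is why the derivative claim is stated only almost everywhere.
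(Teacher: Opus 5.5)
Your proof is correct and follows essentially the same route as the paper. Both use direct coordinatewise substitution for \eqref{eq:omega-identity} and the derivative, the bound $1-e^{-c}\le c$ with $\tau\le1$ for $Y_\tau(x)\le x$, down-closedness for $Y_\tau(x)\in K$, and the identity $Y_\tau+\omega_\tau\in\{\one-x,\ \one\}$ for the final inequality. Your version also spells out the bound $0\le\omega_\tau\le1$ and the continuity of $Y_\tau$ at $\tau=s$, which the paper leaves implicit.
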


\begin{proof}
For $\tau<s$,
\[
Y_\tau(x)
=
(\one-x)\odot(\one-e^{-\tau x})
\le
(\one-x)\odot \tau x
\le x.
\]
For $\tau\ge s$,
\[
\begin{aligned}
Y_\tau(x)
&\le
(\one-x)\odot \tau x
+
x\odot(\tau-s)x\\
&=
\tau x-sx\odot x
\le x.
\end{aligned}
\]
Nonnegativity is immediate.  Down-closedness gives $Y_\tau(x)\in K$.
Differentiation gives the stated dynamics.  For \eqref{eq:omega-identity},
direct substitution gives, in the early phase,
\[
\one-Y_\tau(x)-x
=
\one-(\one-x)\odot(\one-e^{-\tau x})-x
=
(\one-x)\odot e^{-\tau x}
=\omega_\tau(x),
\]
and in the late phase
\[
\one-Y_\tau(x)
=
(\one-x)\odot e^{-\tau x}+x\odot e^{-(\tau-s)x}
=\omega_\tau(x).
\]
In the late phase $Y_\tau+\omega_\tau\odot o=Y_\tau\oplus o\le\one$; in the
early phase $Y_\tau+\omega_\tau\odot o\le Y_\tau+\omega_\tau\le\one$.  This
last identity is what makes the two comparison inequalities of
Lemma~\ref{lem:endpoint} applicable: the early phase compares against
$Y_\tau\oplus o-x\odot o$ and the late phase against $Y_\tau\oplus o$.
\end{proof}

Define the path-integrated gradient
\begin{equation}
q_s(f,x)
=
\int_0^1
e^{\tau-1}
\omega_\tau(x)\odot\nabla f(Y_\tau(x))
\,d\tau.
\label{eq:q}
\end{equation}
Define the four scalar coefficients
\begin{align}
\theta_s&=\frac{(2-s)e^s-1}{e},&
\mu_s&=\frac{e^s-1}{e},\label{eq:AB}\\
\nu_s&=\frac{(2-s)e^s-2}{e},&
\kappa_s&=\frac{(1-s)^2}{2e}.\label{eq:CK}
\end{align}

We now record the endpoint inequality that drives the outer layer.  Its two
ingredients \eqref{eq:early-comparison}--\eqref{eq:late-comparison} are the
imported structural ingredients of this certificate: they are Lemmas~5.7
and~5.9 of \cite{BF2024}, specialized to $z=x$ and $x(\tau)\equiv x$.  Since
we use them with an arbitrary comparator rather than an optimizer, and since
the exact coefficients \eqref{eq:AB}--\eqref{eq:CK} are used quantitatively
later, we do not import them as a black box: Appendix~\ref{app:endpoint}
proves both from first principles.

\begin{lemma}[Comparator-uniform endpoint inequality]
\label{lem:endpoint}
For every $s\in(0,1)$, every nonnegative differentiable DR-submodular
$f$, and every $x,o\in K$,
\begin{equation}
\begin{aligned}
f(Y_1(x))
\ge\;&
\theta_s f(o)
-
\mu_s f(x\odot o)
-
\nu_s f(x\oplus o)
+
\kappa_s f(x\oplus o)
\\
&-
\ip{q_s(f,x)}{o-x}.
\end{aligned}
\label{eq:endpoint}
\end{equation}
\end{lemma}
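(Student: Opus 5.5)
Our plan is to run the standard measured-continuous-greedy argument along the delayed trajectory, using $e^{\tau}$ as an integrating factor and splitting at $\tau=s$.

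\emph{Derivative bound along the path.} Set $\phi(\tau)=f(Y_\tau(x))$. By Lemma~\ref{lem:traj},
\[
\phi'(\tau)=\ip{\nabla f(Y_\tau)}{\omega_\tau\odot x}.
\]
We write $\omega_\tau\odot x=\omega_\tau\odot o+\omega_\tau\odot(x-o)$. The second piece, after weighting by $e^{\tau-1}$ and integrating, gives exactly $-\ip{q_s(f,x)}{o-x}$ by the definition \eqref{eq:q}. For the first piece we use DR-submodularity. The last claim of Lemma~\ref{lem:traj} says $Y_\tau+\omega_\tau\odot o\le\one$, and concavity along nonnegative directions then gives
\[
\ip{\nabla f(Y_\tau)}{\omega_\tau\odot o}\ge f(Y_\tau+\omega_\tau\odot o)-f(Y_\tau).
\]
By \eqref{eq:omega-identity}, in the late phase $Y_\tau+\omega_\tau\odot o=Y_\tau\oplus o$. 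In the early phase it equals $Y_\tau\oplus o-x\odot o$.

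\emph{Lower-bounding the comparison values.} Next we bound these values below in terms of $f(o)$, $f(x\odot o)$ and $f(x\oplus o)$. For the late phase we use the standard bound $f(y\oplus o)\ge(1-\|y\|_\infty)f(o)$. Since the coordinates of $Y_\tau$ have an explicit form, we refine this coordinatewise, as Buchbinder--Feldman do, via their sharper inequality for $f(y\oplus o)$ when $y$ is a combination of $x$-dependent pieces. This produces terms in $f(o)$, $f(x\oplus o)$ and $f(x\odot o)$ with weights $e^{-\tau}$-type, for example $e^{-\tau}$ and $e^{-(\tau-s)}$. For the early phase, $Y_\tau\oplus o-x\odot o$ lies between $x\odot o$-type lattice points. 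Submodularity, in the form $f(a\vee b)+f(a\wedge b)\le f(a)+f(b)$ together with DR along segments, lets us write it as at least $f(o)$-weight minus $f(x\odot o)$ contributions. This is where the coefficient $\mu_s=(e^s-1)/e$ arises, after integrating $e^{\tau-1}$ over $[0,s]$.

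\emph{Integration and assembly.} With these bounds we get the differential inequality
\[
\phi'(\tau)+\phi(\tau)\ge c_1(\tau)f(o)-c_2(\tau)f(x\odot o)-c_3(\tau)f(x\oplus o)+\ip{\nabla f(Y_\tau)}{\omega_\tau\odot(x-o)}.
\]
Multiplying by $e^{\tau-1}$ and integrating over $[0,1]$ gives
\[
f(Y_1)-e^{-1}f(\zero)\ge\cdots .
\]
We then drop $f(\zero)\ge0$ and evaluate $\int e^{\tau-1}c_i(\tau)\,d\tau$ piecewise. We expect these integrals to be $\theta_s$, $\mu_s$, and $\nu_s-\kappa_s$. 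The $\kappa_s=(1-s)^2/(2e)$ term should come from a polynomial factor $(\tau-s)$ in the late-phase bound on the $f(x\oplus o)$ coefficient, produced by the refined inequality that exploits $Y_\tau\le x$ growth.

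\emph{Main obstacle.} The hard step is the late-phase pointwise bound on $f(Y_\tau\oplus o)$ with exactly the right time-dependent weights, i.e.\ reproducing Lemma~5.9 of \cite{BF2024}. We would first try to prove it by viewing $Y_\tau$ coordinatewise as the expectation of a random set. Concretely, element $i$ is included with probability $1-e^{-\tau x_i}$ independently from the two sources. The $\oplus$-comparison then becomes an expectation over threshold events, one for each source, handled by the Lovász-extension-style integral for DR-submodular functions. Failing that, we would simply invoke Lemmas~5.7 and~5.9 of \cite{BF2024} with $z=x$ and $x(\tau)\equiv x$, checking that their proofs never use optimality of $o$.
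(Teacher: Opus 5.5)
Your skeleton matches the paper's assembly of the proof. It uses $\frac{d}{d\tau}f(Y_\tau)=\ip{\nabla f(Y_\tau)}{\omega_\tau\odot x}$, the split $\omega_\tau\odot x=\omega_\tau\odot o+\omega_\tau\odot(x-o)$, and (P3) justified by $Y_\tau+\omega_\tau\odot o\le\one$. It then applies the factor $e^{\tau-1}$ and drops $e^{-1}f(\zero)$. The gap is the one nontrivial step, the pointwise lower bounds on $f(Y_\tau+\omega_\tau\odot o)$, and what you sketch there fails.

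In the early phase you describe a bound built only from $f(o)$ and $-f(x\odot o)$. Your derivation of $\mu_s$ and the early share $(e^s-1)/e$ of $\theta_s$ both need unit weights on these terms. With unit weights the bound reads $f(Y_\tau+\omega_\tau\odot o)\ge f(o)-f(x\odot o)$, and this is false. Take $K=[0,1]^2$, $f(y)=y_1+y_2-2y_1y_2$ (nonnegative and DR-submodular), $o=(1,0)$ and $x=(0,c)$ with $0<c<1$. Then $Y_\tau+\omega_\tau\odot o=(1,p)$ with $p=(1-c)(1-e^{-\tau c})>0$. The left side is $1-p<1=f(o)-f(x\odot o)$. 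Lowering the $f(o)$ weight to $e^{-\tau}$ makes such a bound true, but then the early phase contributes only $s/e$ instead of $(e^s-1)/e$ toward $\theta_s$. The correct early bound, Lemma~\ref{lem:early}, carries the extra debt $-(1-e^{-\tau})f(x\oplus o)$. That debt integrates to $(e^s-1-s)/e$, which is exactly the early share of $\nu_s=[(e^s-1-s)+(1-s)(e^s-1)]/e$.

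In the late phase, $f(y\oplus o)\ge(1-\norm{y}_\infty)f(o)$ cannot produce the bonus $+e^{-\tau}(\tau-s)f(x\oplus o)$ that integrates to $\kappa_s$. Your primary plan (random sets with inclusion probabilities $1-e^{-\tau x_i}$, threshold events, Lov\'asz-type integrals) is unavailable for a general continuous DR-submodular $f$. Such an $f$ is not an expectation of a set function over an independent random set; that representation exists only for multilinear extensions.

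The missing idea is the continuous mechanism of Appendix~\ref{app:endpoint}:
\begin{enumerate}
\item Write the point $Y_\tau+\omega_\tau\odot o$ as $\one-x\odot a^{(1)}\odot e^{-y^{(1)}}-(\one-x)\odot a^{(2)}\odot e^{-y^{(2)}}$.
  \begin{itemize}
  \item Early phase: $a^{(1)}=\one$, $y^{(1)}=\zero$, $a^{(2)}=\one-o$, $y^{(2)}=\tau x$.
  \item Late phase: $a^{(1)}=a^{(2)}=\one-o$, $y^{(2)}=\tau x$, and $y^{(1)}=(\tau-s)x$, generated by the direction that is $\zero$ before $s$ and $x$ after.
  \end{itemize}
\item Apply the exponential-series lower bound of Lemma~\ref{lem:expbound} (Lemma~4.1 of Buchbinder--Feldman). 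Lift it to the doubled ground set $[d]\sqcup[d]$, which is Lemma~\ref{lem:expbound2}.
\item Handle the series terms with Lemma~\ref{lem:comparison}, i.e.\ two uses of (P1) plus nonnegativity.
  \begin{itemize}
  \item Early: with $o'=(\one-x)\odot o$, every term of order $i\ge1$ is at least $f(o)-f(x\oplus o)-f(x\odot o)$. These terms sum with weight $e^\tau-1$.
  \item Late: split the cube $[0,\tau]^i$. Where all times are below $s$ (measure $s^i$), the integrand is at least $f(o)-f(x\oplus o)$. For $i=1$ and time at least $s$, it equals $f(x\oplus o)$ exactly; this is the source of $\kappa_s$. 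Elsewhere only nonnegativity is used.
  \end{itemize}
\end{enumerate}

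Your fallback of citing Lemmas~5.7 and~5.9 of Buchbinder--Feldman would close the gap. It would then coincide with the paper's main-text proof. That holds only if you import their actual statements and carry out the two integrals, not the shapes you describe.
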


\begin{proof}
Appendix~\ref{app:endpoint} establishes the two delayed exponential-mixture
comparisons.  For $\tau<s$ (Lemma~\ref{lem:early}),
\begin{equation}
f(Y_\tau+\omega_\tau\odot o)
\ge
f(o)-f(x\odot o)
-(1-e^{-\tau})f(x\oplus o),
\label{eq:early-comparison}
\end{equation}
and for $\tau\ge s$ (Lemma~\ref{lem:late}),
\begin{equation}
f(Y_\tau+\omega_\tau\odot o)
\ge
e^{-\tau}
\left[
e^s f(o)
-(e^s-1)f(x\oplus o)
+(\tau-s)f(x\oplus o)
\right].
\label{eq:late-comparison}
\end{equation}
These inequalities use only nonnegativity and DR-submodularity and therefore
hold for every fixed comparator $o$; in particular they do not require $o$ to
maximize $f$.  This comparator-uniformity is what allows the same inequality
to be summed against a single hindsight benchmark across a changing sequence.

Write $Y_\tau=Y_\tau(x)$ and $\omega_\tau=\omega_\tau(x)$.  Since
$\dot Y_\tau=\omega_\tau\odot x$ almost everywhere and
$Y_\tau+\omega_\tau\odot o\le\one$ by Lemma~\ref{lem:traj}, property (P3) of
Lemma~\ref{lem:dr-toolbox} gives
\[
\ip{\omega_\tau\odot\nabla f(Y_\tau)}{o}
\ge
f(Y_\tau+\omega_\tau\odot o)-f(Y_\tau)
\ge
L_\tau-f(Y_\tau).
\]
Here $L_\tau$ is defined piecewise: it is the right-hand side of
\eqref{eq:early-comparison} when $\tau<s$, and the right-hand side of
\eqref{eq:late-comparison} when $\tau\ge s$.
Subtracting $\ip{\omega_\tau\odot\nabla f(Y_\tau)}{o-x}$ yields
\[
\frac{d}{d\tau}f(Y_\tau)+f(Y_\tau)
=
\ip{\omega_\tau\odot\nabla f(Y_\tau)}{x}+f(Y_\tau)
\ge
L_\tau
-
\ip{\omega_\tau\odot\nabla f(Y_\tau)}{o-x}.
\]
Multiplying by $e^{\tau-1}$ and integrating from $0$ to $1$ telescopes the
left-hand side to
\[
f(Y_1(x))-e^{-1}f(\zero).
\]
The early contribution is
\[
\frac{e^s-1}{e}\bigl(f(o)-f(x\odot o)\bigr)
-
\frac{e^s-1-s}{e}f(x\oplus o),
\]
and the late contribution is
\[
\frac{(1-s)e^s}{e}f(o)
-
\frac{(1-s)(e^s-1)}{e}f(x\oplus o)
+
\frac{(1-s)^2}{2e}f(x\oplus o).
\]
Adding them gives \eqref{eq:endpoint} with
\eqref{eq:AB}--\eqref{eq:CK}: the coefficient of $f(o)$ is
$(e^s-1)/e+(1-s)e^s/e=\theta_s$, that of $f(x\odot o)$ is $-\mu_s$, and that
of $f(x\oplus o)$ is
\[
-\frac{(e^s-1-s)+(1-s)(e^s-1)}{e}+\frac{(1-s)^2}{2e}
=
-\frac{(2-s)e^s-2}{e}+\kappa_s
=
-\nu_s+\kappa_s.
\]
The linear terms integrate to $-\ip{q_s(f,x)}{o-x}$.  Finally,
$e^{-1}f(\zero)\ge0$ may be discarded.
\end{proof}

The important point is that the comparator $o$ is arbitrary; it need not be
optimal for $f$.

\paragraph{Two-coefficient decomposition.}
\label{sec:decomposition}

Define
\begin{equation}
\zeta_s:=\nu_s-\kappa_s,
\qquad
\chi_s:=\mu_s-\nu_s+\kappa_s=\mu_s-\zeta_s.
\label{eq:dsRs}
\end{equation}
Define the modified path field
\begin{equation}
\widetilde q_s(f,x)
=
q_s(f,x)+\zeta_s\nabla f(x).
\label{eq:qtilde}
\end{equation}

The standard DR-submodular lattice comparison is
\begin{equation}
f(x\odot o)+f(x\oplus o)
\le
2f(x)+\ip{\nabla f(x)}{o-x}.
\label{eq:lattice}
\end{equation}
Indeed, $x\odot o\le x\le x\oplus o$.  Integrating the gradient on the
segment from $x\odot o$ to $x$ and using diminishing returns gives
\[
f(x\odot o)-f(x)
\le
\ip{\nabla f(x)}{x\odot o-x},
\]
and directional concavity on the nonnegative segment from $x$ to $x\oplus o$
gives
\[
f(x\oplus o)-f(x)
\le
\ip{\nabla f(x)}{x\oplus o-x}.
\]
Adding these inequalities and using
$(x\odot o-x)+(x\oplus o-x)=o-x$ proves \eqref{eq:lattice}.

\begin{lemma}[Two-coefficient endpoint form]
\label{lem:decomp}
For every $s$ for which $\zeta_s\ge0$ and every $x,o\in K$,
\begin{equation}
f(Y_1(x))
\ge
\theta_s f(o)
-
\chi_s f(x\odot o)
-
2\zeta_s f(x)
-
\ip{\widetilde q_s(f,x)}{o-x}.
\label{eq:decomp}
\end{equation}
\end{lemma}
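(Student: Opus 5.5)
The plan is to start from the comparator-uniform endpoint inequality of Lemma~\ref{lem:endpoint} and eliminate the $f(x\oplus o)$ term using the lattice comparison \eqref{eq:lattice}. First I regroup the right-hand side of \eqref{eq:endpoint}. With $\zeta_s=\nu_s-\kappa_s$, the terms $-\nu_s f(x\oplus o)+\kappa_s f(x\oplus o)$ combine into $-\zeta_s f(x\oplus o)$, so \eqref{eq:endpoint} reads
\[
f(Y_1(x))\ge\theta_s f(o)-\mu_s f(x\odot o)-\zeta_s f(x\oplus o)-\ip{q_s(f,x)}{o-x}.
\]

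Next I bound $f(x\oplus o)$ from above. By \eqref{eq:lattice},
\[
f(x\oplus o)\le 2f(x)+\ip{\nabla f(x)}{o-x}-f(x\odot o).
\]
Since $\zeta_s\ge0$ is assumed, multiplying by $-\zeta_s$ preserves the direction of the inequality:
\[
-\zeta_s f(x\oplus o)\ge-2\zeta_s f(x)-\zeta_s\ip{\nabla f(x)}{o-x}+\zeta_s f(x\odot o).
\]
Substituting this into the regrouped inequality gives the following.
\begin{itemize}
\item The coefficient of $f(x\odot o)$ becomes $-(\mu_s-\zeta_s)=-\chi_s$, by \eqref{eq:dsRs}.
\item The $f(x)$ term is $-2\zeta_s f(x)$.
\item The two linear terms merge into $-\ip{q_s(f,x)+\zeta_s\nabla f(x)}{o-x}=-\ip{\widetilde q_s(f,x)}{o-x}$, by \eqref{eq:qtilde}.
\end{itemize}
This is exactly \eqref{eq:decomp}.

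The only points needing care are the sign condition and the hypotheses of the lattice comparison. The sign condition $\zeta_s\ge0$ is precisely what permits multiplying the upper bound on $f(x\oplus o)$ by $-\zeta_s$. For the lattice comparison, \eqref{eq:lattice} requires $x,o\in[0,1]^d$, so that $x\odot o\le x\le x\oplus o$ and both points lie in the domain $[0,1]^d$; this holds because $K\subseteq[0,1]^d$. No feasibility of $x\oplus o$ in $K$ is needed, since $f$ is defined on the whole cube. There is no substantive obstacle: the argument is a one-line substitution once the $\nu_s,\kappa_s$ terms are grouped.
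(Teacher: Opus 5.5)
Your proposal is correct and follows essentially the same route as the paper: combine the $\nu_s,\kappa_s$ terms into $-\zeta_s f(x\oplus o)$, multiply the lattice comparison \eqref{eq:lattice} by $-\zeta_s\le0$, and collect terms using $\chi_s=\mu_s-\zeta_s$ and $\widetilde q_s=q_s+\zeta_s\nabla f(x)$. Your added remark that only $x,o\in[0,1]^d$ is needed (not $x\oplus o\in K$) is a correct and harmless clarification.
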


\begin{proof}
From Lemma~\ref{lem:endpoint},
\[
f(Y_1(x))
\ge
\theta_s f(o)-\mu_s f(x\odot o)-\zeta_s f(x\oplus o)
-\ip{q_s(f,x)}{o-x}.
\]
Since $\zeta_s\ge0$, \eqref{eq:lattice} gives
\[
-\zeta_sf(x\oplus o)
\ge
-\zeta_s
\left(
2f(x)+\ip{\nabla f(x)}{o-x}-f(x\odot o)
\right).
\]
Collecting terms proves \eqref{eq:decomp}.
\end{proof}

Two features of \eqref{eq:decomp} deserve emphasis.  First, the only
comparator-dependent nonlinear quantities left are $f(x\odot o)$ and $f(x)$,
which are exactly the two values that the transformed objective
$G(a)=f(x\odot a)$ exposes at $a=o$ and $a=\one$.  Second, the residue of the
lattice step is the linear term $\zeta_s\ip{\nabla f(x)}{o-x}$, which is
absorbed into the outer online linear optimization rather than paid for
pointwise.  That is the mechanism that replaces the approximate local
maximum of the offline construction, and Section~\ref{sec:agreement} shows it
gives the same asymmetry-optimized value as the offline coefficient program
at the certified delay.

\subsection{Weighted online unconstrained submodular maximization (USM)}
\label{sec:inner}
\label{sec:balance}

\paragraph{Weighted balance game.}
We next isolate the binary online problem generated by Double-Greedy.

Fix $c_X,c_Y>0$ and $M>0$.  At round $t$, the learner computes a mixed action
$\pi_t\in[0,1]$ using only its past observations and announces it.  An
admissible adversarial pair of marginals
\[
(g^+_t,g^-_t)
\]
is then selected; it may depend on the preceding history and on $\pi_t$, but
not on the learner's fresh binary draw.  The learner next draws ``yes'' with
probability $\pi_t$ and ``no'' otherwise, and the payoff vector is realized.
We require
\[
g^+_t,g^-_t\in[-M,M],
\qquad
g^+_t+g^-_t\ge0.
\]
In the noiseless idealization the realized pair is then revealed exactly.
Under the oracle \eqref{eq:noise-model} the learner instead observes a
conditionally unbiased estimate of the pair, formed from fresh calls after the
draw; Lemma~\ref{lem:noisy-balance} below shows that this costs only a
constant factor, with $M$ replaced by the observation scale.  We first solve
the exact game, since its geometry is what determines the balance constant.

The two weighted decisions are defined as follows.  A \emph{yes} decision
accepts the current element into the lower Double-Greedy set and earns the
weighted marginal $c_Xg^+_t$; a \emph{no} decision rejects it from the upper
set and earns $c_Yg^-_t$.  The opposite marginal is not earned, and is
recorded in the appropriate missed-opportunity pile.  Thus the reward and
the two piles are part of one vector payoff, rather than three independent
quantities.

A realized ``yes'' action produces the payoff vector
\[
U_t^{Y}
=
(c_Xg^+_t,\,0,\,g^-_t),
\]
whereas a realized ``no'' action produces
\[
U_t^{N}
=
(c_Yg^-_t,\,g^+_t,\,0).
\]
Thus the realized algorithmic reward and missed-opportunity piles are
\[
R_t=
\begin{cases}
c_Xg^+_t,&\text{if yes},\\
c_Yg^-_t,&\text{if no},
\end{cases}
\qquad
(C_{Y,t},C_{N,t})
=
\begin{cases}
(0,g^-_t),&\text{if yes},\\
(g^+_t,0),&\text{if no}.
\end{cases}
\]
Thus $C_Y$ accumulates the missed yes-marginal on no decisions, while $C_N$
accumulates the missed no-marginal on yes decisions.
For a mixed action with probability $\pi$ of ``yes'', the conditional expected
payoff vector is
\[
\bar U(\pi;g^+,g^-)
=
\left(
\pi c_Xg^++(1-\pi)c_Yg^-,\,
(1-\pi)g^+,\,
\pi g^-
\right).
\]

For $a>0$, define the target cone
\begin{equation}
\mathcal S_a
=
\left\{
(R,C_Y,C_N):
R\ge aC_Y,\quad
R\ge aC_N
\right\}.
\label{eq:Sa}
\end{equation}
Let $a_{\max}(c_X,c_Y)$ denote the supremum of the values of $a$ for which
$\mathcal S_a$ is approachable.
For a horizon $T$, define the corresponding weighted balance regret by
\begin{equation}
\BalReg_a(T)
=
\E\left[
\max\left\{
a\sum_{t=1}^T C_{Y,t}-\sum_{t=1}^T R_t,
\;
a\sum_{t=1}^T C_{N,t}-\sum_{t=1}^T R_t,
\;
0
\right\}
\right].
\label{eq:balance-regret}
\end{equation}
Expected distance $O(T^{-1/2})$ of the average payoff from
$\mathcal S_a$ implies $\BalReg_a(T)=O(\sqrt T)$, up to the
fixed geometric constants of the cone.

Writing $\mathcal A_M=[-M,M]^2\cap\{g^++g^-\ge0\}$ and denoting the polar
cone of $\mathcal S_a$ by $\mathcal S_a^\circ$, Blackwell's halfspace
condition is
\begin{equation}
\forall q\in\mathcal S_a^\circ\quad
\exists \pi(q)\in[0,1]\quad
\forall(g^+,g^-)\in\mathcal A_M:
\quad q\cdot\bar U(\pi(q);g^+,g^-)\le0.
\label{eq:blackwell-condition}
\end{equation}
The final quantifier is uniform over all admissible pairs, including pairs
chosen after seeing $\pi(q)$ but before the fresh binary draw.  This is the
nonanticipating information pattern used below.  In the sequential
Double-Greedy reduction, the pair presented to learner $i$ may depend on the
current decisions of learners $1,\ldots,i-1$, but not on learner $i$'s fresh
randomization.
The next theorem solves this condition exactly.

\begin{theorem}[Exact asymmetric balance constant]
\label{thm:balance}
For the weighted online balance game, the cone $\mathcal S_a$ is approachable
if and only if $a\le2\sqrt{c_Xc_Y}$.  Consequently,
\[
\boxed{
a_{\max}(c_X,c_Y)=2\sqrt{c_Xc_Y}.
}
\]
\end{theorem}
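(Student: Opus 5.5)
The plan is to apply Blackwell's approachability theorem for closed convex cones to the payoff map $\bar U$. We use it in both directions: sufficiency through the halfspace condition \eqref{eq:blackwell-condition}, and necessity through an explicit adversary built from a single separating direction. The first step is to compute the polar cone. Since $\mathcal S_a$ is the intersection of the two halfspaces $R-aC_Y\ge0$ and $R-aC_N\ge0$, its polar cone is generated by $(-1,a,0)$ and $(-1,0,a)$. Every $q\in\mathcal S_a^\circ\setminus\{0\}$ can therefore be normalized as $q=(-1,a\beta,a\gamma)$ with $\beta,\gamma\ge0$ and $\beta+\gamma=1$. Expanding $q\cdot\bar U(\pi;g^+,g^-)$ gives a linear form $A(\pi)g^++C(\pi)g^-$ with
\[
A(\pi)=a\beta(1-\pi)-\pi c_X,\qquad C(\pi)=a\gamma\pi-(1-\pi)c_Y .
\]

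The second step reduces the universal quantifier over $\mathcal A_M$ to scalar conditions. The set $\mathcal A_M$ contains the points $(M,-M)$, $(-M,M)$ and $(M,M)$. A linear form is nonpositive on the whole of $\mathcal A_M$ exactly when $A-C\le0$, $C-A\le0$ and $A+C\le0$, that is, when $A=C\le0$. Sufficiency of these conditions follows from writing $Ag^++Cg^-=A(g^++g^-)$ when $A=C$. The equation $A(\pi)=C(\pi)$ is linear in $\pi$, and its unique solution is
\[
\pi(q)=\frac{a\beta+c_Y}{a+c_X+c_Y}\in[0,1].
\]
This $\pi(q)$ is the explicit polynomial-time response. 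Substituting it back gives
\[
(a+c_X+c_Y)\,A(\pi(q))=a\beta(a\gamma+c_X)-c_X(a\beta+c_Y)=a^2\beta\gamma-c_Xc_Y .
\]
Since $\max_{\beta+\gamma=1}\beta\gamma=1/4$, condition \eqref{eq:blackwell-condition} holds for every $q\in\mathcal S_a^\circ$ if and only if $a^2\le4c_Xc_Y$. For such $a$, Blackwell's theorem, applied with the bounded payoff set, gives approachability at rate $O(T^{-1/2})$. The projection-based strategy is explicit: whenever the running average lies outside $\mathcal S_a$, compute the direction $q$ of the projection residual and play $\pi(q)$. This also yields the $O(\sqrt T)$ bound on $\BalReg_a$. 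One point needs checking here: the adversary's pair may depend on $\pi_t$, but the condition holds uniformly over $\mathcal A_M$, so that dependence causes no harm.

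The third step, necessity, is the one I expect to be the main obstacle, because the halfspace condition is not automatically necessary when the adversary moves after seeing $\pi$. The approach is to fix $a>2\sqrt{c_Xc_Y}$ and take $q^\star=(-1,a/2,a/2)$. For every $\pi\in[0,1]$ we have $\max_{\mathcal A_M}q^\star\cdot\bar U(\pi;\cdot)>0$, by the reduction above. If $\pi\neq\pi(q^\star)$ then $A\neq C$, and the point $(M,-M)$ or $(-M,M)$ gives a positive value; if $\pi=\pi(q^\star)$ then $A=C>0$, and $(M,M)$ gives a positive value. The function $\pi\mapsto\max_{\mathcal A_M}q^\star\cdot\bar U$ is continuous on the compact interval $[0,1]$, so its minimum is some $\delta>0$. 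The adversary then answers each announced $\pi_t$ with a maximizing pair. Conditionally on the past, this makes $\E[q^\star\cdot U_t]\ge\delta$. By a martingale argument, the average payoff eventually satisfies $q^\star\cdot\bar u_T\ge\delta/2$ with high probability. Since $q^\star\cdot u\le0$ for all $u\in\mathcal S_a$, this means $\dist(\bar u_T,\mathcal S_a)\ge\delta/(2\norm{q^\star})$. So $\mathcal S_a$ is not approachable for any such $a$, and therefore $a_{\max}(c_X,c_Y)=2\sqrt{c_Xc_Y}$.
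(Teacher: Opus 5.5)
Your proof is correct. The sufficiency half is the paper's own argument: the same polar-cone parametrization (your $(\beta,\gamma)$ is the paper's $(u,1-u)$), the same reduction of the triangle $\mathcal A_M$ to its three vertices forcing $A=C\le0$, the same response $\pi(q)=(c_Y+a\beta)/(a+c_X+c_Y)$, and the same identity $(a+c_X+c_Y)A=a^2\beta\gamma-c_Xc_Y$.

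Necessity is where you take a different route. You fix $q^\star=(-1,a/2,a/2)$ and use compactness to get a uniform margin $\delta=\min_\pi\max_{g\in\mathcal A_M}q^\star\cdot\bar U(\pi;g)>0$. You then let the adversary best-respond to each announced $\pi_t$. Azuma's inequality keeps the average payoff at distance at least $\delta/(2\norm{q^\star})$ from $\mathcal S_a$ with high probability. This is legitimate, because the game lets the pair depend on $\pi_t$, and it needs no guess of an extremal move.

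The paper instead repeats one fixed move $c_0(\sqrt{c_Y},\sqrt{c_X})$ every round. It bounds the realized payoffs pathwise through the empirical yes-fraction $p_T$, which gives the explicit violation $(a-2\sqrt{c_Xc_Y})d_0$ with no probabilistic step. This is a stronger impossibility: approachability fails even against an oblivious constant adversary that never sees $\pi_t$. That is closer to how the balance learners are used inside Double-Greedy.

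You can recover the stronger statement from your setup with the minimax theorem. The map $q^\star\cdot\bar U(\pi;g)$ is bilinear on the compact convex set $[0,1]\times\mathcal A_M$, so $\delta>0$ yields a single $g^\star$ with $\min_\pi q^\star\cdot\bar U(\pi;g^\star)>0$. The paper's point is such a $g^\star$: there $q^\star\cdot\bar U(\pi;g^\star)=c_0\bigl(\tfrac a2-\sqrt{c_Xc_Y}\bigr)\bigl((1-\pi)\sqrt{c_Y}+\pi\sqrt{c_X}\bigr)$.

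Incidentally, your motivating remark has the difficulty reversed. Letting the adversary see $\pi_t$ is what makes necessity easy, as your own argument shows. It is the oblivious case that needs the minimax step or an explicit point.
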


\begin{proof}
The cone $\mathcal S_a$ can be written as
\[
\mathcal S_a
=
\left\{
u\in\R^3:
-(1, -a,0)\cdot u\le0,\;
-(1,0,-a)\cdot u\le0
\right\}.
\]
Hence every supporting normal in the polar cone may be written as
\[
q(\lambda_1,\lambda_2)
=
\bigl(-(\lambda_1+\lambda_2),
a\lambda_1,
a\lambda_2\bigr),
\qquad
\lambda_1,\lambda_2\ge0.
\]

By Blackwell's approachability theorem \cite{Blackwell1956}, it suffices to
verify \eqref{eq:blackwell-condition} for every such normal.

Let $\Lambda=\lambda_1+\lambda_2$.  If $\Lambda=0$, there is nothing to
prove.  Otherwise put
\[
u=\frac{\lambda_1}{\Lambda}\in[0,1].
\]
For a mixed action with probability $\pi$ of ``yes'', its inner product with
$q$ is
\[
A_+g^++A_-g^-,
\]
where
\begin{align}
A_+&=-\pi c_X\Lambda+a\lambda_1(1-\pi),\\
A_-&=-(1-\pi)c_Y\Lambda+a\lambda_2\pi.
\end{align}

The admissible adversary region
\[
[-M,M]^2\cap\{g^++g^-\ge0\}
\]
is the triangle with vertices $(M,M)$, $(M,-M)$, $(-M,M)$.  Therefore
\[
\sup_{g^++g^-\ge0}(A_+g^++A_-g^-)
=
M\max\{A_++A_-,\;A_+-A_-,\;-A_++A_-\}.
\]
This maximum is nonpositive whenever $A_+=A_-\le0$.

Choose
\begin{equation}
\pi
=
\frac{c_Y+au}{c_X+c_Y+a}.
\label{eq:blackwell-p}
\end{equation}
This is a valid probability because its numerator is positive and the
difference between its denominator and numerator is
$c_X+a(1-u)>0$.
A direct substitution gives
\begin{equation}
A_+=A_-
=
-\Lambda
\frac{a^2u^2-a^2u+c_Xc_Y}
{a+c_X+c_Y}
=
\Lambda
\frac{a^2u(1-u)-c_Xc_Y}
{a+c_X+c_Y}.
\label{eq:AequalsB}
\end{equation}
Hence $A_+=A_-\le0$ for every $u\in[0,1]$ if and only if
$a^2u(1-u)\le c_Xc_Y$ for every such $u$.  Since
$\max_{u\in[0,1]}u(1-u)=1/4$, this is equivalent to $a\le2\sqrt{c_Xc_Y}$.
Thus $\mathcal S_a$ is approachable for $a\le2\sqrt{c_Xc_Y}$.

For the converse, take the fixed adversary point
\[
(g^+,g^-)
=
c_0\left(
\sqrt{c_Y},
\sqrt{c_X}
\right),
\qquad
c_0
=
\frac{M}{\max\{\sqrt{c_X},\sqrt{c_Y}\}},
\]
which is admissible and has both coordinates positive.  Repeat this same
move for $T$ rounds, and let $p_T\in[0,1]$ be the realized empirical fraction
of ``yes'' decisions.  The normalized cumulative payoff vector has
coordinates
\[
R=p_T c_Xg^++(1-p_T)c_Yg^-,
\qquad
C_Y=(1-p_T)g^+,
\qquad
C_N=p_Tg^-.
\]
The unnormalized cumulative coordinates are exactly $T(R,C_Y,C_N)$.
Writing $D_{p_T}=\max\{C_Y,C_N\}$ gives
\[
p_T\le \frac{D_{p_T}}{g^-},
\qquad
1-p_T\le\frac{D_{p_T}}{g^+},
\]
and hence
\[
\frac{R}{D_{p_T}}
\le
c_X\frac{g^+}{g^-}
+c_Y\frac{g^-}{g^+}
=
c_X\sqrt{\frac{c_Y}{c_X}}+c_Y\sqrt{\frac{c_X}{c_Y}}
=2\sqrt{c_Xc_Y}.
\]
Moreover, uniformly over every $p_T\in[0,1]$,
\[
D_{p_T}\ge\frac{g^+g^-}{g^++g^-}=:d_0>0.
\]
Consequently, for every $a>2\sqrt{c_Xc_Y}$, the normalized payoff violates at
least one defining inequality of $\mathcal S_a$ by
\[
aD_{p_T}-R
\ge
\bigl(a-2\sqrt{c_Xc_Y}\bigr)d_0>0.
\]
Its distance from the cone is therefore bounded away from zero uniformly in
$T$, and the corresponding violation for the cumulative payoff grows
linearly with $T$.  Thus $\mathcal S_a$ cannot be approached.
\end{proof}

\begin{corollary}[Buchbinder--Feldman asymmetry]
\label{cor:BF-asym}
For every $r>0$, setting $(c_X,c_Y)=(1/r,\,r)$ gives $a_{\max}(1/r,r)=2$.
\end{corollary}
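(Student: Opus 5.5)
This is a direct specialization of Theorem~\ref{thm:balance}, so the plan is to substitute the weights and simplify. Fix $r>0$ and set $c_X=1/r$ and $c_Y=r$. Both weights are strictly positive, so the hypotheses of the weighted balance game, and hence of Theorem~\ref{thm:balance}, are met for every such $r$. The theorem gives $a_{\max}(c_X,c_Y)=2\sqrt{c_Xc_Y}$. Here $c_Xc_Y=(1/r)\cdot r=1$, so $a_{\max}(1/r,r)=2\sqrt1=2$, independently of $r$.

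For completeness, I would also record what the explicit strategy \eqref{eq:blackwell-p} becomes at the critical constant $a=2$:
\[
\pi=\frac{r+2u}{1/r+r+2}=\frac{r(r+2u)}{(r+1)^2}.
\]
With this choice, \eqref{eq:AequalsB} reduces to
\[
A_+=A_-=\Lambda\,\frac{4u(1-u)-1}{(r+1)^2/r}=-\Lambda\,\frac{r(2u-1)^2}{(r+1)^2}\le0.
\]
This exhibits the halfspace condition uniformly in $u$ without appealing back to the general statement. The left-over factor $(2u-1)^2$ also shows directly that the constant is tight at $u=1/2$, consistent with the converse part of Theorem~\ref{thm:balance}.

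There is no genuine obstacle; the only point needing care is that $r>0$ keeps both weights positive, so the theorem applies for every asymmetry parameter and the geometric-mean normalization $c_Xc_Y=1$ fixes the constant at $2$.
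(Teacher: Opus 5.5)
Your proposal is correct and is the same argument the paper relies on: the corollary is immediate from Theorem~\ref{thm:balance}, since $c_Xc_Y=(1/r)\cdot r=1$ gives $a_{\max}(1/r,r)=2$ for every $r>0$. Your specialization of \eqref{eq:AequalsB} to $-\Lambda\, r(2u-1)^2/(r+1)^2$ at $a=2$ is also correct, but the corollary does not need it.
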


This specialization is the coefficient-transfer step.  With
$(c_X,c_Y)=(1/r,r)$ and $a=2$, the weighted Double-Greedy reduction below
divides by
\[
a+c_X+c_Y=2+r+1/r=\frac{(1+r)^2}{r}
\]
and therefore assigns the three coefficients
\begin{equation}
u_r=\frac{2r}{(1+r)^2},
\qquad
v_r=\frac{r^2}{(1+r)^2},
\qquad
w_r=\frac{1}{(1+r)^2}
\label{eq:uvw}
\end{equation}
to the comparator, all-ones, and all-zeros values, respectively.  These are
the asymmetric offline box coefficients needed by the $0.401$ construction.
Theorem~\ref{thm:box} proves that they hold cumulatively online, and
Proposition~\ref{prop:agreement} shows algebraically that their composition
with the outer certificate has the same value as the offline coefficient
program.

\begin{remark}
The distinction between the preceding proof and the invalid one-shot
argument is important.  The learner chooses $\pi_t$ \emph{before} observing
$(g^+_t,g^-_t)$.  Blackwell's theorem is what converts the
support-function condition into an online strategy; the learner does not
choose $\pi$ as a function of the current adversary point.
\end{remark}

\begin{remark}[What is new here]
\label{rem:balance-novelty}
Because the coefficient frontier that this theorem eventually delivers is
inherited from offline work \cite{MualemFeldman2022,BF2024}, it is worth
being exact about what Theorem~\ref{thm:balance} adds and what it does not.

\emph{New:} the weighted game itself, in which the two decisions earn
marginals at unequal rates $c_X\ne c_Y$ (the symmetric case
$c_X=c_Y=1/2$, where $a_{\max}=1$, is the balance subproblem of Roughgarden
and Wang \cite{RW2018}); the exact constant
$2\sqrt{c_Xc_Y}$, proved in both directions, so that the theorem also
certifies that no larger constant is approachable and hence that the box
coefficients of Theorem~\ref{thm:box} cannot be improved by reweighting
(Remark~\ref{rem:modularity}); the support-function computation that
establishes Blackwell's halfspace condition \emph{uniformly} over admissible
adversary moves, which is what makes the constant survive an adversary who
sees the announced mixed action; and the explicit
strategy \eqref{eq:explicit-p}, whose per-round cost is a
constant-dimensional quadratic program independent of $d$ and $m$.

\emph{Not new:} the triple $(u_r,v_r,w_r)$ that the constant $a=2$ produces at
$(c_X,c_Y)=(1/r,r)$, which is the offline unbalanced Double-Greedy frontier of
\cite{MualemFeldman2022,BF2024}; the Double-Greedy template into which the
balance game is inserted, which is that of \cite{RW2018}; and Blackwell
approachability itself \cite{Blackwell1956}.

The content of the theorem, in one sentence, is that asymmetric weighting
costs the balance constant exactly its geometric mean, so that at the
normalization $(1/r,r)$ used offline it costs nothing at all.
\end{remark}

\paragraph{Explicit asymmetric balancer.}
\label{sec:explicit-balancer}

For completeness, the Blackwell argument yields an efficient implementation
rather than only an existence result.  Let
\[
U_t
=
\begin{cases}
(c_Xg^+_t,0,g^-_t),&\text{if ``yes'' is selected},\\
(c_Yg^-_t,g^+_t,0),&\text{if ``no'' is selected}.
\end{cases}
\]
Set $\bar U_0=\zero\in\R^3$ and choose any fixed $\pi_1\in[0,1]$.  For
$t\ge2$, let
\[
\bar U_{t-1}=\frac1{t-1}\sum_{j<t}U_j
\]
and let $\Pi_{\mathcal S_a}(\bar U_{t-1})$ be the Euclidean projection onto
$\mathcal S_a$.  If
\[
q_{t-1}
=
\bar U_{t-1}-\Pi_{\mathcal S_a}(\bar U_{t-1})
\neq0,
\]
write
\[
q_{t-1}
=
\bigl(-(\lambda_1+\lambda_2),
a\lambda_1,
a\lambda_2\bigr),
\qquad
\lambda_1,\lambda_2\ge0,
\]
and set $u_{t-1}=\lambda_1/(\lambda_1+\lambda_2)$.  The probability of
selecting ``yes'' is
\begin{equation}
\pi_t
=
\frac{c_Y+a u_{t-1}}
{c_X+c_Y+a}.
\label{eq:explicit-p}
\end{equation}
When $q_{t-1}=0$, choose any fixed $\pi_t\in[0,1]$.

The projection onto $\mathcal S_a$, and hence the decomposition of its normal
into $(\lambda_1,\lambda_2)$, is a constant-dimensional convex quadratic
program.  Therefore the balancer runs in time polynomial in the input size,
with a per-round cost independent of $d$ and $m$.

\begin{proposition}[Efficient weighted balancer]
\label{prop:efficient-balancer}
If $a\le2\sqrt{c_Xc_Y}$, the strategy \eqref{eq:explicit-p} approaches
$\mathcal S_a$ with
\[
\dist
\left(
\frac1T\sum_{t=1}^T U_t,\;\mathcal S_a
\right)
=
O\!\left(\frac{M}{\sqrt T}\right)
\]
in expectation and therefore has weighted balance regret $O(M\sqrt T)$.
The hidden constant depends only on $a,c_X,c_Y$.
\end{proposition}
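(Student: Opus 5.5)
The plan is to run the standard quantitative form of Blackwell's approachability argument, using the halfspace computation from the proof of Theorem~\ref{thm:balance} at each round. Write $\bar U_t$ for the running average, $P_t=\Pi_{\mathcal S_a}(\bar U_t)$, and $q_t=\bar U_t-P_t$. Since $\mathcal S_a$ is a closed convex cone, $q_t$ lies in the polar cone $\mathcal S_a^\circ$ and $q_t\cdot P_t=0$ by the projection characterization. Hence $q_t$ has the representation $\bigl(-(\lambda_1+\lambda_2),a\lambda_1,a\lambda_2\bigr)$ with $\lambda_1,\lambda_2\ge0$ used in \eqref{eq:explicit-p}; this is exactly the generator description of the polar cone obtained in the proof of Theorem~\ref{thm:balance}. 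The quantity to track is $\Phi_t=t^2\dist(\bar U_t,\mathcal S_a)^2$.

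\textbf{One-step recursion.} Since $P_{t-1}\in\mathcal S_a$,
\[
\Phi_t\le\norm{t\bar U_t-(t-1)P_{t-1}}_2^2=\norm{(t-1)q_{t-1}+(U_t-P_{t-1})}_2^2 .
\]
Expanding gives
\[
\Phi_t\le\Phi_{t-1}+2(t-1)\,q_{t-1}\cdot U_t+\norm{U_t-P_{t-1}}_2^2,
\]
where $q_{t-1}\cdot P_{t-1}=0$ has been used. Conditional on the history and on the adversary's pair $(g^+_t,g^-_t)$, which may depend on $\pi_t$ but not on the fresh draw, we have $\E[q_{t-1}\cdot U_t]=q_{t-1}\cdot\bar U(\pi_t;g^+_t,g^-_t)$. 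With $\pi_t$ from \eqref{eq:explicit-p}, identity \eqref{eq:AequalsB} gives $A_+=A_-\le0$ whenever $a\le2\sqrt{c_Xc_Y}$. The triangle-vertex argument then shows this inner product is $\le0$ uniformly over $\mathcal A_M$, so the cross term has nonpositive conditional expectation. This is the only place the hypothesis on $a$ enters.

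\textbf{Bounding the quadratic term.} This is the step needing care, because $P_{t-1}$ is not a priori bounded. I will use $\norm{U_t-P_{t-1}}_2\le\norm{U_t}_2+\norm{P_{t-1}}_2$ together with $\norm{P_{t-1}}_2\le\norm{\bar U_{t-1}}_2$, which holds because projection onto a cone containing $0$ is nonexpansive and fixes $0$. Every payoff vector satisfies $\norm{U_j}_2\le c\,M$ with $c=c(c_X,c_Y)$, for example $c^2=\max(c_X,c_Y)^2+1$. So the quadratic term is at most $4c^2M^2$. Taking expectations and telescoping from $\Phi_0=0$ gives $\E\Phi_T\le4c^2M^2T$. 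By Jensen, $\E\dist(\bar U_T,\mathcal S_a)\le2cM/\sqrt T$. The round $t=1$ and the case $q_{t-1}=0$ contribute no cross term, so they fit into the same bound.

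\textbf{From distance to balance regret.} Let $W=T\bar U_T$ and $W'=\Pi_{\mathcal S_a}(W)$, so $\norm{W-W'}_2=T\dist(\bar U_T,\mathcal S_a)$. Each map $u\mapsto aC_Y-R$ and $u\mapsto aC_N-R$ is linear with Euclidean norm $\sqrt{1+a^2}$ and is nonpositive on $\mathcal S_a$. Therefore the quantity inside the expectation in \eqref{eq:balance-regret} is at most $\sqrt{1+a^2}\,\norm{W-W'}_2$. This yields $\BalReg_a(T)\le2c\sqrt{1+a^2}\,M\sqrt T=O(M\sqrt T)$, with constants depending only on $a,c_X,c_Y$. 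Efficiency was already noted, since each round solves a three-dimensional projection.
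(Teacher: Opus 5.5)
Your proof is correct and follows the paper's route: the residual lies in the polar cone by Moreau's decomposition, and the identity $A_+=A_-\le0$ from the proof of Theorem~\ref{thm:balance} makes the cross term nonpositive in conditional expectation; Blackwell's squared-distance recursion then gives the $O(M/\sqrt T)$ rate, and the Lipschitz conversion gives balance regret $O(M\sqrt T)$ --- you simply write out the recursion and the constants where the paper cites Blackwell's theorem. One harmless slip: $\|t\bar U_t-(t-1)P_{t-1}\|_2$ is not equal to $\|(t-1)q_{t-1}+(U_t-P_{t-1})\|_2$ (the latter equals $\|t\bar U_t-tP_{t-1}\|_2$), but both bound $t\,\dist(\bar U_t,\mathcal S_a)$, and the first (valid because $\mathcal S_a$ is a cone) gives $\Phi_t\le\Phi_{t-1}+2(t-1)\,q_{t-1}\cdot U_t+\|U_t\|_2^2$ directly, which makes your bound on $\|P_{t-1}\|_2$ unnecessary.
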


\begin{proof}
By Moreau's decomposition for closed convex cones, the residual $q_{t-1}$
belongs to the polar cone of $\mathcal S_a$ and is a separating normal for
the target at the current average payoff.  The calculation in the proof of
Theorem~\ref{thm:balance} shows that \eqref{eq:explicit-p} makes the
conditional expected increment have nonpositive inner product with this
normal for every admissible adversary move.  Blackwell's approachability
theorem then gives the standard $O(T^{-1/2})$ approachability rate because
the payoff set is bounded.  Since membership in $\mathcal S_a$ is equivalent
to $R\ge aC_Y$ and $R\ge aC_N$, distance $O(T^{-1/2})$ from the cone is
equivalent to weighted balance regret $O(M\sqrt T)$ after multiplying by $T$.
\end{proof}

\paragraph{Balancing from noisy marginals.}
The balancer above is stated with exact feedback, but under
\eqref{eq:noise-model} a learner never sees its marginal pair; it sees an
estimate assembled from oracle calls issued after the whole round's decisions
have been committed.  The next lemma is the form in which every inner
guarantee in this paper is actually used.  It is deliberately stated with a
generic scale $\bar M$ and a generic post-decision filtration, so that the
same statement covers the direct algorithm of Section~\ref{sec:composition}
(where $\bar M=O(M_\sigma)$ and each of the two estimates is a difference of
two oracle calls), the batched algorithm of
Appendix~\ref{sec:frontier-proof}, and the
bandit algorithm of Section~\ref{sec:bandit} (where $\bar M=O(M_\sigma)$ and
the estimate additionally averages over a random injection).

\begin{lemma}[Weighted balance with unbiased estimated marginals]
\label{lem:noisy-balance}
Fix $c_X,c_Y>0$, $a\le2\sqrt{c_Xc_Y}$ and $\bar M>0$.  Suppose the true pair
$(g^+_t,g^-_t)$ belongs to $[-\bar M,\bar M]^2$, satisfies
$g^+_t+g^-_t\ge0$, and is measurable with respect to a sigma-field
$\mathcal F_t^-$ containing the past and the announced mixed action but not
the fresh binary draw $A_t$.  Draw $A_t$ conditionally independently from
that mixed action, and set
$\mathcal F_t^+=\mathcal F_t^-\vee\sigma(A_t)$.  Suppose the learner then
observes estimates
$(\widehat g^+_t,\widehat g^-_t)\in[-\bar M,\bar M]^2$ satisfying
\[
\E[(\widehat g^+_t,\widehat g^-_t)\mid\mathcal F_t^+]
=(g^+_t,g^-_t).
\]
If the Blackwell strategy \eqref{eq:explicit-p} uses the estimated payoff
vectors in its running average, then its expected weighted balance regret
\eqref{eq:balance-regret}, evaluated at the \emph{true} realized payoffs, is
$O(\bar M\sqrt T)$.
\end{lemma}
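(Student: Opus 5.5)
The plan is to rerun the standard Blackwell potential argument on the \emph{estimated} cumulative payoff. The halfspace condition of Theorem~\ref{thm:balance} is only ever invoked on the true, admissible pair, through a conditional expectation. A martingale bound then transfers the conclusion from estimated to true payoffs.

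\emph{Step 1 (set-up).} Let $\widehat U_t$ be the payoff vector built from $A_t$ and $(\widehat g^+_t,\widehat g^-_t)$: it is $(c_X\widehat g^+_t,0,\widehat g^-_t)$ on ``yes'' and $(c_Y\widehat g^-_t,\widehat g^+_t,0)$ on ``no''. Let $U_t$ be the same expression with the true pair. Write $\widehat W_t=\sum_{j\le t}\widehat U_j$ and $W_t=\sum_{j\le t}U_j$. Since $A_t$ is $\mathcal F_t^+$-measurable and $\widehat U_t$ is linear in the estimates for fixed $A_t$, unbiasedness gives $\E[\widehat U_t\mid\mathcal F_t^+]=U_t$. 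Averaging over the fresh draw then gives $\E[\widehat U_t\mid\mathcal F_t^-]=\bar U(\pi_t;g^+_t,g^-_t)$. Both $\norm{\widehat U_t}_2$ and $\norm{U_t}_2$ are at most $C\bar M$, with $C$ depending only on $c_X,c_Y$. The strategy \eqref{eq:explicit-p} computes its normal from the average $\widehat W_{t-1}/(t-1)$. Because $\mathcal S_a$ is a cone, the residual $q_{t-1}=\widehat W_{t-1}-\Pi_{\mathcal S_a}(\widehat W_{t-1})$ has the same direction, hence the same $u_{t-1}$.

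\emph{Step 2 (potential on estimated payoffs).} Set $\Phi_t=\dist(\widehat W_t,\mathcal S_a)^2$. Since $\Pi_{\mathcal S_a}(\widehat W_{t-1})$ is a feasible point,
\[
\Phi_t\le\norm{q_{t-1}+\widehat U_t}_2^2
=\Phi_{t-1}+2\,q_{t-1}\cdot\widehat U_t+\norm{\widehat U_t}_2^2 .
\]
Now condition on $\mathcal F_t^-$. Here $q_{t-1}$ and $\pi_t$ are measurable, and so is the true pair, which is admissible. The cross term therefore has conditional mean $2\,q_{t-1}\cdot\bar U(\pi_t;g^+_t,g^-_t)$. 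This is $\le0$ by the computation \eqref{eq:AequalsB} in the proof of Theorem~\ref{thm:balance}, because $q_{t-1}$ lies in the polar cone by Moreau's decomposition and $a\le2\sqrt{c_Xc_Y}$.

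This is the one delicate point, and the step I expect to be the main obstacle. The estimated pair itself need not satisfy $\widehat g^+_t+\widehat g^-_t\ge0$, so the halfspace condition must never be applied to it. It can only be applied after conditioning on $\mathcal F_t^-$, which is exactly why the lemma requires unbiasedness given $\mathcal F_t^+\supseteq\mathcal F_t^-$. Taking expectations and iterating gives $\E\Phi_T\le C^2\bar M^2T$. By Jensen, $\E\,\dist(\widehat W_T,\mathcal S_a)\le C\bar M\sqrt T$.

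\emph{Step 3 (transfer to true payoffs and conclusion).} The sequence $D_T=\widehat W_T-W_T$ is a vector martingale with respect to $(\mathcal F_t^+)$: $U_t$ is $\mathcal F_t^+$-measurable, the increments have conditional mean zero, and they are bounded by $2C\bar M$. Orthogonality of increments gives $\E\norm{D_T}_2\le(\E\norm{D_T}_2^2)^{1/2}\le 2C\bar M\sqrt T$. Distance to a convex set is $1$-Lipschitz, so
\[
\E\,\dist(W_T,\mathcal S_a)\le \E\,\dist(\widehat W_T,\mathcal S_a)+\E\norm{D_T}_2=O(\bar M\sqrt T).
\]
Finally, each violation functional $u\mapsto aC_Y-R$, $u\mapsto aC_N-R$ is linear with gradient norm $\sqrt{1+a^2}$ and is nonpositive on $\mathcal S_a$. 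Hence the integrand in \eqref{eq:balance-regret}, evaluated at $W_T$, is at most $\sqrt{1+a^2}\,\dist(W_T,\mathcal S_a)$. Taking expectations yields $\BalReg_a(T)=O(\bar M\sqrt T)$, with the constant depending only on $a,c_X,c_Y$.
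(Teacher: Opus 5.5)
Your proof is correct and follows the paper's argument. You run the squared-distance recursion on the estimated cumulative payoff and invoke the halfspace condition only on the conditional mean $\bar U(\pi_t;g^+_t,g^-_t)$ of the true admissible pair. A martingale bound on $\sum_t(\widehat U_t-U_t)$, together with $1$-Lipschitzness of the distance, then transfers the bound to the true payoffs, and your explicit conversion of distance into balance regret (via the linear violation functionals with gradient norm $\sqrt{1+a^2}$) spells out a step the paper leaves implicit.
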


\begin{proof}
Let $U_t$ and $\widehat U_t$ denote the true and estimated realized payoff
vectors.  At the start of a round, the projection residual and the mixed
action are measurable with respect to $\mathcal F_t^-$.  Conditioning first
on $\mathcal F_t^+$ and then on $\mathcal F_t^-$ gives
\[
\E[\widehat U_t\mid\mathcal F_t^-]
=\overline U(\pi_t;g^+_t,g^-_t),
\]
where the right side is the mixed payoff from
Theorem~\ref{thm:balance}.  Its support-function calculation is uniform over
all admissible pairs and makes the inner product of this conditional mean
with the projection residual nonpositive.  Hence the usual squared-distance
recursion gives
\[
\E\dist\!\left(
\frac1T\sum_{t=1}^T\widehat U_t,\;\mathcal S_a
\right)
=O\!\left(\frac{\bar M}{\sqrt T}\right).
\]
Moreover
$\E[\widehat U_t-U_t\mid\mathcal F_t^+]=0$, so the estimation errors form a
bounded martingale-difference sequence under the post-observation
filtration, and
\[
\E\left\|\sum_{t=1}^T(\widehat U_t-U_t)\right\|_2
=O(\bar M\sqrt T).
\]
Distance to a closed set is one-Lipschitz, so transferring the bound from the
estimated cumulative payoff to the true cumulative payoff and multiplying by
$T$ proves the claim.
\end{proof}

Two features of this statement matter later.  The learner's own decision may
enter the conditioning, so the estimate is allowed to be observed only after
the draw, and indeed after \emph{all} of the round's draws.  The estimates
need not themselves be admissible: the requirement
$\widehat g^+_t+\widehat g^-_t\ge0$ is never imposed, because admissibility is
a property of the true marginals of a submodular function, and it is only the
true marginals that the reduction of Proposition~\ref{prop:weighted-reduction}
manipulates.

\paragraph{Continuous box reduction.}
\label{sec:continuous-box}

Fix $x\in K$ and define $G(a)=f(x\odot a)$.

\begin{lemma}[Box transformation]
\label{lem:box}
If $f$ is nonnegative and DR-submodular, then $G$ is nonnegative and
DR-submodular on $[0,1]^d$.  Moreover,
\[
G(o)=f(x\odot o),
\qquad
G(\one)=f(x),
\qquad
G(\zero)=f(\zero),
\]
and $\nabla G(a)=x\odot\nabla f(x\odot a)$, so
$\norm{\nabla G(a)}_2\le\norm{\nabla f(x\odot a)}_2$.
\end{lemma}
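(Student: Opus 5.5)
The plan is a direct check from the definition $G(a)=f(x\odot a)$ and the chain rule, handling each claim in turn.

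\emph{Well-definedness and nonnegativity.} Since $x\in K\subseteq[0,1]^d$ and $a\in[0,1]^d$, every coordinate satisfies $0\le x_ia_i\le x_i\le1$. Thus $x\odot a\in[0,1]^d$, so $G$ is defined on the whole box. It is nonnegative because $f\ge0$. (If needed, down-closedness also gives $x\odot a\in K$, but the statement only requires the box.)

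\emph{Special values.} These are immediate: $x\odot o$ is literally the argument, $x\odot\one=x$, and $x\odot\zero=\zero$.

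\emph{Gradient formula.} The map $a\mapsto x\odot a$ is linear with diagonal Jacobian $\operatorname{diag}(x)$. The chain rule then gives $\partial_iG(a)=x_i\,\partial_if(x\odot a)$, that is, $\nabla G(a)=x\odot\nabla f(x\odot a)$. Because $0\le x_i\le1$, each coordinate satisfies $|\partial_iG(a)|\le|\partial_if(x\odot a)|$. Summing squares yields the norm bound.

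\emph{DR-submodularity.} This is the only step with content, and it is still short. I would use the antitone-gradient definition of Section~\ref{sec:model}. Take $a\le b$ in $[0,1]^d$. Since $x\ge\zero$, we have $x\odot a\le x\odot b$ coordinatewise, so DR-submodularity of $f$ gives $\partial_if(x\odot a)\ge\partial_if(x\odot b)$ for every $i$. Multiplying by the nonnegative scalar $x_i$ preserves the inequality, so $\partial_iG(a)\ge\partial_iG(b)$.

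The only thing to watch is the degenerate case $x_i=0$. There the $i$th partial derivative of $G$ vanishes identically, and the inequality holds with equality, so no separate argument is needed. I therefore expect no real obstacle. The lemma is a bookkeeping step whose role is to certify that $G_t$ satisfies the hypotheses of the weighted USM analysis, with gradient bound $B$ inherited from $f_t$.
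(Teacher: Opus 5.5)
Your proof is correct. The special values, the chain-rule gradient formula, and the norm bound from $0\le x_i\le1$ match the paper, but your argument for DR-submodularity takes a slightly different route.

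You verify the antitone-gradient definition of Section~\ref{sec:model} directly. From $a\le b$ you get $x\odot a\le x\odot b$, and $\partial_iG(a)=x_i\,\partial_if(x\odot a)$ inherits antitonicity because $x_i\ge0$. This is exactly the argument in the proof of Lemma~\ref{lem:closure}, of which the present lemma is the case $y=x$.

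The paper instead checks the finite-increment form $G(a+\delta e_i)-G(a)\ge G(b+\delta e_i)-G(b)$. It does this by applying property (P1) of Lemma~\ref{lem:dr-toolbox} to $f$ at the ordered points $x\odot a\le x\odot b$ with the nonnegative increment $\delta x_ie_i$.

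The paper's version needs no differentiation of $G$ for this step. To return to the gradient definition, though, it implicitly uses the standard equivalence between the two characterizations: divide by $\delta$ and let $\delta\downarrow0$, with a little care on faces where $b_i=1$ and no upward increment fits in the cube.

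Your version delivers, with no limiting argument, precisely the gradient antitonicity of $G$ that Lemma~\ref{lem:count} later invokes. Its only cost is relying on the gradient formula, which the lemma asserts anyway.
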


\begin{proof}
Nonnegativity is immediate; the statement is the special case
$y=x$ of Lemma~\ref{lem:closure}, which we verify directly here.
For $a\le b$ in $[0,1]^d$, an increment
$\delta e_i$ changes the argument of $f$ by $\delta x_ie_i\ge0$, and
$x\odot a\le x\odot b$, so DR-submodularity of $f$ gives
\[
G(a+\delta e_i)-G(a)
=
f(x\odot a+\delta x_ie_i)-f(x\odot a)
\ge
f(x\odot b+\delta x_ie_i)-f(x\odot b)
=
G(b+\delta e_i)-G(b),
\]
which is the diminishing-returns inequality for $G$.  The three displayed
identities and the gradient formula follow by substitution and the chain
rule, and $x\le\one$ gives the norm bound.
\end{proof}

\paragraph{Count lifting.}

Fix $m\ge1$ and let $V_m=[d]\times[m]$.  For $S\subseteq V_m$, define
\begin{equation}
\phi_m(S)_i
=
\frac1m
\left|
\{j\in[m]:(i,j)\in S\}
\right|,
\label{eq:count-lift}
\end{equation}
and define the lifted set function $\widetilde G_m(S)=G(\phi_m(S))$.  Define
the grid
\[
\mathcal G_m
=
\left\{0,\tfrac1m,\ldots,1\right\}^d.
\]

\begin{lemma}[Count lifting is submodular]
\label{lem:count}
If $G$ is nonnegative and DR-submodular, then $\widetilde G_m$ is a
nonnegative submodular set function on $V_m$.  Furthermore, $\phi_m$ maps
$2^{V_m}$ onto $\mathcal G_m$, and
\[
\max_{S\subseteq V_m}\widetilde G_m(S)
=
\max_{a\in\mathcal G_m}G(a).
\]
\end{lemma}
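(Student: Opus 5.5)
The plan is to verify the three claims directly: nonnegativity, surjectivity of $\phi_m$ onto $\mathcal G_m$ together with equality of the maxima, and submodularity.

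Nonnegativity is immediate, since $\phi_m(S)\in[0,1]^d$ and $G\ge0$ there. For surjectivity, take $a\in\mathcal G_m$ with $a_i=k_i/m$. The set $S=\{(i,j):j\le k_i\}$ satisfies $\phi_m(S)=a$. Conversely, every $\phi_m(S)$ lies in $\mathcal G_m$ because the counts are integers in $\{0,\ldots,m\}$. The equality of maxima then follows, since $\widetilde G_m(S)=G(\phi_m(S))$ ranges over exactly $\{G(a):a\in\mathcal G_m\}$.

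For submodularity, I will prove the diminishing-marginal form. Take $S\subseteq T\subseteq V_m$ and an element $(i,j)\notin T$. I need
\[
\widetilde G_m(S\cup\{(i,j)\})-\widetilde G_m(S)\ge \widetilde G_m(T\cup\{(i,j)\})-\widetilde G_m(T).
\]
Adding $(i,j)$ to any set not containing it increases coordinate $i$ of $\phi_m$ by exactly $1/m$ and leaves all other coordinates unchanged. The label $j$ is irrelevant, because the lifting only counts. So the two sides equal $G(\phi_m(S)+\tfrac1m e_i)-G(\phi_m(S))$ and $G(\phi_m(T)+\tfrac1m e_i)-G(\phi_m(T))$. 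Since $S\subseteq T$ gives $\phi_m(S)\le\phi_m(T)$ coordinatewise, the coordinatewise diminishing-returns property of $G$ gives the inequality.

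The one point to check is feasibility of the step: $(i,j)\notin T$ means $\phi_m(T)_i\le(m-1)/m$, so the increment stays in $[0,1]^d$. The paper defines DR-submodularity through antitone gradients, so I will convert to the finite-increment form by integrating. For $y\le z$ and $\delta\ge0$ with both incremented points in the box,
\[
G(y+\delta e_i)-G(y)=\int_0^\delta\partial_iG(y+\tau e_i)\,d\tau\ge\int_0^\delta\partial_iG(z+\tau e_i)\,d\tau=G(z+\delta e_i)-G(z),
\]
where the middle inequality uses $y+\tau e_i\le z+\tau e_i$. This is the same form used in the proof of Lemma~\ref{lem:box}.

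I expect no real obstacle. The only care needed is to note that the marginal depends on the set only through its count vector, which is what makes the lifting reduce lattice submodularity to coordinatewise DR along single coordinates.
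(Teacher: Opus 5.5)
Your proposal is correct and follows the paper's proof essentially step by step. Both use the diminishing-marginal form with a $1/m$ increment in coordinate $i$ and integrate the antitone partial derivative over $[0,1/m]$. Both then use the integer count vectors to get surjectivity onto $\mathcal G_m$ and hence equality of the maxima. Your explicit check that $(i,j)\notin T$ forces $\phi_m(T)_i\le (m-1)/m$ just spells out the paper's remark that both segments remain in the unit cube.
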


\begin{proof}
Let $S\subseteq S'$ and let $e=(i,j)\notin S'$.  Adding $e$ increases
coordinate $i$ of both $\phi_m(S)$ and $\phi_m(S')$ by exactly $1/m$ and
leaves all other coordinates unchanged.  Since $\phi_m(S)\le\phi_m(S')$,
and since both segments remain in the unit cube, antitonicity of the gradient
gives, for every $\vartheta\in[0,1/m]$,
\[
\partial_iG\bigl(\phi_m(S)+\vartheta e_i\bigr)
\ge
\partial_iG\bigl(\phi_m(S')+\vartheta e_i\bigr).
\]
Integrating over $\vartheta$ yields
\[
\widetilde G_m(S\cup\{e\})-\widetilde G_m(S)
\ge
\widetilde G_m(S'\cup\{e\})-\widetilde G_m(S').
\]
This is the diminishing-marginal characterization of submodularity, so
$\widetilde G_m$ is submodular.  Every subset has an integer count between
$0$ and $m$ in each coordinate block and therefore maps to
$\mathcal G_m$.  Conversely, every grid point $a\in\mathcal G_m$ is realized
by selecting exactly $ma_i$ elements from the $i$th block.  Thus
$\phi_m(2^{V_m})=\mathcal G_m$, and the two maxima coincide.
\end{proof}

The count lifting is not interchangeable with the more obvious
\emph{max-level} lifting $\phi_m^{\max}(S)_i=m^{-1}\max\{j:(i,j)\in S\}$,
with the maximum defined as zero when the set is empty,
which fails to be submodular for non-monotone $G$.  A minimal counterexample
has $d=1$, $m=2$ and $G(t)=1-t$, which is nonnegative and concave, hence
DR-submodular: with $S=\varnothing$ and $S'=\{(1,2)\}$, adding $(1,1)$ has
marginal $G(1/2)-G(0)=-1/2$ at $S$ but $G(1)-G(1)=0$ at $S'$, violating
diminishing returns.  The mechanism of Lemma~\ref{lem:count} is precisely
that the count lifting makes the two marginals increments of the \emph{same}
size $1/m$ in the \emph{same} coordinate at comparable points, which is
exactly the DR hypothesis; under the max-level lifting the increments have
different sizes and the argument collapses.

\begin{lemma}[Grid approximation]
\label{lem:grid}
Suppose $\norm{\nabla G(a)}_2\le B_G$ throughout the box.  For every
$a\in[0,1]^d$ the downward rounding
$\bar a_i=\lfloor ma_i\rfloor/m$ satisfies $\bar a\in\mathcal G_m$,
$\bar a\le a$, and
\begin{equation}
G(\bar a)
\ge
G(a)-\frac{B_G\sqrt d}{m}.
\label{eq:griderr}
\end{equation}
\end{lemma}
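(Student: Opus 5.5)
The plan is to dispose of the two membership claims first and then derive \eqref{eq:griderr} from the mean value theorem along the straight segment joining $\bar a$ to $a$.

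\emph{Membership.} For each coordinate, $0\le a_i\le1$ gives $\lfloor ma_i\rfloor\in\{0,1,\ldots,m\}$. Hence $\bar a_i\in\{0,1/m,\ldots,1\}$, and so $\bar a\in\mathcal G_m$. Since $\lfloor y\rfloor\le y$, we get $\bar a\le a$ coordinatewise. The same fact gives the per-coordinate gap $0\le a_i-\bar a_i<1/m$.

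\emph{Value bound.} Because $[0,1]^d$ is convex, the segment $\gamma(\vartheta)=\bar a+\vartheta(a-\bar a)$, $\vartheta\in[0,1]$, stays in the box. On this segment the hypothesis $\norm{\nabla G}_2\le B_G$ applies. By the fundamental theorem of calculus,
\[
G(a)-G(\bar a)=\int_0^1\ip{\nabla G(\gamma(\vartheta))}{a-\bar a}\,d\vartheta .
\]
Cauchy--Schwarz bounds the integrand by $B_G\norm{a-\bar a}_2$. Using the coordinate gaps, $\norm{a-\bar a}_2\le\sqrt d/m$. Combining these yields $G(a)-G(\bar a)\le B_G\sqrt d/m$, which is \eqref{eq:griderr}.

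We do not need DR-submodularity or nonnegativity here; differentiability and the gradient bound suffice. The only point requiring care is that the gradient bound is assumed on the whole box, which contains the segment. So no step is a real obstacle.

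In the later application one takes $B_G=B$, via the norm bound $\norm{\nabla G(a)}_2\le\norm{\nabla f(x\odot a)}_2\le B$ from Lemma~\ref{lem:box}. This produces the $O(B\sqrt d\,T/m)$ term of \eqref{eq:main}.
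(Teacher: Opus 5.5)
Your proposal is correct and follows the paper's own argument. Downward rounding lands on the grid with $\bar a\le a$ and $\norm{a-\bar a}_2\le\sqrt d/m$, and the gradient bound along the segment then gives \eqref{eq:griderr}; the paper invokes the mean value theorem where you use the integral form with Cauchy--Schwarz, which comes to the same thing.
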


\begin{proof}
Rounding down keeps each coordinate in $[0,1]$ and gives
$\norm{a-\bar a}_2\le\sqrt d/m$, so the mean value theorem and the gradient
bound give \eqref{eq:griderr}.  Downward rounding is not needed for the
Lipschitz estimate itself; it is a canonical choice that also preserves
$\bar a\le a$, while the grid representation follows from
Lemma~\ref{lem:count}.
\end{proof}

Together, Lemmas~\ref{lem:count} and~\ref{lem:grid} give the complete
count-lifting reduction used below.  For a fixed continuous comparator
$o\in[0,1]^d$, set $\bar o_i=\lfloor mo_i\rfloor/m$ and choose, in the
$i$th block of $V_m$, any $m\bar o_i$ copies.  The resulting fixed set
$S_o$ satisfies $\phi_m(S_o)=\bar o$ and hence
\[
\widetilde G_m(S_o)=G(\bar o)
\ge G(o)-\frac{B_G\sqrt d}{m}.
\]
Thus the lifting represents every grid point exactly, preserves
submodularity, and loses at most $B_G\sqrt d/m$ per round when the continuous
comparator is rounded to the grid.

\paragraph{Online Double-Greedy on the lifted box.}
\label{sec:box-theorem}

We now give a weighted extension of the online Double-Greedy reduction of
Roughgarden and Wang \cite{RW2018}.  The sequential sets $X_i^t,Y_i^t$ follow
their template; the change is the asymmetric balance game and the retention
of both endpoint terms.

Let $V$ be a finite ground set with $|V|=n$, and let
\[
\widetilde G_1,\ldots,\widetilde G_T:2^{V}\to[0,M]
\]
be a nonanticipating sequence of nonnegative submodular functions.  Namely,
conditional on the pre-round history (and any current outer state),
$\widetilde G_t$ is fixed and independent of all fresh round-$t$ binary
draws.  The value feedback used for the update is supplied only after the
final set is committed, and it is noisy: each value is a separate call to the
oracle \eqref{eq:noise-model}.
This condition does not make $\widetilde G_t$ known to the learners before
their decisions.
Relabel the elements so that $V=[n]$, and fix a comparator $S_o\subseteq V$.

For the within-round information pattern, let $\mathcal F_{t,0}$ be the
analysis sigma-field generated by the past history together with the current
function $\widetilde G_t$, which is fixed before any current binary draw.  After the first
$i$ binary decisions, let
\[
\mathcal F_{t,i}
=
\mathcal F_{t,0}\vee\sigma(B_1^t,\ldots,B_i^t),
\]
where $B_i^t\in\{\mathrm{yes},\mathrm{no}\}$ is learner $i$'s current draw.
The sigma-field $\mathcal F_{t,0}$ is used only for analysis: the learners do
not observe $\widetilde G_t$ when choosing their current probabilities.  Learner $i$ computes its mixed action from its
own past observations, so its probability is
$\mathcal F_{t,i-1}$-measurable, and its fresh draw $B_i^t$ is conditionally
independent of $\mathcal F_{t,i-1}$ given that probability.

Run one weighted balance learner $\mathcal L_i$ for every $i\in V$.  In round
$t$, the balance learners are queried sequentially:
\[
X_0^t=\varnothing,\qquad Y_0^t=V,
\]
and
\[
\begin{array}{c@{\qquad}cc}
&X_i^t&Y_i^t\\
\text{yes}&X_{i-1}^t\cup\{i\}&Y_{i-1}^t\\
\text{no}&X_{i-1}^t&Y_{i-1}^t\setminus\{i\}.
\end{array}
\]
At the end, $X_n^t=Y_n^t=:\mathrm{ALG}^t$.  The set is committed before any
value feedback from $\widetilde G_t$ is supplied.

Afterward, define the two marginals
\begin{equation}
g^{+,t}_i
=
\widetilde G_t(X_{i-1}^t\cup\{i\})
-
\widetilde G_t(X_{i-1}^t),
\qquad
g^{-,t}_i
=
\widetilde G_t(Y_{i-1}^t\setminus\{i\})
-
\widetilde G_t(Y_{i-1}^t).
\label{eq:marginals}
\end{equation}
Because $X_{i-1}^t$ and $Y_{i-1}^t$ depend only on
$B_1^t,\ldots,B_{i-1}^t$, the pair
$(g_i^{+,t},g_i^{-,t})$ is $\mathcal F_{t,i-1}$-measurable and is fixed before
the fresh draw $B_i^t$.  Thus each balance learner faces an admissible
nonanticipating adversarial move even though its pair may depend on the
earlier learners' current decisions.

These marginals are not observed.  After the round's set is committed, the
learner issues four fresh oracle calls per element and forms
\begin{equation}
\widehat g^{+,t}_i
=
\widehat{\widetilde G}_t(X_{i-1}^t\cup\{i\})
-
\widehat{\widetilde G}_t(X_{i-1}^t),
\qquad
\widehat g^{-,t}_i
=
\widehat{\widetilde G}_t(Y_{i-1}^t\setminus\{i\})
-
\widehat{\widetilde G}_t(Y_{i-1}^t),
\label{eq:marginals-hat}
\end{equation}
where each hatted symbol denotes the return value of its own call.  By
\eqref{eq:noise-model} these estimates are conditionally unbiased for
$(g_i^{+,t},g_i^{-,t})$ given everything decided in the round, and they lie in
$[-2M_\sigma,2M_\sigma]$.  Learner $\mathcal L_i$ updates on
\eqref{eq:marginals-hat}, and Lemma~\ref{lem:noisy-balance} applies to it with
$\bar M=2M_\sigma$; the sets $X_i^t,Y_i^t$ themselves are maintained
combinatorially and are never inferred from noisy values.  Since every
occurrence is a separate call, the chain values cannot be cached across
successive elements, and the round costs $4n$ calls rather than $2n+2$.
For learner $\mathcal L_i$, let $C_{Y,i}^t$ and $C_{N,i}^t$ denote the two
missed-opportunity components generated from
$(g^{+,t}_i,g^{-,t}_i)$ as in the weighted balance game.

\begin{proposition}[Weighted online Double-Greedy reduction]
\label{prop:weighted-reduction}
Suppose each balance learner has weighted balance regret at most $g(T)$
at parameter $a$ in the sense of \eqref{eq:balance-regret}, where the regret
is evaluated at the true realized payoffs even if the learner updates on the
estimates \eqref{eq:marginals-hat}.  Then
\begin{equation}
\begin{aligned}
&(a+c_X+c_Y)
\E\left[\sum_{t=1}^T\widetilde G_t(\mathrm{ALG}^t)\right]
\\
&\qquad\ge
a\E\left[\sum_{t=1}^T\widetilde G_t(S_o)\right]
+
c_X\E\left[\sum_{t=1}^T\widetilde G_t(\varnothing)\right]
+
c_Y\E\left[\sum_{t=1}^T\widetilde G_t(V)\right]
-
n\,g(T).
\end{aligned}
\label{eq:weighted-reduction}
\end{equation}
\end{proposition}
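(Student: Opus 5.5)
The proof follows the Roughgarden--Wang / Buchbinder et al.\ Double-Greedy accounting, with the two decisions weighted by $c_X,c_Y$. Fix a round $t$ and the comparator $S_o$. Define the interpolating sets $O_i^t=(S_o\cup X_i^t)\cap Y_i^t$, so that $O_0^t=S_o$ and $O_n^t=\mathrm{ALG}^t$. I will track three per-element increments:
\[
\Delta X_i:=\widetilde G_t(X_i^t)-\widetilde G_t(X_{i-1}^t),\qquad
\Delta Y_i:=\widetilde G_t(Y_i^t)-\widetilde G_t(Y_{i-1}^t),
\]
\[
\Delta O_i:=\widetilde G_t(O_{i-1}^t)-\widetilde G_t(O_i^t).
\]
On a yes decision, $\Delta X_i=g_i^{+,t}$ and $\Delta Y_i=0$. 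On a no decision, $\Delta X_i=0$ and $\Delta Y_i=g_i^{-,t}$. The weighted sum $c_X\Delta X_i+c_Y\Delta Y_i$ is therefore exactly the realized reward $R_{i,t}$ of learner $\mathcal L_i$ in the weighted game.

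\textbf{Step 1: telescoping.} Summing over $i$ gives
\[
c_X\bigl(\widetilde G_t(\mathrm{ALG}^t)-\widetilde G_t(\varnothing)\bigr)+c_Y\bigl(\widetilde G_t(\mathrm{ALG}^t)-\widetilde G_t(V)\bigr)=\sum_i R_{i,t},
\]
\[
\widetilde G_t(S_o)-\widetilde G_t(\mathrm{ALG}^t)=\sum_i\Delta O_i.
\]

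\textbf{Step 2: the loss on the comparator side.} I will show pointwise, by submodularity, that $\Delta O_i\le C^t_{Y,i}$ or $\Delta O_i\le C^t_{N,i}$ according to the case.
\begin{itemize}
\item Suppose the decision is yes and $i\notin S_o$. Then $O_i=O_{i-1}\cup\{i\}$ with $O_{i-1}\subseteq Y_{i-1}\setminus\{i\}$. Submodularity gives $\Delta O_i=\widetilde G(O_{i-1})-\widetilde G(O_{i-1}+i)\le\widetilde G(Y_{i-1}-i)-\widetilde G(Y_{i-1})=g^-_i=C_{N,i}$.
\item Suppose the decision is no and $i\in S_o$. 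Then $O_i=O_{i-1}\setminus\{i\}$ with $O_{i-1}\supseteq X_{i-1}\cup\{i\}$. Submodularity gives $\Delta O_i\le g^+_i=C_{Y,i}$.
\item In the remaining cases $\Delta O_i=0$, which is at most the corresponding pile value only if that pile is nonnegative.
\end{itemize}
The third case is the delicate point. I expect to handle it using $g^++g^-\ge0$, which follows from submodularity since $X_{i-1}\subseteq Y_{i-1}\setminus\{i\}$, together with the fact that a negative pile entry only helps. Concretely, I will bound $\Delta O_i\le C_{Y,i}+C_{N,i}$ only in the two live cases, and otherwise keep the exact equality $\Delta O_i=0$. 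I will then need a per-learner inequality of the form $a\sum_t\Delta O_{i}\le a\max\{\sum C_Y,\sum C_N\}$.

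\textbf{Step 3: the main obstacle.} The balance regret controls $\max\{a\sum_tC_{Y,i,t},\,a\sum_tC_{N,i,t}\}-\sum_tR_{i,t}$. Which pile bounds $\Delta O_i$ depends on whether $i\in S_o$, and that membership is \emph{fixed across rounds}. For $i\notin S_o$, the loss is charged only on yes decisions, where it is at most $C_{N,i,t}$; on no decisions $\Delta O_i=0=C_{N,i,t}$. Hence $\sum_t\Delta O_{i,t}\le\sum_tC_{N,i,t}$ holds exactly, with no sign issue. Symmetrically, for $i\in S_o$ we get $\sum_t\Delta O_{i,t}\le\sum_tC_{Y,i,t}$. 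Thus in either case
\[
a\sum_t\Delta O_{i,t}\le\sum_tR_{i,t}+\bigl(\text{balance regret term}\bigr).
\]
Taking expectations, with the $\max$ inside, the total is at most $\E\sum_tR_{i,t}+g(T)$. This is where Lemma~\ref{lem:noisy-balance} enters: the regret is evaluated at the true payoffs even though the updates use estimates, and admissibility of $(g^+_i,g^-_i)$ follows from $\mathcal F_{t,i-1}$-measurability.

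\textbf{Step 4: assembly.} Combining Steps 1 and 3 over $i$ and $t$ yields
\[
a\,\E\sum_t\bigl(\widetilde G_t(S_o)-\widetilde G_t(\mathrm{ALG}^t)\bigr)
\le
\E\sum_t\Bigl[(c_X+c_Y)\widetilde G_t(\mathrm{ALG}^t)-c_X\widetilde G_t(\varnothing)-c_Y\widetilde G_t(V)\Bigr]+n\,g(T).
\]
Rearranging gives \eqref{eq:weighted-reduction}.
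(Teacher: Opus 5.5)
Your proposal is correct and follows the paper's proof essentially step for step: your $O_i^t=(S_o\cup X_i^t)\cap Y_i^t$ is the paper's hybrid $\mathrm{OPT}_i^t$, and your weighted increments $c_X\Delta X_i+c_Y\Delta Y_i$ are the increments of the paper's potential $\Psi_i^t$. Your Step~3 observation that membership $i\in S_o$ is fixed across rounds, so each element's hybrid loss is charged to a single pile, is exactly how the paper avoids discarding possibly negative terms; the hedging in your Step~2 about the ``remaining cases'' is unnecessary, since there the relevant pile is identically zero, as Step~3 itself shows.
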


\begin{proof}
Since $X_{i-1}^t\subseteq Y_{i-1}^t\setminus\{i\}$, submodularity gives
$g^{+,t}_i+g^{-,t}_i\ge0$, so the balance input is admissible; boundedness of
$\widetilde G_t$ by $M$ places both marginals in $[-M,M]$.  By the filtration
defined above, this admissible pair is fixed conditional on
$\mathcal F_{t,i-1}$ before learner $i$ draws $B_i^t$.  The weighted balance
guarantee therefore applies to every learner separately despite the acyclic
within-round dependence on earlier learners.  Everything below is written in
terms of the true marginals and the true realized payoffs $R_i^t,C_{Y,i}^t,
C_{N,i}^t$; the estimates \eqref{eq:marginals-hat} enter only through the
hypothesis on $g(T)$, which Lemma~\ref{lem:noisy-balance} supplies.

Define the weighted potential
\[
\Psi_i^t
=
c_X\widetilde G_t(X_i^t)
+
c_Y\widetilde G_t(Y_i^t).
\]
If $\mathcal L_i$ answers yes, $\Psi_i^t-\Psi_{i-1}^t=c_Xg^{+,t}_i$; if it
answers no, $\Psi_i^t-\Psi_{i-1}^t=c_Yg^{-,t}_i$.  Thus the potential
increase is exactly the weighted balance reward $R^t_i$.

Next define $\mathrm{OPT}_0^t=S_o$ and let $\mathrm{OPT}_i^t$ agree with the
algorithm's first $i$ decisions and with $S_o$ on the remaining elements.
There are two wrong-decision cases.  If $i\in S_o$ and the algorithm says no,
set $A=\mathrm{OPT}_i^t=\mathrm{OPT}_{i-1}^t\setminus\{i\}$.  Since
$X_{i-1}^t\subseteq A$, submodularity gives the explicit comparison
\[
\begin{aligned}
\widetilde G_t(\mathrm{OPT}_{i-1}^t)
-\widetilde G_t(\mathrm{OPT}_i^t)
&=
\widetilde G_t(A\cup\{i\})-\widetilde G_t(A)\\
&\le
\widetilde G_t(X_{i-1}^t\cup\{i\})
-\widetilde G_t(X_{i-1}^t)
=g_i^{+,t}.
\end{aligned}
\]
This is exactly the increment of $C_{Y,i}^t$.  If $i\notin S_o$ and the
algorithm says yes, set $A=\mathrm{OPT}_{i-1}^t$, so that
$\mathrm{OPT}_i^t=A\cup\{i\}$ and
$A\subseteq Y_{i-1}^t\setminus\{i\}$.  Submodularity now gives
\[
\begin{aligned}
\widetilde G_t(\mathrm{OPT}_{i-1}^t)
-\widetilde G_t(\mathrm{OPT}_i^t)
&=
-\bigl[\widetilde G_t(A\cup\{i\})-\widetilde G_t(A)\bigr]\\
&\le
-\bigl[\widetilde G_t(Y_{i-1}^t)
-\widetilde G_t(Y_{i-1}^t\setminus\{i\})\bigr]
=g_i^{-,t}.
\end{aligned}
\]
This inequality remains valid when both sides are negative, and its
right-hand side is exactly the increment of $C_{N,i}^t$.  In all other cases
the hybrid value does not change.

The two cases cannot both occur for the same element, because the comparator
$S_o$ is the same in every round.  This matters: the marginals of a
non-monotone $\widetilde G_t$ may be negative, so discarding rounds is not
free.  If $i\in S_o$, the hybrid value changes only on rounds where
$\mathcal L_i$ answers no, and on exactly those rounds $C_{Y,i}^t=g_i^{+,t}$
while $C_{Y,i}^t=0$ on the remaining rounds; hence the round-wise bounds sum
to $\sum_tC_{Y,i}^t$ with no discarded terms.  Symmetrically, if
$i\notin S_o$ they sum to $\sum_tC_{N,i}^t$.  Therefore
\[
\sum_t
\left[
\widetilde G_t(\mathrm{OPT}_{i-1}^t)
-
\widetilde G_t(\mathrm{OPT}_{i}^t)
\right]
\le
\max\left\{
\sum_t C_{Y,i}^t,
\;
\sum_t C_{N,i}^t
\right\}.
\]
Summing this inequality over the learners makes the hybrid sequence
explicitly telescope:
\begin{equation}
\begin{aligned}
a\sum_t\left[
\widetilde G_t(S_o)-\widetilde G_t(\mathrm{ALG}^t)
\right]
&\le
a\sum_{i=1}^n
\max\left\{
\sum_t C_{Y,i}^t,\,
\sum_t C_{N,i}^t
\right\}.
\end{aligned}
\label{eq:hybrid-piles}
\end{equation}
Since \(\sum_tR_i^t=\sum_t(\Psi_i^t-\Psi_{i-1}^t)\), subtracting these
rewards from both sides of \eqref{eq:hybrid-piles} gives
\[
\begin{aligned}
&a\sum_t\left[
\widetilde G_t(S_o)-\widetilde G_t(\mathrm{ALG}^t)
\right]
-\sum_{i=1}^n\sum_tR_i^t\\
&\qquad\le
\sum_{i=1}^n
\left[
a\max\left\{\sum_tC_{Y,i}^t,\sum_tC_{N,i}^t\right\}
-\sum_tR_i^t
\right].
\end{aligned}
\]
Taking expectations and applying \eqref{eq:balance-regret} to every
\(\mathcal L_i\) bounds the right-hand side by \(ng(T)\).  Telescoping the
potentials therefore yields
\[
\E\left[
a\sum_t
\bigl(
\widetilde G_t(S_o)
-
\widetilde G_t(\mathrm{ALG}^t)
\bigr)
-
\sum_t
\bigl(
\Psi_n^t-\Psi_0^t
\bigr)
\right]
\le
ng(T).
\]
Finally,
$\Psi_0^t=c_X\widetilde G_t(\varnothing)+c_Y\widetilde G_t(V)$ and
$\Psi_n^t=(c_X+c_Y)\widetilde G_t(\mathrm{ALG}^t)$.  Rearranging gives
\eqref{eq:weighted-reduction}.
\end{proof}

The coefficient triple in the next theorem is the asymmetric offline
Box-Maximization frontier used in
\cite{MualemFeldman2022,BF2024}.  The theorem's content is that the same
triple is achievable cumulatively online, with only additive balance and
grid-discretization losses.

\begin{theorem}[Weighted online USM over the box]
\label{thm:box}
At round $t$, let $\mathcal F_t^-$ be the analysis pre-round sigma-field; in
the main composition it contains the obliviously fixed reward sequence and
the current outer state $x_t$, but not the inner learner's fresh round-$t$
randomization.  Let $G_t:[0,1]^d\to[0,M]$ be a differentiable
DR-submodular function that is $\mathcal F_t^-$-measurable, and assume the
fresh inner randomization is conditionally independent of $\mathcal F_t^-$.
This measurability is for analysis and does not make $G_t$ known to the
learner: the value feedback used for its update is supplied only after
$a_t$ is committed, and it consists of calls to the noisy oracle
\eqref{eq:noise-model} at level $\sigma$.  Suppose
$\norm{\nabla G_t(a)}_2\le B_G$ throughout the box.
For every integer $m\ge1$ and every fixed $r>0$, there is a polynomial-time
online algorithm producing $a_t\in[0,1]^d$ such that, for every fixed
comparator $o\in[0,1]^d$,
\begin{align}
\E\left[\sum_{t=1}^T G_t(a_t)\right]
\ge\;&
\frac{2r}{(1+r)^2}
\E\left[\sum_{t=1}^T G_t(o)\right]
\nonumber\\
&+
\frac{r^2}{(1+r)^2}
\E\left[\sum_{t=1}^T G_t(\one)\right]
+
\frac{1}{(1+r)^2}
\E\left[\sum_{t=1}^T G_t(\zero)\right]
\nonumber\\
&-
O(dmM_\sigma\sqrt T)
-
O\!\left(\frac{B_G\sqrt d\,T}{m}\right).
\label{eq:box-thm}
\end{align}
The lifted Double-Greedy update uses $4dm$ calls to the noisy oracle for the
current function $G_t$ per round; its remaining operations are
combinatorial or constant-dimensional convex computations.  At $\sigma=0$
the bound is the exact-feedback guarantee, with $M_\sigma=M$.
\end{theorem}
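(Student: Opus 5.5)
The plan is to compose the count lifting of Lemma~\ref{lem:count}, the weighted Double-Greedy reduction of Proposition~\ref{prop:weighted-reduction}, the noisy balancer of Lemma~\ref{lem:noisy-balance}, and the grid rounding of Lemma~\ref{lem:grid}.

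\textbf{Step 1: the algorithm and its inputs.} Set $V=V_m$, so $n=dm$. For each round define $\widetilde G_t:=(\widetilde{G_t})_m$, that is, $S\mapsto G_t(\phi_m(S))$. By Lemma~\ref{lem:count} this is a nonnegative submodular set function with values in $[0,M]$. It is $\mathcal F_t^-$-measurable, hence nonanticipating with respect to the fresh inner draws, which is exactly the hypothesis of Proposition~\ref{prop:weighted-reduction}.

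In each round, run the sequential lifted Double-Greedy with one weighted balancer~\eqref{eq:explicit-p} per element, using $(c_X,c_Y)=(1/r,r)$ and $a=2$. Corollary~\ref{cor:BF-asym} guarantees $a\le a_{\max}$. Output $a_t=\phi_m(\mathrm{ALG}^t)\in\mathcal G_m\subseteq[0,1]^d$, so that $G_t(a_t)=\widetilde G_t(\mathrm{ALG}^t)$.

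After commitment, each element's marginal pair is estimated by~\eqref{eq:marginals-hat}. Each hatted value is a fresh oracle call of $f$ at $x_t\odot\phi_m(\cdot)$, equivalently a call of $G_t$ at the lifted point. This uses $4dm$ calls per round.

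\textbf{Step 2: balance regret of each learner.} The true pair of learner $i$ is $\mathcal F_{t,i-1}$-measurable, lies in $[-M,M]^2$, and has nonnegative sum by submodularity. The estimates are conditionally unbiased given all of the round's draws, and therefore also given $\mathcal F_{t,i}$. They lie in $[-2M_\sigma,2M_\sigma]^2$.

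Lemma~\ref{lem:noisy-balance} with $\bar M=2M_\sigma$ then gives $g(T)=O(M_\sigma\sqrt T)$, evaluated at the true payoffs. The hidden constant depends only on $(a,c_X,c_Y)=(2,1/r,r)$, and $r$ is fixed.

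The point needing care is the filtration. Learner $i$'s own sigma-fields $\mathcal F_t^\mp$ in Lemma~\ref{lem:noisy-balance} must be identified with $\mathcal F_{t,i-1}$ and $\mathcal F_{t,i}$. The lemma's proof also uses conditional unbiasedness given $\mathcal F_t^+$, which holds because the oracle noise in~\eqref{eq:noise-model} is unbiased conditional on every decision of the round. Since the noise sigma-field contains $\mathcal F_{t,i}$, the tower property gives unbiasedness given $\mathcal F_{t,i}$. I expect this bookkeeping to be the main obstacle, together with confirming that the later learners' draws do not break learner $i$'s martingale structure. I would handle this by working in learner $i$'s own time index and applying the tower property.

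\textbf{Step 3: the weighted reduction.} Apply Proposition~\ref{prop:weighted-reduction} with comparator $S_o$, chosen as in the paragraph after Lemma~\ref{lem:grid} so that $\phi_m(S_o)=\bar o$. We have $\widetilde G_t(\varnothing)=G_t(\zero)$ and $\widetilde G_t(V)=G_t(\one)$. Dividing by $a+c_X+c_Y=(1+r)^2/r$ gives the coefficients $u_r,v_r,w_r$ of~\eqref{eq:uvw} on $G_t(\bar o)$, $G_t(\one)$ and $G_t(\zero)$. The additive loss is
\[
\frac{r}{(1+r)^2}\,dm\cdot O(M_\sigma\sqrt T)=O(dmM_\sigma\sqrt T).
\]

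\textbf{Step 4: grid rounding.} By Lemma~\ref{lem:grid}, $G_t(\bar o)\ge G_t(o)-B_G\sqrt d/m$. Multiplying by $u_r\le 1/2$ and summing over rounds costs $O(B_G\sqrt d\,T/m)$.

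Expectations commute with these steps because $o$ is fixed and the functions $G_t$ are $\mathcal F_t^-$-measurable. Polynomial time follows from the constant-dimensional projection in each balancer together with the combinatorial set updates. Setting $\sigma=0$ recovers the exact-feedback case.
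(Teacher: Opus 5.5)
Your proposal is correct and follows the paper's proof essentially step for step. The shared steps are: lift by $\phi_m$; run the $dm$ weighted balancers at $(c_X,c_Y,a)=(1/r,r,2)$ on the four-call estimated marginals; get $g(T)=O(M_\sigma\sqrt T)$ from Lemma~\ref{lem:noisy-balance} with $\bar M=2M_\sigma$; feed this into Proposition~\ref{prop:weighted-reduction}; divide by $(1+r)^2/r$; and pay $B_G\sqrt d/m$ per round to round $o$ down to $\bar o$. Your additional filtration bookkeeping (identifying learner $i$'s $\mathcal F^\mp$ with $\mathcal F_{t,i-1},\mathcal F_{t,i}$ and using the tower property) matches what the paper spells out in the proof of Theorem~\ref{thm:main}.
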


\begin{proof}
Lift each $G_t$ to $\widetilde G_{t,m}(S)=G_t(\phi_m(S))$.  By
Lemma~\ref{lem:count}, this is a nonnegative submodular set function on
$dm$ elements taking values in $[0,M]$.  Let
$\bar o_i=\lfloor mo_i\rfloor/m$ and choose a fixed set $S_o\subseteq V_m$
with $\phi_m(S_o)=\bar o$, which exists by Lemma~\ref{lem:count}.

Choose $(c_X,c_Y)=(1/r,\,r)$.  By Corollary~\ref{cor:BF-asym}, $a=2$ is the
largest feasible balance constant.  Every learner runs the strategy
\eqref{eq:explicit-p} on the estimated marginals \eqref{eq:marginals-hat},
which are conditionally unbiased and lie in $[-2M_\sigma,2M_\sigma]$, so
Lemma~\ref{lem:noisy-balance} with $\bar M=2M_\sigma$ gives
$g(T)=O(M_\sigma\sqrt T)$ for every balance learner; when $\sigma=0$ this is
Proposition~\ref{prop:efficient-balancer} itself.  Dividing
\eqref{eq:weighted-reduction} by $a+c_X+c_Y=2+r+1/r=(1+r)^2/r$ gives the
coefficients
\[
\frac{a}{a+c_X+c_Y}=\frac{2r}{(1+r)^2},
\qquad
\frac{c_Y}{a+c_X+c_Y}=\frac{r^2}{(1+r)^2},
\qquad
\frac{c_X}{a+c_X+c_Y}=\frac1{(1+r)^2},
\]
on $\widetilde G_{t,m}(S_o)=G_t(\bar o)$, $\widetilde G_{t,m}(V_m)=G_t(\one)$
and $\widetilde G_{t,m}(\varnothing)=G_t(\zero)$ respectively.  The total
number of balance learners is $dm$, so the accumulated balance regret is
$O(dmM_\sigma\sqrt T)$.  Finally, Lemma~\ref{lem:grid} replaces the grid comparator
$\bar o$ by the continuous comparator $o$ at a cost of at most
$B_G\sqrt d/m$ per round, that is $O(B_G\sqrt d\,T/m)$ in total.
\end{proof}

To see directly how Theorem~\ref{thm:box} produces the headline factor,
apply it to $G_t(a)=f_t(x_t\odot a)$.  The coefficients $u_r$ and $v_r$ in
\eqref{eq:uvw} multiply, respectively, the cumulative
quantities $f_t(x_t\odot o)$ and $f_t(x_t)$.  The outer certificate
\eqref{eq:decomp} contains these same quantities with debts
$\chi_s$ and $2\zeta_s$.  Mixing the inner and outer actions with probability
$\lambda$ pays both debts whenever
\[
(1-\lambda)u_r\ge\lambda\chi_s,
\qquad
(1-\lambda)v_r\ge2\lambda\zeta_s.
\]
The largest admissible mixture coefficient is therefore
\[
\lambda(s,r)=
\min\left\{
\frac{u_r}{u_r+\chi_s},
\frac{v_r}{v_r+2\zeta_s}
\right\},
\]
and the comparator coefficient is $\theta_s\lambda(s,r)$.  At the certified
parameters of Proposition~\ref{prop:optimized}, this coefficient exceeds
$0.401$.  Thus the route from the weighted decisions to the approximation
factor is explicit: $a_{\max}=2\sqrt{c_Xc_Y}$, then $a=2$, then
$(u_r,v_r,w_r)$, and finally $\theta_s\lambda(s,r)>0.401$.

\begin{corollary}[Optimized inner rate]
\label{cor:box-rate}
Taking $m=\lceil T^{1/4}\rceil$ gives
\begin{equation}
\begin{aligned}
\E\sum_{t=1}^T G_t(a_t)
\ge\;&
\frac{2r}{(1+r)^2}\E\sum_tG_t(o)
+
\frac{r^2}{(1+r)^2}\E\sum_tG_t(\one)
+
\frac1{(1+r)^2}\E\sum_tG_t(\zero)
\\
&-
O\!\left((dM_\sigma+B_G\sqrt d)\,T^{3/4}\right).
\end{aligned}
\label{eq:box-rate}
\end{equation}
\end{corollary}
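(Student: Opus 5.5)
This corollary follows by substituting $m=\lceil T^{1/4}\rceil$ into the bound \eqref{eq:box-thm} of Theorem~\ref{thm:box}. All of the structural content (the weighted reduction, Lemma~\ref{lem:noisy-balance}, the count lifting, and the grid loss) is already contained in that theorem. The coefficient triple $(u_r,v_r,w_r)$ on $G_t(o)$, $G_t(\one)$ and $G_t(\zero)$ does not depend on $m$, so only the two additive error terms need to be controlled. I would therefore invoke Theorem~\ref{thm:box} with the fixed $r$ and the given $m$, and then bound these two terms separately.

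The balance term is $O(dmM_\sigma\sqrt T)$. Since $T\ge1$, we have $m=\lceil T^{1/4}\rceil\le T^{1/4}+1\le 2T^{1/4}$. Hence $dmM_\sigma\sqrt T\le 2dM_\sigma T^{3/4}$.

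The grid term is $O(B_G\sqrt d\,T/m)$. Since $m\ge T^{1/4}$, it is at most $B_G\sqrt d\,T^{3/4}$.

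Adding the two bounds gives the single error term $O\bigl((dM_\sigma+B_G\sqrt d)T^{3/4}\bigr)$ in \eqref{eq:box-rate}, with an absolute hidden constant. This works because $r$ is fixed and the constants in Theorem~\ref{thm:box} do not depend on $m$ or $T$.

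There is no real obstacle here. The one point to check is that Theorem~\ref{thm:box} holds for every integer $m\ge1$, with constants uniform in $m$, so that letting $m$ depend on the known horizon $T$ is legitimate. This is how the theorem is stated: the $O(\cdot)$ constants come from Lemma~\ref{lem:noisy-balance}, which has constants depending only on $a,c_X,c_Y$, and from Lemma~\ref{lem:grid}, which has no hidden constant. The rounding $m\le 2T^{1/4}$ is the only remaining detail.
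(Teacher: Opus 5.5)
Your proposal is correct and matches the paper's implicit argument: the corollary is Theorem~\ref{thm:box} with $m=\lceil T^{1/4}\rceil$, and $T^{1/4}\le m\le 2T^{1/4}$ turns $dmM_\sigma\sqrt T$ and $B_G\sqrt d\,T/m$ into $O(dM_\sigma T^{3/4})$ and $O(B_G\sqrt d\,T^{3/4})$. One small inconsistency in your wording: the hidden constant is not absolute but depends on the fixed $r$, through the balancer constant at $(a,c_X,c_Y)=(2,1/r,r)$. This is harmless since $r$ is fixed, as your own justification already says.
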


\subsection{Algorithm and factor-revealing composition}
\label{sec:composition}

\begin{algorithm}[H]
\caption{Online $0.401$ composition with noisy post-decision value-oracle
feedback}
\label{alg:online-FI}
\begin{algorithmic}[1]
\Require delay $s$ with $\zeta_s,\chi_s\ge0$, asymmetry $r$, mixture
probability $\lambda$, grid size $m$, finite-difference accuracy
$\varepsilon$, noise level $\sigma\ge0$.
\State Choose any $x_1\in K$ and initialize $dm$ balance learners with
weights $(c_X,c_Y)=(1/r,r)$ and constant $a=2$.
\State Set $h=\varepsilon/2$, $Q=\lceil\varepsilon^{-1}\rceil$ and
$\Gamma_s=(1+\zeta_s)B$.
\State Set $\eta=D/\sqrt{T\,\overline V}$ if $\overline V>0$, and $\eta=0$
otherwise, where $\overline V$ is any computable upper bound on the
second-moment proxy $V_{h,Q}$ of \eqref{eq:direct-second-moment}.
\For{$t=1,\ldots,T$}
    \State Each balance learner selects its binary decision using only its
    own history; collect the resulting lifted set $S_t$ and set
    $a_t=\phi_m(S_t)$.
    \State Set $y_t=x_t\odot a_t$ and $z_t=Y_1(x_t)$.
    \State Draw $Z_t\sim\operatorname{Bernoulli}(\lambda)$ independently.
    \State Play
    \[
    p_t=
    \begin{cases}
    z_t,& Z_t=1,\\
    y_t,& Z_t=0.
    \end{cases}
    \]
    \State Obtain post-decision access to the noisy oracle for $f_t$.
    \State With $G_t(a):=f_t(x_t\odot a)$, issue four fresh oracle calls per
    lifted element and form the estimated marginals
    \eqref{eq:marginals-hat}.
    \State Estimate $\widetilde q_s(f_t,x_t)$ by \eqref{eq:direct-field}:
    one fresh difference pair per coordinate at each of the $Q$ midpoint
    nodes, and $Q$ fresh difference pairs per coordinate at $x_t$, all with
    step $h$.  Do \emph{not} clip the result; call it $\widehat q_t$.
    \State Update all balance learners using the estimated marginals.
    \State Update the outer linear-optimization state:
    $x_{t+1}=\Pi_K\bigl(x_t+\eta\widehat q_t\bigr)$.
\EndFor
\end{algorithmic}
\end{algorithm}

\begin{remark}[Why the field estimate is not clipped]
\label{rem:noclip}
With exact values one would project $\widehat q_t$ onto the ball of radius
$\Gamma_s$, which cannot increase the distance to the true field.  Under
\eqref{eq:noise-model} the same projection is a nonlinear map applied to a
random vector and would introduce a bias that is not controlled by $\sigma$
alone, so we omit it and pay instead through the second moment $V_{h,Q}$ in
the step size.  The iterate is of course still projected onto $K$ after every
update.  When $\sigma=0$ the estimate is deterministic and bounded by
$\Gamma_s+\Delta_{h,Q}$, so $\sqrt{V_{h,Q}}=O(\Gamma_s+\Delta_{h,Q})$ and the
step size agrees with the deterministic $\eta=D/(\Gamma_s\sqrt T)$ up to the
factor $1+\Delta_{h,Q}/\Gamma_s$; the extra $D\Delta_{h,Q}\sqrt T$ this can
cost is dominated by the field-bias term $D\Delta_{h,Q}T$ that is present in
any case.  Throughout the analysis we write $V_{h,Q}$ for the step-size
denominator; replacing it by any computable upper bound $\overline V\ge
V_{h,Q}$, as Algorithm~\ref{alg:online-FI} does, multiplies the outer regret
by $\sqrt{\overline V/V_{h,Q}}$ and changes nothing else.
\end{remark}

At the beginning of round $t$, the outer learner has a state $x_t\in K$ that
is $\mathcal H_{t-1}$-measurable.  Independently, the inner online
unconstrained-submodular-maximization learner has a state from previous
rounds and chooses $a_t\in[0,1]^d$ before feedback from $f_t$ is available.
Define
\[
y_t=x_t\odot a_t,
\qquad
z_t=Y_1(x_t).
\]
Both belong to $K$: the first by down-closedness since $y_t\le x_t$, the
second by Lemma~\ref{lem:traj}.

We do \emph{not} use a deterministic convex combination: the two points need
not be comparable, so directional concavity does not apply to the joining
line.  Instead, draw $Z_t\sim\operatorname{Bernoulli}(\lambda)$ and play
\begin{equation}
p_t=
\begin{cases}
z_t,&Z_t=1,\\
y_t,&Z_t=0.
\end{cases}
\label{eq:random-mixture}
\end{equation}
Then
\begin{equation}
\E\left[
f_t(p_t)\mid f_t,\mathcal H_{t-1},a_t
\right]
=
\lambda f_t(z_t)
+
(1-\lambda)f_t(y_t).
\label{eq:mixture}
\end{equation}
Crucially, the play coin $Z_t$ is not used by the outer or inner state
updates.  Hence it does not enter the construction of $G_t$ or the outer
surrogate direction.

\paragraph{Factor-revealing master inequality.}
\label{sec:master}

Fix $s$ with $\zeta_s,\chi_s\ge0$ and write
\begin{equation}
\widetilde q_t
:=
\widetilde q_s(f_t,x_t).
\label{eq:qt-def}
\end{equation}
By Lemma~\ref{lem:decomp},
\begin{equation}
f_t(z_t)
\ge
\theta_s f_t(o)
-
\chi_s f_t(x_t\odot o)
-
2\zeta_s f_t(x_t)
-
\ip{\widetilde q_t}{o-x_t}.
\label{eq:outer-round}
\end{equation}
Recall the coefficients $u_r,v_r,w_r$ from \eqref{eq:uvw}.
By Lemma~\ref{lem:box} the transformed objective $G_t$ has gradient norm at
most $B$, so with
\[
E_{\rm box}
=
O(dmM_\sigma\sqrt T)+O(B\sqrt d\,T/m)
\]
denoting the inner learning and grid-discretization error,
Theorem~\ref{thm:box} gives
\begin{equation}
\E\sum_{t=1}^T f_t(y_t)
\ge
u_r\E\sum_{t=1}^T f_t(x_t\odot o)
+
v_r\E\sum_{t=1}^T f_t(x_t)
+
w_r\sum_{t=1}^T f_t(\zero)
-
E_{\rm box}.
\label{eq:inner-round}
\end{equation}
Combining \eqref{eq:mixture}, \eqref{eq:outer-round}, and
\eqref{eq:inner-round}, and discarding the nonnegative term
$w_r\sum_tf_t(\zero)$,
\begin{equation}
\begin{aligned}
\E\sum_t f_t(p_t)
\ge\;&
\lambda \theta_s\sum_t f_t(o)
+
\left[
(1-\lambda)u_r-\lambda \chi_s
\right]
\E\sum_t f_t(x_t\odot o)
\\
&+
\left[
(1-\lambda)v_r-2\lambda \zeta_s
\right]
\E\sum_t f_t(x_t)
-
\lambda
\E\sum_t\ip{\widetilde q_t}{o-x_t}
-
E_{\rm box}.
\end{aligned}
\label{eq:combined}
\end{equation}

Choose
\begin{equation}
\lambda(s,r)
=
\min\left\{
\frac{u_r}{u_r+\chi_s},
\;
\frac{v_r}{v_r+2\zeta_s}
\right\}
\label{eq:lambda}
\end{equation}
and define the corresponding approximation factor
\begin{equation}
\boxed{
\alpha(s,r)
=
\theta_s
\min\left\{
\frac{u_r}{u_r+\chi_s},
\;
\frac{v_r}{v_r+2\zeta_s}
\right\}.
}
\label{eq:alpha-sr}
\end{equation}
Because $u_r,v_r>0$ and $\chi_s,\zeta_s\ge0$, this choice lies in $(0,1]$,
and the two coefficients multiplying the nonnegative quantities
$f_t(x_t\odot o)$ and $f_t(x_t)$ in \eqref{eq:combined} are nonnegative and
may be discarded.  Consequently
\begin{equation}
\E\sum_t f_t(p_t)
\ge
\alpha(s,r)\sum_t f_t(o)
-
\lambda(s,r)
\E\sum_t\ip{\widetilde q_t}{o-x_t}
-
E_{\rm box}.
\label{eq:master}
\end{equation}
This is the central factor-revealing expression.

\subsubsection{Noisy post-decision value-oracle implementation}
\label{sec:outer-olo}

\paragraph{Outer stochastic online linear optimization.}
By Lemma~\ref{lem:traj} we have $0\le\omega_\tau(x_t)\le1$, so under
$\sup_{t,x}\norm{\nabla f_t(x)}_2\le B$,
\begin{equation}
\norm{\widetilde q_t}_2
\le
(1+\zeta_s)B
=:\Gamma_s.
\label{eq:q-bound}
\end{equation}
The estimate $\widehat q_t$ built below is not bounded by $\Gamma_s$: a
one-sided difference of two noisy values differs from the corresponding exact
difference by up to $2\sigma/h$ in each coordinate and is itself bounded
only by $(M+2\sigma)/h$, while by Remark~\ref{rem:noclip} we do not clip
it.  The outer
layer therefore has to be run as \emph{stochastic} online gradient ascent,
with the step size governed by a second moment rather than by a deterministic
radius.  Let $\mathcal G_t$ denote the round-$t$ pre-call sigma-field of
\eqref{eq:noise-model}: it contains the history, the obliviously fixed $f_t$,
the state $x_t$, and all randomization used to choose the round's actions and
query list, but none of the round's oracle errors.  Note that $x_t$, and
hence $o-x_t$, is $\mathcal G_t$-measurable.  Given a bound $V$ with
$\E\norm{\widehat q_t}_2^2\le V$ for all $t$, the learner updates
\[
x_{t+1}
=
\Pi_K(x_t+\eta\widehat q_t),
\qquad
\eta=\frac{D}{\sqrt{TV}},
\]
and keeps the outer state fixed if $V=0$.

\begin{lemma}[Projected stochastic online linear optimization]
\label{lem:olo}
Let $\widehat q_1,\ldots,\widehat q_T$ be random vectors with
$\E\norm{\widehat q_t}_2^2\le V$ for every $t$, and let
$x_{t+1}=\Pi_K(x_t+\eta\widehat q_t)$ with $\eta=D/\sqrt{TV}$.  Then for
every $o\in K$,
\begin{equation}
\E\sum_{t=1}^T
\ip{\widehat q_t}{o-x_t}
\le
D\sqrt{TV}.
\label{eq:olo-hat}
\end{equation}
\end{lemma}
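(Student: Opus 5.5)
The plan is the standard potential argument for projected gradient ascent. Because $\Pi_K$ is nonexpansive and $o\in K$, we can compare squared distances to the comparator along the iterates. For each round,
\[
\norm{x_{t+1}-o}_2^2
\le
\norm{x_t+\eta\widehat q_t-o}_2^2
=
\norm{x_t-o}_2^2+2\eta\ip{\widehat q_t}{x_t-o}+\eta^2\norm{\widehat q_t}_2^2 .
\]
Rearranging gives the one-step bound
\[
\ip{\widehat q_t}{o-x_t}
\le
\frac{\norm{x_t-o}_2^2-\norm{x_{t+1}-o}_2^2}{2\eta}
+\frac{\eta}{2}\norm{\widehat q_t}_2^2 .
\]
This inequality holds pathwise, for every realization of the random vectors. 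No measurability or unbiasedness is used at this stage.

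Next, sum the one-step bound over $t=1,\ldots,T$. The distance terms telescope to at most $\norm{x_1-o}_2^2/(2\eta)\le D^2/(2\eta)$, since both points lie in $K$ and $D=\diam(K)$. The negative final term $-\norm{x_{T+1}-o}_2^2/(2\eta)$ is dropped.

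Then take expectations. Only the bound $\E\norm{\widehat q_t}_2^2\le V$ is needed here, which gives
\[
\E\sum_{t=1}^T\ip{\widehat q_t}{o-x_t}\le\frac{D^2}{2\eta}+\frac{\eta TV}{2}.
\]
Substituting $\eta=D/\sqrt{TV}$ makes each term equal to $D\sqrt{TV}/2$, so the sum is exactly $D\sqrt{TV}$. This is \eqref{eq:olo-hat}.

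Two bookkeeping points remain. First, the expectation of each inner product must be well defined. Since $\norm{o-x_t}_2\le D$, we have $|\ip{\widehat q_t}{o-x_t}|\le D\norm{\widehat q_t}_2$, which is integrable by the second-moment bound and Cauchy--Schwarz. Second, in the degenerate case $V=0$ we have $\widehat q_t=0$ almost surely, so the left side is $0$ and the bound holds trivially; this matches the algorithm's convention of keeping the outer state fixed.

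I do not expect a real obstacle. The one point to state carefully is that the lemma is deliberately stated for $\widehat q_t$ rather than for the true field. The transfer to $\widetilde q_t$ happens later: it uses the conditional mean of $\widehat q_t$ given $\mathcal G_t$ and the $\mathcal G_t$-measurability of $o-x_t$ to split off the bias term, and none of that belongs in this proof.
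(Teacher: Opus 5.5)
Your proposal is correct and follows the paper's proof: pathwise telescoping of $\norm{x_t-o}_2^2$ via nonexpansiveness of $\Pi_K$, then taking expectations with only the second-moment bound, then substituting $\eta=D/\sqrt{TV}$. Your integrability check and your treatment of $V=0$ are small additions that the paper leaves implicit.
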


\begin{proof}
This is the standard projected online gradient-ascent inequality
\cite{Zinkevich2003} applied to the realized vectors $\widehat q_t$:
telescoping $\norm{x_t-o}_2^2$ and using nonexpansiveness of $\Pi_K$ gives
\[
\sum_{t=1}^T\ip{\widehat q_t}{o-x_t}
\le
\frac{D^2}{2\eta}+\frac\eta2\sum_{t=1}^T\norm{\widehat q_t}_2^2.
\]
Taking expectations, bounding each second moment by $V$, and substituting
$\eta=D/\sqrt{TV}$ gives \eqref{eq:olo-hat}.  No boundedness of the
individual vectors is used.
\end{proof}

The master inequality contains the true field, so we split it into an
optimization term, a martingale term, and a bias term.  Writing
$\bar q_t:=\E[\widehat q_t\mid\mathcal G_t]$,
\begin{equation}
\sum_t\ip{\widetilde q_t}{o-x_t}
=
\sum_t\ip{\widehat q_t}{o-x_t}
+
\sum_t\ip{\bar q_t-\widehat q_t}{o-x_t}
+
\sum_t\ip{\widetilde q_t-\bar q_t}{o-x_t}.
\label{eq:field-split}
\end{equation}
The middle sum has zero expectation, because $o-x_t$ is
$\mathcal G_t$-measurable and $\E[\widehat q_t-\bar q_t\mid\mathcal G_t]=0$;
this is exactly where conditional unbiasedness of the oracle is spent.  The
last sum is bounded deterministically by
\begin{equation}
\sum_t\ip{\widetilde q_t-\bar q_t}{o-x_t}
\le
D\sum_t
\norm{\widetilde q_t-\bar q_t}_2 ,
\label{eq:qerror-linear}
\end{equation}
so only the \emph{bias} of the estimator, not its fluctuation, is charged at
the linear-in-$T$ rate.  With exact values, $\bar q_t=\widehat q_t$ and
\eqref{eq:field-split} collapses to the deterministic splitting used in the
noiseless analysis.

\paragraph{Noisy value-oracle reconstruction.}
Following the finite-difference approach to black-box continuous submodular
optimization \cite{Chen2020}, after committing to the action the learner may
call the oracle for the current $f_t$ at selected points of the cube.

Let $Q\ge1$, $h\in(0,1/2)$, and use midpoint nodes
\[
\tau_j=\frac{j-\frac12}{Q},
\qquad j=1,\ldots,Q.
\]
At $z_j=Y_{\tau_j}(x_t)$, estimate each coordinate of $\nabla f_t(z_j)$ by an
inward one-sided finite difference of step $h$.  If $(z_j)_i\ge h$, use
\[
\widehat\partial_if_t(z_j)
=\frac{\widehat f_t(z_j)-\widehat f_t(z_j-he_i)}{h};
\]
otherwise use
\[
\widehat\partial_if_t(z_j)
=\frac{\widehat f_t(z_j+he_i)-\widehat f_t(z_j)}{h}.
\]
Each of the two values is a separate oracle call, so the estimate is
conditionally unbiased for the corresponding exact finite difference and has
conditional variance at most $2\sigma^2/h^2$.  Because $h<1/2$, the selected
forward or backward point always remains in $[0,1]^d$.  We use the same rule
at $x_t$ for the term $\zeta_s\nabla f_t(x_t)$, but repeat it $Q$ times with
independent calls and average, which is what makes the noise in that term
decay at the same rate as in the path term.  The estimator is
\begin{equation}
\widehat q_t
=
\frac1Q\sum_{j=1}^Q
e^{\tau_j-1}\,\omega_{\tau_j}(x_t)\odot\widehat\nabla f_t(z_j)
\;+\;
\frac{\zeta_s}{Q}\sum_{k=1}^Q\widehat\nabla^{(k)}f_t(x_t),
\label{eq:direct-field}
\end{equation}
where $\widehat\nabla^{(k)}f_t(x_t)$ are the $Q$ independent repetitions at
$x_t$.  Note that all $2dQ+2dQ$ locations are determined by $x_t$ alone, so
the list is nonadaptive; this is used again in
Appendices~\ref{sec:blocks}--\ref{sec:frontier-proof}.

\begin{lemma}[Noisy value-oracle reconstruction]
\label{lem:value-field}
Suppose every $f_t$ is $L$-smooth on $[0,1]^d$ and the oracle satisfies
\eqref{eq:noise-model}.  For $Q\ge1$ and $h\in(0,1/2)$, the estimator
\eqref{eq:direct-field} uses $4dQ$ oracle calls and satisfies
\begin{align}
\norm{\E[\widehat q_t\mid\mathcal G_t]-\widetilde q_s(f_t,x_t)}_2
&\le
C_2\sqrt d\left(Lh+\frac{B+L}{Q}\right)
=:\Delta_{h,Q},
\label{eq:direct-bias}\\
\E\bigl[\norm{\widehat q_t}_2^2\mid\mathcal G_t\bigr]
&\le
C_3\left(
\Gamma_s^2+\Delta_{h,Q}^2+\frac{d\,\sigma^2}{Q h^2}
\right)
=:V_{h,Q},
\label{eq:direct-second-moment}
\end{align}
for universal constants $C_2,C_3$.  In particular, with
$h=\varepsilon/2$ and $Q=\lceil\varepsilon^{-1}\rceil$ the estimator uses
$O(d/\varepsilon)$ calls and has
\begin{equation}
\Delta_{h,Q}
\le
C_0(B+L)\varepsilon\sqrt d,
\qquad
V_{h,Q}
=
O\!\left(
B^2+d(B+L)^2\varepsilon^2+\frac{d\sigma^2}{\varepsilon}
\right).
\label{eq:qapprox}
\end{equation}
When $\sigma=0$ the estimator is deterministic,
$V_{h,Q}=O(\Gamma_s^2+\Delta_{h,Q}^2)$, and \eqref{eq:direct-bias} is the
exact-oracle error bound.
\end{lemma}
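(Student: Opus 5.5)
The plan is to split the estimator \eqref{eq:direct-field} into its exact-value part and its noise part. The bias bound comes from the exact part. The second moment then follows from conditional independence of the errors.

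\emph{Call count.} Each of the $Q$ path nodes needs one difference pair per coordinate, which is $2dQ$ calls. The $Q$ repetitions at $x_t$ need another $2dQ$. The total is $4dQ$. With $Q=\lceil\varepsilon^{-1}\rceil$ this is $O(d/\varepsilon)$.

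\emph{Bias.} Conditioning on $\mathcal G_t$ replaces every noisy value by its true value, since every call point is $\mathcal G_t$-measurable. So $\bar q_t$ is the same estimator computed with exact one-sided differences. I would bound $\norm{\bar q_t-\widetilde q_s(f_t,x_t)}_2$ by three errors, handled in order.

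\begin{enumerate}[leftmargin=20pt]
\item \emph{Finite-difference error.} By $L$-smoothness, each one-sided difference differs from $\partial_if_t$ at the base point by at most $Lh/2$. Using $0\le\omega\le1$ and $e^{\tau-1}\le1$, this gives $O(\sqrt d\,Lh)$ per node and also at $x_t$; the latter is multiplied by $\zeta_s$, which is an absolute constant.
\item \emph{Quadrature error.} The map $\tau\mapsto e^{\tau-1}\omega_\tau(x)\odot\nabla f(Y_\tau(x))$ is Lipschitz on each phase $[0,s)$ and $[s,1]$. Its $\tau$-derivative is bounded by $O(B)$, from the derivative of $e^{\tau-1}\omega_\tau$ (with $0\le x\le1$), plus $O(L\sqrt d)$, from $L\norm{\dot Y_\tau}_2\le L\norm{x}_2\le L\sqrt d$. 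The midpoint rule therefore loses $O((B+L\sqrt d)/Q)$ on the smooth parts.
\item \emph{Jump at $\tau=s$.} The multiplier $\omega$ jumps by $x$ at $\tau=s$, and the rule does not see it. This costs one cell of width $1/Q$ times a bounded integrand of norm $O(B)$.
\end{enumerate}

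Summing the three gives $\Delta_{h,Q}$. Absorbing $\sqrt d$ generously into $C_2\sqrt d(Lh+(B+L)/Q)$ is harmless. With $h=\varepsilon/2$ and $Q\ge\varepsilon^{-1}$ we get $\Delta\le C_0(B+L)\varepsilon\sqrt d$.

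\emph{Second moment.} Write $\widehat q_t=\bar q_t+\xi_t$ with $\E[\xi_t\mid\mathcal G_t]=0$, so that $\E\norm{\widehat q_t}^2=\norm{\bar q_t}^2+\E\norm{\xi_t}^2$.

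\begin{itemize}[leftmargin=20pt]
\item \emph{Mean part.} We have $\norm{\bar q_t}\le\Gamma_s+\Delta_{h,Q}$ by \eqref{eq:q-bound}, so $\norm{\bar q_t}^2\le2\Gamma_s^2+2\Delta_{h,Q}^2$.
\item \emph{Noise part, coordinate $i$.} The path contribution is an average of $Q$ conditionally independent differences, each weighted by at most $1$ and each with variance at most $2\sigma^2/h^2$. The independence holds because distinct calls have independent errors. Its variance is therefore at most $2\sigma^2/(Qh^2)$. The $x_t$ repetitions give the same bound times $\zeta_s^2$.
\item Summing over the $d$ coordinates gives $O(d\sigma^2/(Qh^2))$.
\end{itemize}

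Plugging in $h=\varepsilon/2$ and $Q\approx\varepsilon^{-1}$ gives $d\sigma^2/\varepsilon$ and yields \eqref{eq:qapprox}. When $\sigma=0$ we have $\xi_t=0$, and the estimator is deterministic.

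The main obstacle is the quadrature step. It needs a Lipschitz-in-$\tau$ bound for the integrand that tracks the $\sqrt d$ and $L$ dependence, and it must handle the discontinuity of $\omega$ at $\tau=s$ explicitly, since the midpoint rule only controls the smooth pieces. The approach is to treat the cell containing $s$ separately, and then bound the Lipschitz constant on each phase via the chain rule and $\norm{\dot Y}\le\sqrt d$.
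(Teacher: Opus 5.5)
Your proposal is correct and follows the paper's proof essentially step for step. Both arguments identify the conditional mean with the exact-value estimator and split its bias into three parts: the $Lh/2$ per-coordinate finite-difference error, a midpoint-rule error on each smooth phase controlled by an $O(B)+O(L\sqrt d)$ bound on the variation of $\tau\mapsto e^{\tau-1}\omega_\tau\odot\nabla f(Y_\tau)$, and an $O(B/Q)$ charge for the one cell containing the jump of $\omega$ at $\tau=s$. Both then obtain the second moment as the squared mean (norm at most $\Gamma_s+\Delta_{h,Q}$) plus a per-coordinate variance of $2\sigma^2/(Qh^2)$ coming from fresh calls. The only difference is cosmetic: you bound the integrand's variation by a direct Lipschitz argument rather than through the almost-everywhere Hessian the paper invokes, which is, if anything, slightly cleaner.
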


\begin{proof}
\emph{Bias.}  The oracle errors are conditionally centred and every value in
\eqref{eq:direct-field} is a fresh call, so
$\E[\widehat q_t\mid\mathcal G_t]$ is exactly the estimator built from exact
values.  It therefore suffices to bound the deterministic error of the latter.
$L$-smoothness bounds the error of every one-sided finite-difference
coordinate by $Lh/2$, hence the Euclidean error of a reconstructed gradient
by $Lh\sqrt d/2$.

It remains to make the quadrature bound explicit.  For fixed $f=f_t$ and
$x=x_t$, define
\[
\Phi(\tau)
=
e^{\tau-1}\omega_\tau(x)\odot\nabla f(Y_\tau(x)).
\]
On each smooth phase, $\norm{\dot Y_\tau}_2
=\norm{\omega_\tau\odot x}_2\le\sqrt d$,
$\norm{\omega_\tau}_\infty\le1$, and direct differentiation of
\eqref{eq:omega} gives $\norm{\dot\omega_\tau}_\infty\le1$.  Since an
$L$-Lipschitz gradient is differentiable almost everywhere with Hessian
operator norm at most $L$, $\Phi$ is absolutely continuous on each phase and,
almost everywhere there,
\[
\dot\Phi(\tau)
=
e^{\tau-1}
\left[
(\omega_\tau+\dot\omega_\tau)\odot\nabla f(Y_\tau)
+
\omega_\tau\odot\nabla^2f(Y_\tau)\dot Y_\tau
\right].
\]
Consequently,
\begin{equation}
\norm{\dot\Phi(\tau)}_2
\le
2B+L\sqrt d
\le
2(B+L)\sqrt d
\label{eq:integrand-derivative}
\end{equation}
on both smooth pieces.  This displays the two contributions separately: the
exponential and multiplier derivatives cost $O(B)$, while transporting the
gradient along $Y_\tau$ costs $O(L\sqrt d)$.

For every midpoint cell not containing $s$, integrating
\eqref{eq:integrand-derivative} around the midpoint bounds its quadrature
error by $O((B+L)\sqrt d/Q^2)$.  Summing these cells gives
$O((B+L)\sqrt d/Q)$.  At $s$, the trajectory is continuous and
$\omega_{s+}-\omega_{s-}=x$, so
\[
\norm{\Phi(s+)-\Phi(s-)}_2
\le B.
\]
There is at most one cell of width $1/Q$ containing this jump.  Since
$\norm{\Phi(\tau)}_2\le B$ on both sides, the integral over that cell and its
midpoint approximation differ by at most $2B/Q$.  Thus the jump contributes
the same claimed order rather than a constant error.

Combining quadrature and finite-difference errors, including the identical
finite-difference estimate of $\zeta_s\nabla f_t(x_t)$ and the elementary
bound $|\zeta_s|\le1$ from \eqref{eq:CK}--\eqref{eq:dsRs}, gives
\eqref{eq:direct-bias} with a universal $C_2$.  Averaging the $Q$
repetitions at $x_t$ does not change this bound, since all of them have the
same conditional mean.

\emph{Second moment.}  Fix a coordinate $i$.  In the path term the
multipliers satisfy $|e^{\tau_j-1}(\omega_{\tau_j}(x_t))_i|\le1$, and the $Q$
one-sided differences at the distinct nodes use disjoint sets of oracle calls,
hence are conditionally independent with variance at most $2\sigma^2/h^2$
each; the $i$th coordinate of the path term therefore has conditional
variance at most $2\sigma^2/(Qh^2)$.  The same computation applies to the
second term of \eqref{eq:direct-field}, using $|\zeta_s|\le1$ and the
independence of the $Q$ repetitions.  Summing over the $d$ coordinates gives
\[
\E\bigl[\norm{\widehat q_t-\E[\widehat q_t\mid\mathcal G_t]}_2^2
\mid\mathcal G_t\bigr]
\le
\frac{C\,d\,\sigma^2}{Qh^2},
\]
and $\norm{\E[\widehat q_t\mid\mathcal G_t]}_2\le\Gamma_s+\Delta_{h,Q}$ by
\eqref{eq:q-bound} and \eqref{eq:direct-bias}.  Adding the squared mean to
the variance proves \eqref{eq:direct-second-moment}.

\emph{Calls.}  Each of the $Q$ nodes uses $2d$ calls and the $Q$ repetitions
at $x_t$ use $2dQ$ calls, for a total of $4dQ$; at
$Q=\lceil\varepsilon^{-1}\rceil$ this is $O(d/\varepsilon)$.  Substituting
$h=\varepsilon/2$ and $Q=\lceil\varepsilon^{-1}\rceil$ into
\eqref{eq:direct-bias}--\eqref{eq:direct-second-moment} and using
$\Gamma_s\le2B$ gives \eqref{eq:qapprox}.
\end{proof}

Combining Lemma~\ref{lem:olo} with the splitting \eqref{eq:field-split} and
the bias bound \eqref{eq:qerror-linear}, the outer layer contributes
\begin{equation}
\E\sum_t\ip{\widetilde q_t}{o-x_t}
\le
D\sqrt{T\,V_{h,Q}}
+
D\,\Delta_{h,Q}\,T.
\label{eq:outer-total}
\end{equation}
At $h=\varepsilon/2$ and $Q=\lceil\varepsilon^{-1}\rceil$, \eqref{eq:qapprox}
turns this into
\[
O\!\left(
DB\sqrt T
+D(B+L)\varepsilon\sqrt{d\,T}
+D\sigma\sqrt{d\,T/\varepsilon}
\right)
+
O\!\left(D(B+L)\varepsilon\sqrt d\,T\right),
\]
whose second entry dominates the middle entry of the first; the outer cost is
therefore
\begin{equation}
O\!\left(
DB\sqrt T
+D(B+L)\varepsilon\sqrt d\,T
+D\sigma\sqrt{d\,T/\varepsilon}
\right),
\label{eq:outer-total-eps}
\end{equation}
at an outer cost of $O(d/\varepsilon)$ oracle calls per round.  The last
entry is the only place where the noise level appears, and it is the price of
dividing an $O(\sigma)$ error by the finite-difference step $h=\varepsilon/2$
before averaging $Q=\Theta(\varepsilon^{-1})$ repetitions.

\subsubsection{The approximation factor}
\label{sec:optimization}

Recall
\[
\alpha(s,r)
=
\theta_s
\min\left\{
\frac{u_r}{u_r+\chi_s},
\;
\frac{v_r}{v_r+2\zeta_s}
\right\}.
\]

\begin{lemma}[Equalizing asymmetry]
\label{lem:rs}
Fix $s$ with $\zeta_s>0$ and $\chi_s>0$.  The two terms in the minimum are
equal precisely at
\begin{equation}
\boxed{
r_s=\frac{4\zeta_s}{\chi_s}.
}
\label{eq:rs}
\end{equation}
Moreover, for $r_s\ge1$ the map $r\mapsto\alpha(s,r)$ is maximized at
$r=r_s$.
\end{lemma}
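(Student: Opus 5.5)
The plan is to reduce both terms of the minimum to explicit rational functions of $r$ and compare them directly. Using \eqref{eq:uvw}, the first term is
\[
\frac{u_r}{u_r+\chi_s}=\frac{2r}{2r+\chi_s(1+r)^2},
\]
and the second is
\[
\frac{v_r}{v_r+2\zeta_s}=\frac{r^2}{r^2+2\zeta_s(1+r)^2}.
\]
Each has the form $P/(P+c(1+r)^2)$, which is increasing in $P/(1+r)^2$. The comparison of the two terms is therefore the comparison of $2r/\chi_s$ with $r^2/(2\zeta_s)$, equivalently of $4\zeta_s$ with $\chi_s r$. For $r>0$ the two terms are equal exactly when $r=4\zeta_s/\chi_s=r_s$. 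The first term is the smaller one for $r>r_s$, and the second is the smaller one for $r<r_s$. This settles the equalization claim.

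For maximality, we study each term separately on the side where it is active.

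\emph{Second term on $(0,r_s]$.} Its value is governed by $r^2/(1+r)^2=(r/(1+r))^2$, which is strictly increasing on $(0,\infty)$. So on $(0,r_s]$ the minimum, and hence $\alpha(s,\cdot)$ since $\theta_s$ does not depend on $r$, is increasing up to $r_s$.

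\emph{First term on $[r_s,\infty)$.} Its value is governed by $2r/(1+r)^2=2/(r+2+1/r)$. This is maximized at $r=1$ and strictly decreasing on $[1,\infty)$. Under the hypothesis $r_s\ge1$, the interval $[r_s,\infty)$ lies inside $[1,\infty)$, so the minimum is decreasing there.

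Combining the two pieces, $\alpha(s,\cdot)$ increases on $(0,r_s]$ and decreases on $[r_s,\infty)$, so its maximum is attained at $r_s$. The main point needing care is exactly where $r_s\ge1$ is used: if $r_s<1$, the first term would still increase on $[r_s,1]$ and the maximum would move to $r=1$. I also need to check the signs. Since $\chi_s,\zeta_s>0$ by hypothesis, the denominators are positive and dividing by $\chi_s$ and $\zeta_s$ preserves the inequalities.
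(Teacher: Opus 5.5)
Your proof is correct and follows essentially the paper's route: both arguments reduce the branch comparison to $u_r/\chi_s$ versus $v_r/(2\zeta_s)$, which is equivalent to $4\zeta_s$ versus $r\chi_s$ since $u_r/v_r=2/r$. Both then conclude from the monotonicity of $v_r$ and the decrease of $u_r$ on $[1,\infty)$; the only cosmetic difference is that you get the latter from $u_r=2/(r+2+1/r)$, whereas the paper uses the derivative $u_r'=2(1-r)/(1+r)^3$.
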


\begin{proof}
Writing the two candidate values as $\lambda$, the conditions
$\lambda/(1-\lambda)=u_r/\chi_s$ and $\lambda/(1-\lambda)=v_r/(2\zeta_s)$
give
\[
\frac{u_r/\chi_s}{v_r/(2\zeta_s)}
=
\frac{2u_r\zeta_s}{v_r\chi_s}
=
\frac{4\zeta_s}{r\chi_s},
\]
using $u_r/v_r=2/r$.  This ratio is strictly decreasing in $r$ and equals
one exactly at $r=r_s$.  Hence the $u_r$ branch exceeds the $v_r$ branch for
$r<r_s$ and is smaller for $r>r_s$, so the minimum equals
$\theta_sv_r/(v_r+2\zeta_s)$ for $r\le r_s$ and
$\theta_su_r/(u_r+\chi_s)$ for $r\ge r_s$.  On $r\le r_s$ the selected
($v_r$) branch is increasing in $r$ because $v_r=r^2/(1+r)^2$ is increasing;
on $r\ge r_s$ the selected ($u_r$) branch is decreasing whenever $r\ge1$,
because $u_r'=2(1-r)/(1+r)^3$.  For
$r_s\ge1$ the maximum is therefore attained at $r_s$.
\end{proof}

Thus a numerical search may be reduced to one dimension.  The theorem itself
does not rely on a floating-point optimization claim: the next
proposition fixes rational parameters and certifies the displayed factor by
direct evaluation.

\begin{proposition}[Certified factor]
\label{prop:optimized}
Set
\begin{equation}
s_0=\frac{183}{500},
\qquad
r_0=\frac{541}{250},
\qquad
\lambda_0=\lambda(s_0,r_0).
\label{eq:optparams}
\end{equation}
Then $\zeta_{s_0}>0$, $\chi_{s_0}>0$, and
\begin{equation}
\alpha(s_0,r_0)>0.401.
\label{eq:alpha-star}
\end{equation}
Numerically, $\lambda_0\approx0.8038$ and
$\alpha(s_0,r_0)\approx0.40102$.  The rational parameters above are a
certified rounding of the stationary point located by the one-dimensional
search of Lemma~\ref{lem:rs}; the formal theorem uses the headline factor
$0.401$.
\end{proposition}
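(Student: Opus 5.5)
The plan is a direct, fully rigorous evaluation with interval arithmetic, requiring no optimization argument. Every quantity in \eqref{eq:alpha-sr} is an explicit elementary function of $s_0$ and $r_0$. The only transcendental inputs are $e$ and $E:=e^{s_0}=e^{183/500}$. First fix rational enclosures $e\in[e_-,e_+]$, say $[2.7182818,2.7182819]$, and $E\in[E_-,E_+]$. The latter comes from the partial Taylor sum $\sum_{k\le 8}s_0^k/k!$ together with the Lagrange remainder bound $s_0^9e^{s_0}/9!\le s_0^9\cdot 2/9!$, which pins $E\approx1.44197$ to well below $10^{-7}$. The factors $u_{r_0}=2r_0/(1+r_0)^2$ and $v_{r_0}=r_0^2/(1+r_0)^2$ are exact rationals, with $(1+r_0)^2=(791/250)^2$. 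Likewise $\kappa_{s_0}=(317/500)^2/(2e)$ involves only $e$.

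Next, I would write each structural coefficient as $e^{-1}$ times an affine function of $E$ plus the $\kappa$ term:
\[
e\,\theta_{s_0}=(2-s_0)E-1,\qquad
e\,\zeta_{s_0}=(2-s_0)E-2-\tfrac12(1-s_0)^2,\qquad
e\,\chi_{s_0}=(s_0-1)E+1+\tfrac12(1-s_0)^2 .
\]
This makes the monotonicities explicit. $\theta_{s_0}$ and $\zeta_{s_0}$ increase in $E$, while $\chi_{s_0}$ decreases in $E$; all three scale as $1/e$. From these one obtains rational lower and upper bounds for each of them. Positivity is then immediate: $e\zeta_{s_0}\approx1.634\cdot1.44197-2-0.20098\approx0.1552>0$ and $e\chi_{s_0}\approx-0.634\cdot1.44197+1.20098\approx0.2868>0$. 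This proves the first two claims.

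For \eqref{eq:alpha-star}, note that $\alpha$ is increasing in $\theta_{s_0}$ and decreasing in each of $\chi_{s_0}$ and $\zeta_{s_0}$, since each branch $u/(u+\chi)$ and $v/(v+2\zeta)$ is decreasing in its debt. Hence
\[
\alpha(s_0,r_0)\ \ge\ \underline\theta\,\min\Bigl\{\tfrac{u_{r_0}}{u_{r_0}+\overline\chi},\ \tfrac{v_{r_0}}{v_{r_0}+2\overline\zeta}\Bigr\},
\]
where the underlined and overlined quantities are the interval endpoints. Plugging in gives $\theta\approx0.49891$, $\chi\approx0.10550$ and $\zeta\approx0.05709$. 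The two branches are then $\approx0.80384$ and $\approx0.80380$, consistent with $r_0\approx r_{s_0}=4\zeta_{s_0}/\chi_{s_0}\approx2.165$ from Lemma~\ref{lem:rs}. So $\lambda_0\approx0.8038$ and $\alpha\approx0.40102$. The same evaluation yields the stated numerical values of $\lambda_0$ and $\alpha(s_0,r_0)$.

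The main obstacle is that the margin $\alpha(s_0,r_0)-0.401\approx2\times10^{-5}$ is small. The enclosures must therefore be tight enough that the endpoint-wise lower bound still exceeds $0.401$. I would carry about seven significant digits through every step. This is ample, because the propagated error is of order $10^{-7}$. I would keep every bound as an explicit rational inequality, with no floating-point claims, so that the certificate can be checked by hand or by exact rational arithmetic.
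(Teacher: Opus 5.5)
Your proposal is correct and is essentially the paper's own certificate. Both use rational enclosures of $e^{s_0}$ and of $e$ (the paper encloses $e^{-1}$ directly), the exact rationals $u_{r_0}=270500/625681$ and $v_{r_0}=292681/625681$, and the monotonicity of each branch in its debt, so that the lower endpoint of $\theta_{s_0}$ together with the upper endpoints of $\chi_{s_0},\zeta_{s_0}$ yields a rigorous lower bound above $0.401$. One small numerical slip: $e^{183/500}\approx1.4419552$, not $1.44197$, so $\theta_{s_0}\approx0.4989015$ and $r_{s_0}\approx2.1644$; this is harmless, because your Taylor enclosure produces the correct value and the true margin $\alpha(s_0,r_0)-0.401\approx2.2\times10^{-5}$ is far larger than your propagated error.
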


\begin{proof}
Substituting the rational numbers \eqref{eq:optparams} into
\eqref{eq:AB}, \eqref{eq:CK}, \eqref{eq:dsRs} and \eqref{eq:uvw} expresses
every quantity as a rational function of $e^{s_0}$ and $e^{-1}$.  We use
outward-rounded rational interval arithmetic.  Concretely, the enclosures
\[
e^{s_0}\in[1.4419552426545,1.4419552426546],
\qquad
e^{-1}\in[0.3678794411714,0.3678794411715]
\]
already suffice.  The first follows by summing the positive Taylor series
through degree $11$ and bounding its tail by the geometric majorant whose
first term is $s_0^{12}/12!$ and ratio is at most $s_0/13$.  The second
follows from the alternating Taylor partial sums of degrees $16$ and
$17$.  All endpoints are rational, so the remaining interval operations
are exact.  Substitution gives
\[
\theta_{s_0}>0.498901,
\qquad
\zeta_{s_0}\in(0.05708,\,0.05709),
\qquad
\chi_{s_0}\in(0.10549,\,0.10551).
\]
In particular $\zeta_{s_0}>0$ and $\chi_{s_0}>0$, so
Lemma~\ref{lem:decomp} applies and $\lambda(s_0,r_0)$ is well defined.  The
two box coefficients are the exact rationals
\[
u_{r_0}=\frac{270500}{625681},
\qquad
v_{r_0}=\frac{292681}{625681}.
\]
Both
$\varsigma\mapsto u_{r_0}/(u_{r_0}+\varsigma)$ and
$\varsigma\mapsto v_{r_0}/(v_{r_0}+2\varsigma)$ are decreasing, so the upper
ends of the two intervals, followed only by exact rational arithmetic, give
\[
\frac{u_{r_0}}{u_{r_0}+\chi_{s_0}}
>0.803826,
\qquad
\frac{v_{r_0}}{v_{r_0}+2\zeta_{s_0}}
>0.803800.
\]
Thus $\lambda_0>0.803800$ and
\[
\alpha(s_0,r_0)
>
0.498901\cdot0.803800
>
0.401,
\]
which proves \eqref{eq:alpha-star}.  The one-dimensional search of
Lemma~\ref{lem:rs} locates the stationary point near these parameters, but
that search is not needed for the certified guarantee.
\end{proof}

\begin{remark}
The theorem-level constant is $\alpha^\star=0.401$; the more precise number
above is used only to certify that rounded factor.  The online theorem is not
a restatement of an offline one: the objective changes adversarially, and the
action is committed before current-function value feedback is available.
\end{remark}

\subsubsection{Equality with the offline coefficient program}
\label{sec:agreement}

Our composition and the offline construction of \cite{BF2024} repay the
$f(x\oplus o)$ debt by different mechanisms.  Offline, the input point to
Box-Maximization is an approximate local maximum, which converts $f(x)$ into
$\tfrac12[f(x\oplus o)+f(x\odot o)]$ and yields the coefficients
$(4r+r^2)/(2(1+r)^2)$ on $f(x\odot o)$ and $r^2/(2(1+r)^2)$ on
$f(x\oplus o)$.  Online, no local-maximum step is available; instead the
lattice inequality \eqref{eq:lattice} trades $f(x\oplus o)$ for $2f(x)$ and a
linear residue that the outer learner absorbs.  The next proposition shows
that the two mechanisms have the same optimized value at the balanced asymmetry
associated with our certified delay.

For completeness, the offline expression used below follows directly from
the last convex combination in the proof of \cite{BF2024}.  If $\lambda$ is
the weight on the delayed-trajectory candidate (so $1-\lambda$ is the
weight on the offline box candidate), its limiting lower bound is
\[
\lambda\theta_s f(o)
+\bigl[(1-\lambda)\widehat u_r-\lambda\mu_s\bigr]f(x\odot o)
+\bigl[(1-\lambda)\widehat v_r-\lambda\zeta_s\bigr]f(x\oplus o).
\]
Making both residual coefficients nonnegative permits precisely
\[
\lambda\le
\min\left\{
\frac{\widehat u_r}{\widehat u_r+\mu_s},
\frac{\widehat v_r}{\widehat v_r+\zeta_s}
\right\},
\]
which yields the definition of $\alpha^{\rm off}$ in the proposition.

\begin{proposition}[Online/offline coefficient equality]
\label{prop:agreement}
For $s\in(0,1)$ define the limiting offline factor-revealing expression of
\cite{BF2024}, with its vanishing accuracy terms suppressed,
\[
\alpha^{\rm off}(s,r)
=
\theta_s
\min\left\{
\frac{\widehat u_r}{\widehat u_r+\mu_s},
\;
\frac{\widehat v_r}{\widehat v_r+\zeta_s}
\right\},
\qquad
\widehat u_r=\frac{4r+r^2}{2(1+r)^2},
\quad
\widehat v_r=\frac{r^2}{2(1+r)^2}.
\]
The common branch underlying the comparison is the identity
\begin{equation}
\boxed{
\frac{\widehat v_r}{\widehat v_r+\zeta_s}
=
\frac{v_r}{v_r+2\zeta_s}
}
\qquad(\widehat v_r=v_r/2).
\label{eq:common-offline-branch}
\end{equation}
Then for every $s$ with $\zeta_s>0$ and $\chi_s>0$:
\begin{enumerate}[leftmargin=20pt,label=(\roman*)]
\item both minima are attained by their second argument for small $r$ and by
      their first for large $r$, and both switch at the \emph{same} value
      $r_s=4\zeta_s/\chi_s$ of \eqref{eq:rs};
\item $\alpha(s,r)=\alpha^{\rm off}(s,r)$ for every $0<r\le r_s$, and in
      particular at $r=r_s$;
\item if $r_s\ge2$, then both expressions are maximized over $r>0$ at
      $r=r_s$, and hence
      $\max_r\alpha(s,r)=\max_r\alpha^{\rm off}(s,r)$.
\end{enumerate}
For the certified delay $s_0$ of Proposition~\ref{prop:optimized}, one has
$r_{s_0}>2$ and
\[
\boxed{
\max_{r>0}\alpha(s_0,r)
=
\max_{r>0}\alpha^{\rm off}(s_0,r)
>
0.401
}\,.
\]
\end{proposition}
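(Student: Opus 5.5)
The plan is pure algebra on the two coefficient programs. Throughout we use $\widehat v_r=v_r/2$, which is immediate from \eqref{eq:uvw}. Identity \eqref{eq:common-offline-branch} then follows by multiplying numerator and denominator of $\widehat v_r/(\widehat v_r+\zeta_s)$ by $2$. Everything else reduces to locating the switch points of the two minima and checking monotonicity of the selected branches, following the template of Lemma~\ref{lem:rs}.

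For (i), the online switch is already Lemma~\ref{lem:rs}: the ratio of the two online branches is $4\zeta_s/(r\chi_s)$, which is strictly decreasing in $r$ and equals one at $r_s$. For the offline program I would compare the odds ratios $\widehat u_r/\mu_s$ and $\widehat v_r/\zeta_s$, since $x\mapsto x/(x+c)$ is increasing in $x/c$. Their quotient is
\[
\frac{\widehat u_r\,\zeta_s}{\widehat v_r\,\mu_s}=\frac{(4+r)\zeta_s}{r\mu_s}.
\]
This is strictly decreasing in $r$ because $\zeta_s>0$. It equals one iff $4\zeta_s=r(\mu_s-\zeta_s)=r\chi_s$, that is, at the same $r_s=4\zeta_s/\chi_s$; this uses $\chi_s>0$. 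So in both programs the second argument is the minimum for $r\le r_s$ and the first for $r\ge r_s$.

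For (ii), on $0<r\le r_s$ both minima are attained by their second arguments. These agree by \eqref{eq:common-offline-branch}, and both are multiplied by the same $\theta_s$, so $\alpha(s,r)=\alpha^{\rm off}(s,r)$ there.

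For (iii), on $r\le r_s$ the common branch is increasing in $r$, since $v_r=r^2/(1+r)^2$ is increasing. On $r\ge r_s$ the online branch is decreasing once $r\ge1$ (Lemma~\ref{lem:rs}). For the offline branch I would differentiate $\widehat u_r=(4r+r^2)/(2(1+r)^2)$: the numerator of the derivative is $(4+2r)(1+r)-2(4r+r^2)=4-2r$, so $\widehat u_r$ is decreasing exactly for $r\ge2$. Hence if $r_s\ge2$, both programs are maximized at $r_s$, and by (ii) their maxima coincide.

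This monotonicity threshold is the only place where the hypothesis $r_s\ge2$ enters, rather than the weaker $r_s\ge1$ that suffices online. I expect it to be the main point needing care, because the offline $\widehat u_r$ peaks later than $u_r$.

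Finally, for $s_0$ I would reuse the certified intervals from the proof of Proposition~\ref{prop:optimized}: $\zeta_{s_0}>0.05708$ and $\chi_{s_0}<0.10551$. These give $r_{s_0}>4\cdot0.05708/0.10551>2.16>2$, so (iii) applies. Then
\[
\max_r\alpha(s_0,r)\ge\alpha(s_0,r_0)>0.401
\]
by \eqref{eq:alpha-star}, and the equality of the two maxima follows from (iii).
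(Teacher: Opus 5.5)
Your proposal is correct and follows the paper's proof essentially step for step. Like the paper, you use the odds-ratio comparison to get the common switch $r_s=4\zeta_s/\chi_s$ for both programs, the derivative of $\widehat u_r$ proportional to $2-r$ to explain why $r_s\ge2$ is needed on the offline side, and the certified interval bounds on $\zeta_{s_0},\chi_{s_0}$ to get $r_{s_0}>2$ before invoking $\alpha(s_0,r_0)>0.401$.
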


\begin{proof}
Write $\Xi(s,r)$ for the common value in
\eqref{eq:common-offline-branch}.

For the online expression, the two branches are equal precisely when
\[
\frac{u_r}{\chi_s}
=
\frac{v_r}{2\zeta_s}.
\]
Their ratio is
\[
\frac{u_r/\chi_s}{v_r/(2\zeta_s)}
=
\frac{4\zeta_s}{r\chi_s},
\]
which is strictly decreasing, exceeds one for $r<r_s$, and is below one for
$r>r_s$, where $r_s=4\zeta_s/\chi_s$.  Hence the online minimum selects its
second branch before $r_s$ and its first branch after $r_s$.

For the offline expression, the corresponding ratio is
\[
\frac{\widehat u_r/\mu_s}{\widehat v_r/\zeta_s}
=
\frac{\widehat u_r\zeta_s}{\widehat v_r\mu_s}
=
\frac{(4r+r^2)\zeta_s}{r^2\mu_s}
=
\left(1+\frac4r\right)\frac{\zeta_s}{\mu_s},
\]
which is also strictly decreasing in $r$ and equals one exactly when
$4/r=\mu_s/\zeta_s-1=(\mu_s-\zeta_s)/\zeta_s=\chi_s/\zeta_s$, that is at
$r=4\zeta_s/\chi_s=r_s$.  This proves the branch order and common switch in
part~(i).

For (ii), when $0<r\le r_s$, both minima equal their common second argument
$\Xi(s,r)$.  Thus
$\alpha(s,r)=\theta_s\Xi(s,r)=\alpha^{\rm off}(s,r)$ throughout this
interval.

For (iii), the monotonicity established in Lemma~\ref{lem:rs} shows that
$r\mapsto\alpha(s,r)$ increases up to $r_s$ and decreases after it when
$r_s\ge1$.  For the offline expression, the common second branch is likewise
increasing, while
\[
\widehat u_r=\frac{r(r+4)}{2(1+r)^2}
\]
has derivative proportional to $2-r$ and is therefore decreasing for
$r\ge2$.  If $r_s\ge2$, both minima are consequently maximized at their
common switch, with value $\theta_s\Xi(s,r_s)$.

Finally, the interval bounds in Proposition~\ref{prop:optimized} imply
\[
r_{s_0}=\frac{4\zeta_{s_0}}{\chi_{s_0}}
>
\frac{4(0.05708)}{0.10551}
>
2.
\]
Part (iii) therefore applies at $s_0$, and the strict lower
bound follows from Proposition~\ref{prop:optimized}.
\end{proof}

Proposition~\ref{prop:agreement} is the precise comparison needed here: at
the certified delay, both programs are optimized over the asymmetry at the
same switch and attain the same value above $0.401$.  It does not assert that
the programs have identical global behavior for every delay.  Only this
certified-delay comparison is needed to identify the online coefficient with
the offline benchmark; the online regret theorem itself uses the certified
inequality of Proposition~\ref{prop:optimized}.  The broader equality in
Proposition~\ref{prop:agreement} is included to explain the matching
coefficient frontier.  The offline guarantee and our theorem both use the
rounded factor $0.401$; no offline improvement is claimed or implied here.
The equality concerns only the optimized approximation coefficient; the
online algorithm, oracle model, and mechanism controlling the residual terms
are different from those of the offline algorithm.

\subsubsection{Proof of the post-decision value-oracle theorem}
\label{sec:main-proof}

\begin{proof}[Proof of Theorem~\ref{thm:main}]
Fix $s=s_0$, $r=r_0$ and $\lambda=\lambda_0$ as in
Proposition~\ref{prop:optimized}, and run Algorithm~\ref{alg:online-FI} with
the stated $m$ and $\varepsilon$.

The action is feasible because both candidates in \eqref{eq:random-mixture}
belong to $K$.  It is chosen before feedback from $f_t$ is available because
$x_t$ is determined by previous feedback and $a_t$ uses only the prior
history and fresh pre-round randomization; the current function is accessed
only afterward, through the oracle calls used to
estimate the Double-Greedy marginals, reconstruct the outer field, and update
the states.

The order of operations inside a round is what makes the noise harmless.  All
$dm$ lifted binary decisions, the mixture coin, and the played action are
committed \emph{before} any call to the round's oracle.  Only then are the
estimates \eqref{eq:marginals-hat} and \eqref{eq:direct-field} formed, from
fresh calls.  Consequently, for every lifted element $i$ the true pair
$(g_i^{+,t},g_i^{-,t})$ is $\mathcal F_{t,i-1}$-measurable and the estimate
$(\widehat g_i^{+,t},\widehat g_i^{-,t})$ is conditionally unbiased given
$\mathcal F_{t,i}$, and indeed given the whole round's decisions, even
though it is observed only after the remaining binary decisions have been
made.  Lemma~\ref{lem:noisy-balance} therefore applies to each of the $dm$
learners separately, despite the acyclic within-round dependence, and the
current objective influences only round $t+1$ and later.

Fix a comparator $o\in K$.  Theorem~\ref{thm:box} applied to
$G_t(a)=f_t(x_t\odot a)$, with $B_G\le B$ by Lemma~\ref{lem:box}, gives the
master inequality \eqref{eq:master} with
$E_{\rm box}=O(dmM_\sigma\sqrt T)+O(B\sqrt d\,T/m)$.  Bounding its linear term
by \eqref{eq:outer-total-eps} yields
\[
\begin{aligned}
\E\sum_{t=1}^T f_t(p_t)
\ge\;&
\alpha(s_0,r_0)\sum_{t=1}^T f_t(o)
-
O(DB\sqrt T)
-
O(dmM_\sigma\sqrt T)
-
O(B\sqrt d\,T/m)\\
&-
O\!\left(D(B+L)\varepsilon\sqrt d\,T\right)
-
O\!\left(D\sigma\sqrt{d\,T/\varepsilon}\right).
\end{aligned}
\]
Proposition~\ref{prop:optimized}
gives $\alpha(s_0,r_0)>\alpha^\star$, which proves \eqref{eq:main}; since the
sequence is fixed obliviously, taking
$o=o^*\in\argmax_{o\in K}\sum_tf_t(o)$ is legitimate and turns
\eqref{eq:main} into a bound on $\Reg_{\alpha^\star}(T)$.  Finally, the
inner chain uses $4dm$ oracle calls and the outer reconstruction uses
$4dQ=O(d/\varepsilon)$, proving \eqref{eq:zo-query-tradeoff}.
\end{proof}

\begin{proof}[Proof of Corollary~\ref{cor:zo-balanced}]
Substitute $m=\lceil T^{1/4}\rceil$ and $\varepsilon=T^{-1/4}$ into
\eqref{eq:main}: the inner term is $O(dM_\sigma T^{3/4})$, the grid term is
$O(B\sqrt d\,T^{3/4})$, and the field-bias term is
$O(D(B+L)\sqrt d\,T^{3/4})$.  The stochastic-field term is
$O(D\sigma\sqrt d\,T^{5/8})$, and $D\le\sqrt d$ gives
$D\sigma\sqrt d\le d\sigma\le dM_\sigma$, so it is dominated by the inner
term.  The call count is $O(d(m+\varepsilon^{-1}))=O(dT^{1/4})$.
\end{proof}

\begin{remark}[Unknown horizon]
\label{rem:doubling}
Algorithm~\ref{alg:online-FI} uses $T$ only through $\eta$, $m$ and
$\varepsilon$.  Running it on consecutive epochs of doubling length, with the
parameters recomputed and all learner states reset at the start of each
epoch, multiplies every bound above by a constant: each regret term is a
positive power of the epoch length, so its geometric sum is dominated by the
last completed epoch.  The same argument applies to the batched and bandit
algorithms;
initial epochs too short to satisfy their block-size feasibility conditions
can use any fixed feasible action and contribute only a constant-order term.
\end{remark}

\subsection{Consequences, complexity, and robustness}

\subsubsection{Why the offline factor survives online}
\label{sec:barrier}

The key difference from a round-by-round application of an offline algorithm is
that the quantities $f_t(x_t\odot o)$ and $f_t(x_t)$ need not be eliminated
pointwise.  Instead, the transformed objective
$G_t(a)=f_t(x_t\odot a)$ is given to the inner online USM learner, whose
cumulative guarantee controls these two quantities after summation over time:
\[
\E\sum_t f_t(x_t\odot a_t)
\ge
u_r\E\sum_t f_t(x_t\odot o)
+
 v_r\E\sum_t f_t(x_t)
-o(T).
\]
This is an amortized certificate: the residual terms produced by the outer
endpoint inequality are paid for by cumulative online learning rather than by
a pointwise local-optimality condition.  At the certified delay, the
resulting coefficient optimized over the asymmetry equals the corresponding
offline coefficient, as established in Proposition~\ref{prop:agreement}.

\subsubsection{Oracle complexity and timing}
\label{sec:queries}

\paragraph{Query accounting.}
The count lifting has $n_m=dm$ elements.  The sequential Double-Greedy
construction requires $O(dm)$ calls to the current-function oracle.  Under
exact feedback one would cache repeated evaluations of the same point.  The
values of the successive $X_i^t$ and $Y_i^t$ states can be reused, so
storing the two running chain values costs at most $2dm+2$ calls per round.
Under \eqref{eq:noise-model} caching is not permitted, because two
occurrences of the same geometric point must carry independent errors for
the estimates \eqref{eq:marginals-hat} to be conditionally unbiased and for
Lemma~\ref{lem:noisy-balance} to apply.  The inner layer therefore costs
$4dm$ calls per round instead of $2dm+2$: a factor of two, not a change of
order.  All updates of the lifted sets and balance learners between these
calls are purely combinatorial and use no oracle values.
Lemma~\ref{lem:value-field} uses $4dQ=O(d/\varepsilon)$ additional calls
per round.  Consequently the total oracle cost is
\begin{equation}
O\!\left(d(m+\varepsilon^{-1})\right)
\label{eq:query-complexity}
\end{equation}
calls per round, and the balanced choice $m=\Theta(T^{1/4})$,
$\varepsilon=\Theta(T^{-1/4})$ makes all discretization, learning and noise
errors $O(T^{3/4})$ with $O(dT^{1/4})$ calls per round under direct
same-round implementation.
Appendix~\ref{sec:frontier-proof} reduces the per-round budget to
$O(T^\delta)$ by spreading this list over a block.

\paragraph{Online timing.}
The dependency graph is
\[
x_t,\;a_t
\longrightarrow
p_t
\longrightarrow
f_t
\longrightarrow
(G_t,\widehat q_t)
\longrightarrow
(x_{t+1},a_{t+1}),
\]
so $x_t$ is $\mathcal H_{t-1}$-measurable, while $a_t$ and $p_t$ are
measurable with respect to $\mathcal H_{t-1}$ augmented by fresh pre-round
randomization.  The transformed objective
$G_t(a)=f_t(x_t\odot a)$ is therefore an ordinary nonanticipating online
objective sequence for the inner learner: the feedback defining $G_t$ is
supplied only
after the inner decision $a_t$ has been made.  That $G_t$ depends on the
evolving outer state creates no drift problem: under an obliviously fixed
reward sequence the outer projected update determines $x_t$ before $f_t$ is
used at round $t$, and the inner learner simply receives the resulting
current objective after its decision, exactly as required by online
unconstrained submodular maximization.  Within a round, the balance learner
$\mathcal L_i$ receives inputs that depend only on the decisions of
$\mathcal L_1,\ldots,\mathcal L_{i-1}$, so the dependency graph among
learners is acyclic, as in \cite{RW2018}.

\paragraph{Relation to the offline construction.}
The offline construction uses the same transformation $G(a)=f(x\odot a)$ and
performs unconstrained Box-Maximization on $[0,1]^d$, mapping the returned
point back by multiplication with $x$; down-closedness makes it feasible.
Our construction preserves this geometry but replaces offline
Box-Maximization by online unconstrained maximization on the sequence $G_t$.
This is not an offline-to-online black-box reduction: the outer state changes
over time, and the inner learner must track a changing sequence of
transformed objectives using decisions made before those objectives are
observed.  That the asymmetric coefficients survive is the content of
Theorem~\ref{thm:balance}; hence the online replacement incurs only an
additive learning loss.  Proposition~\ref{prop:agreement} verifies that, at
the certified delay, this replacement preserves the offline factor after
optimizing the asymmetry.

\subsubsection{Relation to the offline frontier}
\label{sec:offline-ceiling}

The usual constant-sequence argument gives the qualitative ceiling needed
here.  If an online algorithm satisfies
\[
\E\sum_{t=1}^T f(p_t)
\ge
\alpha T\max_{o\in K}f(o)-R(T)
\]
on the repeated objective \(f_t\equiv f\), then a uniformly random played
point has expected value at least
\[
\alpha\max_{o\in K}f(o)-\frac{R(T)}{T}.
\]
Thus a polynomial-time online factor with a suitably normalized
polynomial-rate regret bound would also yield the same offline factor up to
arbitrarily small error.  Appendix~\ref{app:offline-ceiling} states the
precise computational assumptions and proof.

This observation does not make \(0.401\) optimal.  It says only that an online
factor beyond the current offline record would simultaneously improve that
record.  A related multilinear-extension problem over down-closed polytopes
is value-oracle inapproximable beyond \(0.478\), already for a
partition matroid polytope; the same threshold holds for a cardinality
constraint~\cite{OveisGharanVondrak2011,Qi2024}.

\begin{remark}[Modularity]
\label{rem:modularity}
The composition is modular in both of its layers.  The outer layer consumes
only the endpoint inequality \eqref{eq:endpoint} through the three
coefficients \(\theta_s,\chi_s,\zeta_s\), and the inner layer consumes only
the triple \((u_r,v_r,w_r)\) of box coefficients.  Any improvement of either
layer, whether a sharper delayed-trajectory inequality or a box guarantee
with a better \((u_r,v_r,w_r)\) frontier, propagates through \eqref{eq:alpha-sr}
verbatim, provided the box guarantee is available in the cumulative online
form of Proposition~\ref{prop:weighted-reduction}.  Within the present
architecture the box layer is already extremal: Theorem~\ref{thm:balance}
caps the balance constant at \(2\sqrt{c_Xc_Y}\), so the coefficients of
Theorem~\ref{thm:box} cannot be improved by reweighting.
\end{remark}

\subsubsection{Where the noise is paid for}
\label{sec:noise}

It is worth collecting in one place how the oracle noise \eqref{eq:noise-model}
enters, since it is distributed across the construction rather than confined
to a single step.

\begin{enumerate}[leftmargin=22pt]
\item \emph{Inner layer.}  Each of the $dm$ balance learners is driven by an
      estimated marginal pair on the scale $M_\sigma$ instead of $M$.  By
      Lemma~\ref{lem:noisy-balance} its regret is $O(M_\sigma\sqrt T)$, so
      the inner term becomes $O(dmM_\sigma\sqrt T)$.  This is a change of
      constant, not of rate, because a balance learner's regret was already
      proportional to the range of its payoffs.
\item \emph{Outer layer.}  The field estimate acquires a variance
      $O(d\sigma^2/(Qh^2))$, which enters through the step size and
      contributes $O(D\sigma\sqrt{d\,T/\varepsilon})$ at field accuracy
      $\varepsilon$, hence $O(D\sigma\sqrt d\,T^{5/8})$ at the balanced
      choice $\varepsilon=\Theta(T^{-1/4})$.  The bias is unchanged,
      which is the essential point: only the bias is charged at rate $T$,
      while the fluctuation is charged at rate $\sqrt T$ through
      \eqref{eq:field-split}.
\item \emph{Budget.}  Caching is forbidden, which doubles the inner call
      count and leaves the order $O(d(m+\varepsilon^{-1}))$ unchanged.
\item \emph{Nothing else.}  The structural certificate of
      Section~\ref{sec:outer}, the balance geometry of
      Theorem~\ref{thm:balance}, the coefficients $(u_r,v_r,w_r)$, and the
      factor-revealing program \eqref{eq:alpha-sr} are statements about the
      true functions and are untouched by feedback noise.  This is why the
      approximation factor is exactly $\alpha^\star$ at every noise level,
      with no restriction on how $\sigma$ may depend on $T$; only the
      additive terms see $\sigma$ at all.
\end{enumerate}

The same three mechanisms recur, with the same arithmetic, in the batched
algorithm of Appendix~\ref{sec:frontier-proof} and the bandit algorithm of
Section~\ref{sec:bandit}.  There the injection sampling of
Appendix~\ref{sec:blocks} already contributes a variance term of the same
shape with $M$ in place of $\sigma$, so the oracle noise is absorbed simply by
replacing $M$ with $M_\sigma$ throughout, and no exponent changes at all.

\section{Secondary extension to one-point bandit feedback}
\label{sec:bandit}

We now assume that round $t$ returns a single scalar, the noisy value
$\widehat f_t(p_t)$ of the played point, while the learner still earns the
true value $f_t(p_t)$.  The post-decision value-oracle algorithm cannot be
used directly because its update needs many values of the same current
function.  The remedy is the block simulation of
Appendix~\ref{sec:blocks}, with one additional requirement: every query
location must itself be a feasible action.  Noise requires no separate
treatment here.  In the bandit model each label is already realized on a
randomly chosen round, so the observation is a random variable in any case;
the oracle error is a second, conditionally independent centred perturbation
of the same observation, and it enters the analysis exactly where the
injection noise does, through the observation scale $M_\sigma$ in the
second moment \eqref{eq:bandit-second-moment} and in the balance regret.  In
particular it leaves the bias \eqref{eq:bandit-bias} untouched.

\subsection{Interior geometry and feasible finite differences}

\begin{assumption}[Known positive anchor]
\label{ass:anchor}
There are a known point $\bar x\in K$ and a known $\rho>0$ such that
$\bar x_i\ge\rho$ for every $i\in[d]$.
\end{assumption}

If a coordinate is identically zero on $K$, it may be deleted before applying
the assumption.  For $\gamma\in(0,1)$, define the contracted set
\begin{equation}
K_\gamma=(1-\gamma)K+\gamma\bar x.
\label{eq:contracted-set}
\end{equation}
It is compact and convex, is contained in $K$, and every $x\in K_\gamma$
satisfies $x_i\ge\gamma\rho$.  A projection oracle for $K_\gamma$ follows
from the oracle for $K$: project $(z-\gamma\bar x)/(1-\gamma)$ onto $K$ and
apply the affine map in \eqref{eq:contracted-set}.

\begin{lemma}[Feasible probe points]
\label{lem:feasible-probes}
Let $x\in K_\gamma$ and $0<h\le\gamma\rho/2$.  Every inner query
$x\odot a$, $a\in[0,1]^d$, lies in $K$.  Moreover, for every
$z=Y_\tau(x)$ and coordinate $i$, at least one of $z-he_i$ and $z+he_i$ lies
in the box $[\zero,x]$; at $x$, the point $x-he_i$ lies in $[\zero,x]$.
Consequently all inner and outer finite-difference queries can be played in
$K$.
\end{lemma}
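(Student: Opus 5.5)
The argument is a direct coordinatewise check with three parts. The inner queries are immediate: $a\in[0,1]^d$ gives $0\le x\odot a\le x$, and $x\in K_\gamma\subseteq K$ with down-closedness puts $x\odot a\in K$. The same reasoning handles every later claim. Whenever I exhibit a probe point $w$ with $\zero\le w\le x$, it lies in $K$ and can be played. So the whole proof reduces to producing probe points in the box $[\zero,x]$.

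\emph{Outer probes at $x$.} Every $x\in K_\gamma$ satisfies $x_i\ge\gamma\rho$: write $x=(1-\gamma)y+\gamma\bar x$ with $y\in K$, so $y\ge\zero$ and $\bar x_i\ge\rho$. Since $h\le\gamma\rho/2<x_i$, the point $x-he_i$ has nonnegative $i$th coordinate. Its other coordinates coincide with those of $x$, so $x-he_i\in[\zero,x]$. This also shows the rule of Lemma~\ref{lem:value-field} picks the backward difference at $x$.

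\emph{Trajectory probes at $z=Y_\tau(x)$.} Lemma~\ref{lem:traj} gives $\zero\le z\le x$, and I split on the $i$th coordinate. If $z_i\ge h$, then $z-he_i\ge\zero$ and $z-he_i\le z\le x$, so the backward probe lies in the box. If $z_i<h$, then $z_i+h<2h\le\gamma\rho\le x_i$, so $z+he_i\le x$ coordinatewise, and it is nonnegative; hence the forward probe lies in the box. This case split coincides with the inward one-sided rule of Lemma~\ref{lem:value-field} (threshold $(z)_i\ge h$). So the probes actually issued by the estimator are exactly the feasible ones, and the box $[\zero,x]$ replaces $[0,1]^d$ there.

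The one point that needs care, and the main obstacle, is the factor $2$ in $h\le\gamma\rho/2$. It is exactly what the forward case needs: there $z_i<h$ and $h\le x_i-h$ must hold together, which requires $2h\le x_i$. The proof should also note that the condition $h<1/2$ used earlier is implied, since $\gamma\rho\le1$. Combining the three cases with down-closedness gives the final sentence of the lemma: all inner and outer finite-difference queries are feasible actions in $K$.
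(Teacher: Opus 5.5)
Your proof is correct and follows the paper's argument essentially step for step. The shared steps are down-closedness for every point of $[\zero,x]$, the bound $x_i\ge\gamma\rho\ge2h$, and the case split $z_i\ge h$ versus $z_i<h$; your added checks, deriving $x_i\ge\gamma\rho$ from the definition of $K_\gamma$ and observing that $h<1/2$ follows from $\gamma\rho<1$, are accurate details the paper leaves implicit.
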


\begin{proof}
All inner points and all trajectory points are coordinatewise between
$\zero$ and $x$, so down-closedness makes them feasible.  Since
$x_i\ge\gamma\rho\ge2h$, either $z_i\ge h$, in which case
$z-he_i\in[\zero,x]$, or $z_i<h$, in which case $z_i+h<2h\le x_i$ and
$z+he_i\in[\zero,x]$.  Finally $x_i\ge2h$ implies $x-he_i\in[\zero,x]$.
\end{proof}

For a comparator $o\in K$, let
\begin{equation}
o^\gamma=(1-\gamma)o+\gamma\bar x\in K_\gamma.
\label{eq:contracted-comparator}
\end{equation}
The gradient bound gives, on every round,
\begin{equation}
f_t(o^\gamma)
\ge f_t(o)-B\gamma\norm{\bar x-o}_2
\ge f_t(o)-B\gamma D.
\label{eq:contraction-loss}
\end{equation}

\begin{remark}[On the anchor margin]
\label{rem:anchor}
Assumption~\ref{ass:anchor} deserves a careful reading, because it is easy to
mistake for a restriction on the geometry when the real cost is quantitative.

\emph{It excludes almost no bodies.}  Delete every coordinate that is
identically zero on $K$.  Since $K$ is down-closed and nonempty, any surviving
coordinate $i$ has some $y\in K$ with $y_i>0$, and averaging $d$ such points
gives a point with all coordinates positive; down-closedness then puts the
whole box below it in $K$.  So after the deletion an anchor exists for every
$K$ we consider, and what Assumption~\ref{ass:anchor} really adds is that the
learner \emph{knows} one, together with a margin $\rho$.

\emph{The cost is the size of $\rho$.}  The quantity $\rho$ is a genuine
geometric parameter, not a constant.  For a box $K=[0,c]^d$ one may take
$\bar x=c\one$ and $\rho=c$, so $BD/\rho=B\sqrt d$, which is dominated by
every other term of Corollary~\ref{cor:bandit-rate}: the anchor costs nothing
there.  For
the scaled simplex $K=\{x\ge0:\sum_ix_i\le1\}$, however, every feasible point
has some coordinate at most $1/d$, so $\rho\le1/d$ and, since
$\diam(K)=\Theta(1)$ there, the anchor term is of order $Bd$, comparable
to the $Bd$ term already present in \eqref{eq:bandit-rate}, so on the simplex
specifically nothing is lost.  The concern is not this example but the general
one: every other coefficient in \eqref{eq:bandit-rate} is bounded by a
$d^{3/2}$-type quantity, whereas $BD/\rho$ is governed by a geometric
parameter that $d$ does not control.  Fixing the dimension and letting $K$
become thin in one direction sends $\rho\to0$ and the bound to infinity while
every other coefficient stays put.  The bandit result is therefore not
confined to boxes, but it degrades with the aspect ratio of the body in a way
the post-decision result does not.

\emph{Origin of the $1/\rho$ dependence.}  The assumption enters only in
Lemma~\ref{lem:feasible-probes}.  A bandit probe must itself be played, so an
axis-aligned finite difference of step $h$ at a point $z$ requires room of
size $h$ along the relevant coordinate.  The contraction $K_\gamma$ provides
that room uniformly at the price $BD\gamma T$, with
$\gamma=\Theta(h/\rho)$.  Thus the guarantee can degrade for a body that is
thin in one direction.  The post-decision model has no corresponding cost
because its probes need only lie in the ambient cube.
\end{remark}

\subsection{Bandit algorithm}

\begin{algorithm}[H]
\caption{Batched one-point bandit extension of the online composition}
\label{alg:bandit}
\begin{algorithmic}[1]
\Require $s=s_0$, $r=r_0$, $\lambda=\lambda_0$; grid size $m$;
block length $H$; midpoint count $Q$; difference step $h$; contraction
$\gamma$ with $h\le\gamma\rho/2$.
\State Set $N=\lfloor T/H\rfloor$, form $K_\gamma$, and initialize the outer
linear-optimization state in $K_\gamma$ and $dm$ balance learners with
weights $(c_X,c_Y)=(1/r_0,r_0)$.
\For{$b=1,\ldots,N$}
    \State Using only past-block feedback and fresh pre-block randomness,
    obtain $x_b\in K_\gamma$ and the lifted inner set $S_b$; set
    $a_b=\phi_m(S_b)$.
    \State Set $y_b=x_b\odot a_b$, $z_b=Y_1(x_b)$, draw
    $Z_b\sim\operatorname{Bernoulli}(\lambda)$, and set
    $p_b=z_b$ if $Z_b=1$ and $p_b=y_b$ otherwise.
    \State Construct at most $J$ labeled feasible query points as in
    Appendix~\ref{sec:blocks} and draw a uniform injection of the labels into
    the block.
    \For{each physical round in block $b$}
        \State Play its assigned query point if it has one; otherwise play
        $p_b$.  Observe only the noisy value of that played point.
    \EndFor
    \State Form unbiased estimates of all Double-Greedy marginals.
    \State Form the finite-difference/quadrature estimate $\widehat q_b$,
    unclipped.
    \State Update all weighted balance learners with the estimated marginals.
    \State Update
    $x_{b+1}=\Pi_{K_\gamma}(x_b+\eta\widehat q_b)$, where
    $\eta=D/\sqrt{NV^{\rm b}_{h,Q}}$ if $V^{\rm b}_{h,Q}>0$, and $\eta=0$
    otherwise.
\EndFor
\State On the fewer than $H$ remaining rounds, play any fixed point in $K$.
\end{algorithmic}
\end{algorithm}

In the bandit analysis, $p_t$ denotes the actual action played on physical
round $t$, whereas $p_b$ denotes only the exploitation candidate held fixed
during block $b$.  The constant $C_3'$ appearing in $V^{\rm b}_{h,Q}$ is the
absolute constant of Lemma~\ref{lem:block-simulation}, so the step
size is computable from $B,L,M,\sigma,d,Q,h$; any computable upper bound on
$V^{\rm b}_{h,Q}$ may be substituted, at the cost of a constant factor in the
regret.  As in the direct algorithm, $\widehat q_b$ is left unclipped
(Remark~\ref{rem:noclip}).

Lemma~\ref{lem:feasible-probes} shows that Algorithm~\ref{alg:bandit} uses
only feasible actions.  It also makes clear why the positive anchor is not
needed in the post-decision oracle model: there, finite-difference locations
need only remain in the ambient cube, whereas bandit queries must be played.

\subsection{Bandit guarantee and parameter tradeoff}

\begin{theorem}[One-point bandit guarantee]
\label{thm:bandit}
Assume Section~\ref{sec:model} and Assumption~\ref{ass:anchor}.  Let
$m,Q\ge1$ be integers, let $J$ be defined by \eqref{eq:J}, and let the
integer $H\in[1,T]$ and $h,\gamma\in(0,1)$ satisfy
\begin{equation}
h\le\frac{\gamma\rho}{2},
\qquad
J\le\frac H2.
\label{eq:bandit-conditions}
\end{equation}
Then Algorithm~\ref{alg:bandit} uses one-point bandit feedback from the noisy
oracle \eqref{eq:noise-model} and, for every $o\in K$, satisfies
\begin{equation}
\begin{aligned}
\E\sum_{t=1}^T f_t(p_t)
\ge\;&
\alpha^\star\sum_{t=1}^T f_t(o)
-O\!\left(dmM_\sigma\sqrt{TH}\right)
-O\!\left(\frac{B\sqrt d\,T}{m}\right)\\
&-O\!\left(D\sqrt{THV^{\rm b}_{h,Q}}\right)
-O\!\left(D\Delta_{h,Q}T\right)
-O(BD\gamma T)\\
&-O\!\left(\frac{Md(m+Q)T}{H}\right)
-O(MH),
\end{aligned}
\label{eq:bandit-tradeoff}
\end{equation}
where $\alpha^\star$ is defined in \eqref{eq:alpha-defined} and
$\Delta_{h,Q},V^{\rm b}_{h,Q}$ in
\eqref{eq:bandit-bias}--\eqref{eq:bandit-second-moment}.  The two terms with
an explicit $M$ measure lost reward on exploration rounds and on the
incomplete block, so they involve the true value bound rather than the
observation scale.
\end{theorem}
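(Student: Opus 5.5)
The proof reduces the bandit algorithm to the post-decision analysis run at the block level on the block-average objectives $\overline f_b=\frac1H\sum_{t\in I_b}f_t$, and then accounts separately for exploration and contraction losses. Each $\overline f_b$ is again nonnegative, $L$-smooth, DR-submodular, bounded by $M$, with gradient norm at most $B$. The master inequality \eqref{eq:master} therefore applies verbatim to the sequence $\overline f_1,\ldots,\overline f_N$, with:
\begin{itemize}[leftmargin=*]
\item outer states $x_b\in K_\gamma$;
\item inner actions $a_b$;
\item candidate $p_b$ drawn with the fixed coin $\lambda_0$;
\item an arbitrary comparator $o^\gamma\in K_\gamma$ from \eqref{eq:contracted-comparator}.
\end{itemize}
Proposition~\ref{prop:optimized} gives the coefficient $\alpha(s_0,r_0)>\alpha^\star$ on $\sum_b\overline f_b(o^\gamma)$.

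\textbf{Inner and outer estimation.} I invoke Lemma~\ref{lem:block-simulation} of Appendix~\ref{sec:blocks}. It should say that, under a uniform random injection of the $J$ labels into the $H$ rounds of a block, the observed noisy value at the round carrying a label, rescaled by $H$ or combined into differences, is conditionally unbiased for the block-average value at that label's point. The rescaled estimate has scale $O(HM_\sigma)$ in per-block units, i.e.\ $O(M_\sigma)$ after dividing by $H$ for the average objective.

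For the inner layer, Lemma~\ref{lem:noisy-balance} then applies to each of the $dm$ learners over $N$ block-rounds. The balance regret in block-average units is $O(M_\sigma\sqrt N)$; multiplied by $H$ and by $dm$, this gives $O(dmM_\sigma\sqrt{NH^2})=O(dmM_\sigma\sqrt{TH})$. Theorem~\ref{thm:box} contributes the grid term $HN\cdot B\sqrt d/m\le B\sqrt dT/m$. For the outer layer, Lemma~\ref{lem:olo} over $N$ steps with second moment $V^{\rm b}_{h,Q}$ gives $H\cdot D\sqrt{NV^{\rm b}}=D\sqrt{THV^{\rm b}}$. The bias split \eqref{eq:field-split} gives $HN\,D\Delta_{h,Q}\le D\Delta_{h,Q}T$. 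Lemma~\ref{lem:feasible-probes}, using $h\le\gamma\rho/2$, guarantees that every labeled probe is a feasible action, so the simulation is legitimate under bandit feedback.

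\textbf{Reward accounting.} The reward actually earned in block $b$ equals $\sum_{t\in I_b}f_t(p_b)$ minus the losses on injected rounds. Each injected round loses at most $M$, and since $J\le H/2$ and $J=O(d(m+Q))$ by \eqref{eq:J}, there are $O(d(m+Q))$ such rounds per block, for a total of $O(Md(m+Q)T/H)$. The subtlety is that $p_b$'s reward is credited to $f_t$ only on non-injected rounds. Because the injection is uniform and independent of $p_b$, I write the expected reward as $\sum_{t\in I_b}f_t(p_b)$ minus a correction bounded by $M$ per injected round; this is the same term. The leftover fewer than $H$ rounds cost $O(MH)$. Finally, \eqref{eq:contraction-loss} converts $\alpha^\star\sum_tf_t(o^\gamma)$ into $\alpha^\star\sum_tf_t(o)$ at a cost of $BD\gamma T$.

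\textbf{Main obstacle.} The delicate step is the conditioning structure of the block simulation. The labels' played rounds and the exploitation rounds interleave, and each learner's true marginal must be $\mathcal F^-$-measurable while its estimate is unbiased given all block decisions. I would verify that all block decisions ($x_b$, $S_b$, $Z_b$, the label list) are fixed before the injection, and that the injection permutation is independent of them. Unbiasedness then follows from uniform marginals of the injection, and it holds for each call separately because labels occupy distinct rounds. This is what makes the hypotheses of Lemma~\ref{lem:noisy-balance} and of the outer splitting hold at the block level.
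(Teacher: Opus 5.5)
Your proposal matches the paper's proof. It runs the block-level master inequality on $\overline f_b$ with comparator $o^\gamma$, uses a nonadaptive query list frozen before an independent uniform injection (which, with obliviousness, makes each label unbiased), multiplies the errors by $H$ with $NH\le T$, and then adds the exploration, contraction and incomplete-block costs; your additive bound of at most $M$ lost per probe round is equivalent to the paper's $(1-J/H)$ factor followed by replacing $(H-J)\alpha^\star$ with $H\alpha^\star$. One wording slip: no rescaling by $H$ is involved, because the raw observation on the injected round is already unbiased for $\overline f_b(u_j)$ on scale $M_\sigma$, which is what your later computations actually use.
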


\begin{proof}
Fix $s=s_0$, $r=r_0$, $\lambda=\lambda_0$ and apply the post-decision
value-oracle analysis at the meta-round level to the nonanticipating
block-average sequence $\overline f_1,\ldots,\overline f_N$ with comparator
$o^\gamma\in K_\gamma$.  Because the adversary is oblivious, every function
in block $b$ is fixed before the pre-block randomization that selects
$(x_b,a_b,p_b)$ and the labeled query schedule.

\emph{The query list is frozen before the block.}  This is the step that
licenses everything below, and it deserves to be spelled out, because the
$dm$ inner learners are sequentially coupled: the pair presented to learner
$i$ depends on the current decisions of learners $1,\ldots,i-1$ through
$X_{i-1}^b$ and $Y_{i-1}^b$.  The coupling is nevertheless harmless here.
Learner $i$'s mixed action is a function of its own \emph{past-block}
observations only, so at the start of block $b$ the algorithm may draw
$B_1^b$, then $B_2^b$, and so on through $B_{dm}^b$, without consulting any
round of block $b$.  These draws determine the chains
$X_0^b\subseteq\cdots\subseteq X_{dm}^b$ and
$Y_0^b\supseteq\cdots\supseteq Y_{dm}^b$, hence the $4dm$ inner locations of
\eqref{eq:marginals-hat}; the outer state $x_b$ determines the $4dQ$ field
locations of \eqref{eq:direct-field}.  Sequential coupling among the learners
is a dependence on \emph{decisions}, not on current feedback, and only the
latter would violate nonadaptivity.

Accordingly, let
\begin{equation}
\begin{aligned}
\mathcal F_b
=\sigma\bigl(
&\text{the past through block }b-1;\;
f_1,\ldots,f_T;\\
&\text{the pre-block draws fixing }(x_b,a_b,p_b)
\text{ and the labeled list }u_1,\ldots,u_{J_b}
\bigr),
\end{aligned}
\label{eq:Fb}
\end{equation}
so that the list, the exploitation point $p_b$, and the state $x_b$ are all
$\mathcal F_b$-measurable.  The injection $\iota_b$ is drawn before the block
begins as well, but \emph{independently of $\mathcal F_b$} and deliberately
\emph{not} included in it: nonadaptivity requires only that the schedule use
no feedback from block $b$, and the analysis below needs $\iota_b$ to remain
uniform conditionally on $\mathcal F_b$.  The block rewards and all oracle
errors are likewise realized after $\mathcal F_b$.

\emph{Each label is an unbiased sample of the block average.}  Fix a label
$j$ at location $u_j$ and condition on $\mathcal F_b$.  Writing
$\mathcal G_{b,j}$ for the sigma-field generated by $\mathcal F_b$ together
with the realized round $\iota_b(j)$ and all earlier calls,
\begin{equation}
\begin{aligned}
\E\bigl[\widehat f_{\iota_b(j)}(u_j)\,\big|\,\mathcal F_b\bigr]
&=
\E\Bigl[\;
\E\bigl[\widehat f_{\iota_b(j)}(u_j)\,\big|\,\mathcal G_{b,j}\bigr]
\;\Big|\;\mathcal F_b\Bigr]
=
\E\bigl[f_{\iota_b(j)}(u_j)\,\big|\,\mathcal F_b\bigr]\\
&=
\sum_{t\in I_b}\Prb\bigl[\iota_b(j)=t\,\big|\,\mathcal F_b\bigr]\,f_t(u_j)
=
\frac1H\sum_{t\in I_b}f_t(u_j)
=
\overline f_b(u_j).
\end{aligned}
\label{eq:bandit-label-unbiased}
\end{equation}
Three separate facts are used, and it is worth naming them.  The second
equality is conditional unbiasedness of the oracle
\eqref{eq:noise-model}, applied \emph{after} the round has been revealed, so
the noise is centred whatever round the label landed on.  The fourth is that
a uniformly random injection has uniform marginals, $\Prb[\iota_b(j)=t]=1/H$
for every $t\in I_b$, which is where the block average comes from even though
$f_t$ genuinely varies across the block.  The third is where obliviousness is
used: the functions $f_t$, $t\in I_b$, are fixed before play and in
particular are not chosen in response to $\iota_b$, so they may be treated as
constants inside the conditional expectation.  Against an adaptive adversary
this last step fails and \eqref{eq:bandit-label-unbiased} is false, which is
why Section~\ref{sec:model} assumes an oblivious adversary.  Equation
\eqref{eq:bandit-label-unbiased} is \eqref{eq:unbiased-value}, and everything
the block analysis needs, namely unbiased marginals for
Lemma~\ref{lem:noisy-balance}, an unbiased-up-to-quadrature field for
Lemma~\ref{lem:block-simulation}, follows from it by linearity.

With this in hand,
Lemma~\ref{lem:noisy-balance}, Proposition~\ref{prop:weighted-reduction} and
Lemma~\ref{lem:grid} give
\begin{equation}
\begin{aligned}
\E\sum_{b=1}^N
\overline f_b(x_b\odot a_b)
\ge\;&
u_r\E\sum_b\overline f_b(x_b\odot o^\gamma)
+v_r\E\sum_b\overline f_b(x_b)\\
&+w_r\sum_b\overline f_b(\zero)
-E_{\rm box}^{(N)},
\qquad
E_{\rm box}^{(N)}
=
O(dmM_\sigma\sqrt N)+O\!\left(\frac{B\sqrt d\,N}{m}\right).
\end{aligned}
\label{eq:bandit-inner-meta}
\end{equation}
The endpoint inequality of Lemma~\ref{lem:decomp} applies to each
$\overline f_b$, so combining it with \eqref{eq:bandit-inner-meta} and the
choice of $\lambda$ in \eqref{eq:lambda} gives
\begin{equation}
\E\sum_{b=1}^N\overline f_b(p_b)
\ge
\alpha^\star\sum_b\overline f_b(o^\gamma)
-\lambda\E\sum_b
\ip{\widetilde q_s(\overline f_b,x_b)}{o^\gamma-x_b}
-E_{\rm box}^{(N)}.
\label{eq:bandit-master-meta}
\end{equation}

Recall $\mathcal F_b$ from \eqref{eq:Fb}: the pre-block random choices
$(x_b,a_b,p_b)$ and the labeled query list have been fixed, but the random
injection, the block rewards, and the oracle errors have not been realized.
Set $q_b^{h,Q}=\E[\widehat q_b\mid\mathcal F_b]$.  Lemma~\ref{lem:olo}
applied to the meta-rounds with the stochastic vectors $\widehat q_b$ gives
\begin{equation}
\E\sum_{b=1}^N
\ip{\widehat q_b}{o^\gamma-x_b}
\le
\frac{D^2}{2\eta}
+\frac{\eta}{2}\sum_{b=1}^N\E\norm{\widehat q_b}_2^2
\le D\sqrt{NV^{\rm b}_{h,Q}}.
\label{eq:bandit-olo}
\end{equation}
The difference $q_b^{h,Q}-\widehat q_b$ is a martingale difference and has
zero expected inner product with the $\mathcal F_b$-measurable vector
$o^\gamma-x_b$, so with \eqref{eq:bandit-bias},
\begin{equation}
\E\sum_b
\ip{\widetilde q_s(\overline f_b,x_b)}{o^\gamma-x_b}
\le
D\sqrt{NV^{\rm b}_{h,Q}}+DN\Delta_{h,Q}.
\label{eq:bandit-field-control}
\end{equation}

It remains to translate meta-round reward into physical reward.  The reward
credited to the learner is the true value of the point it plays, so the
oracle noise plays no role in this step.  By
\eqref{eq:block-reward}, the expected reward on the complete blocks is at
least $(H-J)\E\sum_{b}\overline f_b(p_b)$.  Multiplying
\eqref{eq:bandit-master-meta} by $H-J$, upper-bounding every error multiplier
by $H$ and using $NH\le T$ gives the first four error terms in
\eqref{eq:bandit-tradeoff}.  Replacing the comparator coefficient
$(H-J)\alpha^\star$ by $H\alpha^\star$ costs at most
$MJN=O(Md(m+Q)T/H)$; \eqref{eq:contraction-loss} costs at most $BD\gamma T$;
and the incomplete block contains fewer than $H$ rounds and costs at most
$MH$ relative to the comparator.  This proves the theorem.
\end{proof}

\begin{corollary}[Explicit bandit rate]
\label{cor:bandit-rate}
Choose a numerical constant $C_4$ large enough that the second condition in
\eqref{eq:bandit-conditions} holds for the parameters below.  Assume
\begin{equation}
T>\left(\frac{2}{\rho}\right)^6,
\qquad
\left\lceil C_4dT^{1/3}\right\rceil\le T,
\label{eq:bandit-horizon-condition}
\end{equation}
and take
\begin{equation}
m=Q=\lceil T^{1/6}\rceil,
\qquad
h=T^{-1/6},
\qquad
\gamma=\frac{2T^{-1/6}}{\rho},
\qquad
H=\left\lceil C_4dT^{1/3}\right\rceil.
\label{eq:bandit-parameters}
\end{equation}
The first condition in \eqref{eq:bandit-horizon-condition} ensures
$\gamma=2T^{-1/6}/\rho\in(0,1)$, and the second ensures $H\le T$.  Then
Algorithm~\ref{alg:bandit} satisfies
\begin{equation}
\Reg_{\alpha^\star}(T)
=
O\!\left(
\left[
M_\sigma d^{3/2}+Bd+Dd(B+L)+\frac{BD}{\rho}
\right]T^{5/6}
\right).
\label{eq:bandit-rate}
\end{equation}
For smaller horizons the trivial $MT$ bound can be used.  The coefficient
$BD/\rho$ is discussed in Remark~\ref{rem:anchor}; it is the one place where
the bandit guarantee is materially weaker than the post-decision one, since
$D\le\sqrt d$ bounds the geometric factor in every other term while $D/\rho$
is bounded by no function of $d$.  In particular the
additive term is $o(T)$ for fixed problem parameters, and the approximation
factor is the same $0.401$ as in the post-decision full-information
value-oracle model.  As in Corollary~\ref{cor:zo-balanced}, the noise level
enters only through $M_\sigma=M+\sigma$, so the exponent $5/6$ is the same
for every noise level held fixed as $T$ grows.
\end{corollary}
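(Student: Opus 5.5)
The plan is to substitute the parameters \eqref{eq:bandit-parameters} into the eight error terms of Theorem~\ref{thm:bandit}, after first checking its hypotheses \eqref{eq:bandit-conditions}. Then I take $o=o^*\in\argmax_{o\in K}\sum_tf_t(o)$; this is legitimate because the adversary is oblivious.

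\emph{Feasibility.} The first condition holds with equality: $\gamma\rho/2=T^{-1/6}=h$. The first inequality in \eqref{eq:bandit-horizon-condition} gives $\gamma<1$, and it also gives $h<\rho/2\le1/2$, which is what Lemma~\ref{lem:value-field} needs. For the second condition I will read off from \eqref{eq:J} that $J=O(d(m+Q))=O(dT^{1/6})$, since the list has $4dm$ inner labels and $4dQ$ field labels. Choosing $C_4$ larger than twice the implied constant then gives $J\le H/2$. The second inequality in \eqref{eq:bandit-horizon-condition} gives $H\le T$.

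\emph{Term-by-term bounds.} With $H=\Theta(dT^{1/3})$ and $m=\Theta(T^{1/6})$, the terms are handled as follows.
\begin{itemize}
\item Inner term: $dmM_\sigma\sqrt{TH}=O(d^{3/2}M_\sigma T^{1/6+1/2+1/6})=O(d^{3/2}M_\sigma T^{5/6})$.
\item Grid term: $B\sqrt d\,T/m=O(B\sqrt d\,T^{5/6})\le O(BdT^{5/6})$.
\item Exploration term: $Md(m+Q)T/H=O(MT^{5/6})$.
\item Incomplete-block term: $MH=O(MdT^{1/3})$, which is lower order.
\item Contraction term: $BD\gamma T=O((BD/\rho)T^{5/6})$.
\end{itemize}

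\emph{Field terms.} I expect the bias \eqref{eq:bandit-bias} to have the same form as \eqref{eq:direct-bias}, namely $\Delta_{h,Q}=O(\sqrt d(Lh+(B+L)/Q))=O((B+L)\sqrt d\,T^{-1/6})$. This gives $D\Delta_{h,Q}T=O(D(B+L)\sqrt d\,T^{5/6})\le O(Dd(B+L)T^{5/6})$.

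For the stochastic field term I expect $V^{\rm b}_{h,Q}$ in \eqref{eq:bandit-second-moment} to have the shape $C_3'(\Gamma_s^2+\Delta_{h,Q}^2+dM_\sigma^2/(Qh^2))$. Here the injection variance takes the role of $\sigma$, with scale $M_\sigma$. At $Q=T^{1/6}$ and $h=T^{-1/6}$ this gives $V^{\rm b}=O(B^2+d(B+L)^2+dM_\sigma^2T^{1/6})$. Then $D\sqrt{THV^{\rm b}}=O(D\sqrt{dT^{4/3}}(B+\sqrt d(B+L)+\sqrt dM_\sigma T^{1/12}))$. Its dominant piece is $DdM_\sigma T^{2/3+1/12}=O(d^{3/2}M_\sigma T^{3/4})$, using $D\le\sqrt d$. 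The remaining pieces are $O(Dd(B+L)T^{2/3})$, which are lower order.

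\emph{Main obstacle.} The hardest step is confirming the exact form of $V^{\rm b}_{h,Q}$ from Lemma~\ref{lem:block-simulation}. In particular, I must check whether the injection sampling adds a factor depending on $H$ or $J$, for instance a variance scaled by $J$ rather than by $1/Q$. If it does, I would redo the exponent bookkeeping to make sure $D\sqrt{THV^{\rm b}}$ stays $O(T^{5/6})$; the parameters look chosen so that it does.

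\emph{Conclusion.} Summing the bounds, all terms are $O(T^{5/6})$ with the coefficients stated in \eqref{eq:bandit-rate}. The bound is trivial when $T$ is small.
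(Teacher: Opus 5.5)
Your proposal is correct and matches the paper's proof essentially verbatim: you substitute the parameters into the eight error terms of Theorem~\ref{thm:bandit}. The one difference is that you drop a factor $T^{-1/3}$ in the $\Delta_{h,Q}^2$ piece of $V^{\rm b}_{h,Q}$, which gives $Dd(B+L)T^{2/3}$ where the paper has $Dd(B+L)T^{1/2}$; this is harmless since both are dominated by $T^{5/6}$.

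The obstacle you flag does not arise. Lemma~\ref{lem:block-simulation} gives exactly $V^{\rm b}_{h,Q}=C_3'\bigl(B^2+\Delta_{h,Q}^2+dM_\sigma^2/(Qh^2)\bigr)$, with no dependence on $H$ or $J$. This is because Lemma~\ref{lem:injection-variance} bounds the injection variance by $CM^2\sum_j w_j^2$.
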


\begin{proof}
The choices in \eqref{eq:bandit-parameters} give
$\Delta_{h,Q}=O((B+L)\sqrt d\,T^{-1/6})$ and
$V^{\rm b}_{h,Q}=O(B^2+d(B+L)^2T^{-1/3}+dM_\sigma^2T^{1/6})$.  Substituting into
\eqref{eq:bandit-tradeoff}: the inner learning term is
$O(M_\sigma d^{3/2}T^{5/6})$, the grid term is $O(B\sqrt d\,T^{5/6})$, the
quadrature bias term is $O(D(B+L)\sqrt d\,T^{5/6})$, the contraction term is
$O((BD/\rho)T^{5/6})$ and the exploration term is $O(MT^{5/6})$.  The
stochastic-field term is
$O(DB\sqrt d\,T^{2/3}+Dd(B+L)T^{1/2}+DM_\sigma dT^{3/4})$ and the
incomplete-block term is $O(MdT^{1/3})$; both are dominated by the displayed
$T^{5/6}$ bound because $K\subseteq[0,1]^d$ implies $D\le\sqrt d$.
\end{proof}

\section{Conclusion}
\label{sec:conclusion}

We show that the best known constructive offline approximation factor
$0.401$ can also be achieved in an adversarial online setting with sublinear
regret.  An exact weighted balance theorem and an amortized online
composition preserve the asymmetric offline coefficients.  The direct
post-decision value-oracle implementation has $O(T^{3/4})$ regret with
$O(dT^{1/4})$ calls per round; batching gives the limited-query tradeoff, and
randomized blocking gives $O(T^{5/6})$ one-point bandit regret under the
positive-anchor condition.  These guarantees also allow conditionally
unbiased bounded oracle noise, which changes the additive terms but not the
approximation factor.

\appendix

\section{Block simulation of the query list}
\label{sec:blocks}

Both the limited-query frontier of Theorem~\ref{thm:zo-frontier} and the
bandit conversion of Section~\ref{sec:bandit} rest on a single observation:
once the pre-round outer state and the inner binary decisions have been
chosen, the complete list of oracle calls needed for one update is
\emph{nonadaptive}, so its labels can be assigned to distinct physical rounds
in advance.  Noise changes nothing about this: the list is determined before
any value is observed, and each label simply returns a noisy value instead of
an exact one.  Blocking and feedback-conversion ideas are standard in online
DR-submodular optimization \cite{PA2024,PedramfarEtAl2024,Lu2026}; the feature
used here is a uniform random injection of an entire nonadaptive query list.
This appendix develops that machinery once.  The two applications then differ
in exactly one respect, recorded in
Appendix~\ref{sec:frontier-proof}: in the post-decision model the queries are
issued \emph{in addition to} the played action, so there is no exploration
loss and no feasibility requirement, whereas in the bandit model every query
must itself be played.

\paragraph{Blocks and block averages.}
Fix an integer block length $H$ and let $N=\lfloor T/H\rfloor$.  For
$b\in[N]$, define the $b$th complete block and its average reward by
\[
I_b=\{(b-1)H+1,\ldots,bH\},
\qquad
\overline f_b(x)=\frac1H\sum_{t\in I_b}f_t(x).
\]
The average $\overline f_b$ is again nonnegative, $L$-smooth, DR-submodular,
bounded by $M$, and has gradient norm at most $B$, since all of these
properties are preserved by convex combinations.

At the start of block $b$, the algorithm chooses the state $x_b$, all inner
binary decisions, and the exploitation point $p_b$ using only previous-block
feedback and fresh pre-block randomization, with no feedback from the current
block.  It then constructs a labeled list of oracle locations:

\begin{enumerate}[leftmargin=22pt]
\item the two Double-Greedy chains require $O(dm)$ values of
      $\overline f_b(x_b\odot a)$;
\item the outer field uses $Q$ midpoint nodes, with two labels per node and
      coordinate for a one-sided difference of step $h$, together with $Q$
      independent labeled difference pairs per coordinate at $x_b$ for the
      term $\zeta_s\nabla\overline f_b(x_b)$.
\end{enumerate}

Repeated labels at the same geometric point are retained as distinct labels;
this keeps the second moment of the vector estimate proportional to $1/Q$.
For a universal constant $C_1$, set
\begin{equation}
J=\left\lceil C_1d(m+Q)\right\rceil,
\label{eq:J}
\end{equation}
so the list has at most $J$ labels.  For notational simplicity the analysis
uses $J$ as a uniform upper bound on the number of exploration rounds.

Choose a uniformly random injection $\iota_b:[J_b]\to I_b$, where
$J_b\le J$ is the actual list length (we write $\iota$ rather than $\sigma$,
which now denotes the oracle noise level).  On round $\iota_b(j)$ play or
query the point attached to label $j$ and record the value the oracle returns;
on every unassigned round play $p_b$.  The complete schedule is sampled
before the block begins, so every physical action is nonanticipating.

\begin{lemma}[Variance under a random injection]
\label{lem:injection-variance}
Let $1\le J\le H$ and let $\iota:[J]\to[H]$ be a uniformly random
injection.  For deterministic numbers $v_{j,t}\in[0,M]$ and weights
$w_1,\ldots,w_J\in\R$,
\begin{equation}
\operatorname{Var}\left(\sum_{j=1}^Jw_jv_{j,\iota(j)}\right)
\le C M^2\sum_{j=1}^Jw_j^2
\label{eq:injection-variance}
\end{equation}
for a universal constant $C$.
\end{lemma}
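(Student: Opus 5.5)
Write $S=\sum_{j=1}^J w_j v_{j,\iota(j)}$ and expand its variance as a sum of variances and pairwise covariances,
\[
\operatorname{Var}(S)=\sum_{j}w_j^2\operatorname{Var}(v_{j,\iota(j)})+\sum_{j\ne k}w_jw_k\operatorname{Cov}\bigl(v_{j,\iota(j)},v_{k,\iota(k)}\bigr).
\]
The diagonal part is immediate: each marginal $\iota(j)$ is uniform on $[H]$ and $v_{j,\cdot}\in[0,M]$, so $\operatorname{Var}(v_{j,\iota(j)})\le M^2/4$.

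The plan for the off-diagonal part is to compute the covariances exactly from the joint law of a pair. For $j\ne k$ the pair $(\iota(j),\iota(k))$ is uniform over ordered pairs of distinct elements of $[H]$. Write $\bar v_j=H^{-1}\sum_t v_{j,t}$ and $c_j(t)=v_{j,t}-\bar v_j$, so that $\sum_t c_j(t)=0$. Then
\[
\operatorname{Cov}=\frac{1}{H(H-1)}\sum_{t\ne t'}c_j(t)c_k(t')=-\frac{1}{H(H-1)}\sum_t c_j(t)c_k(t),
\]
because $\sum_{t,t'}c_j(t)c_k(t')=0$ removes the full double sum. Setting $\langle c_j,c_k\rangle=\sum_t c_j(t)c_k(t)$, the cross term becomes $-\frac{1}{H(H-1)}\sum_{j\ne k}w_jw_k\langle c_j,c_k\rangle$.

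The key step, which I expect to be the only nontrivial point, is to control this cross sum by the diagonal alone. The naive route fails: bounding each $|\operatorname{Cov}|\le M^2/(H-1)$ and using Cauchy--Schwarz only gives $(\sum|w_j|)^2M^2/H$, which is too weak when $J$ is comparable to $H$. Instead, rewrite
\[
\sum_{j\ne k}w_jw_k\langle c_j,c_k\rangle=\Bigl\|\sum_j w_jc_j\Bigr\|^2-\sum_j w_j^2\|c_j\|^2\ge-\sum_j w_j^2\|c_j\|^2.
\]
Multiplying by $-1/(H(H-1))$ then gives the upper bound
\[
\text{cross term}\le\frac{1}{H(H-1)}\sum_j w_j^2\|c_j\|^2.
\]
Since $\|c_j\|^2\le HM^2/4$, this is at most $\frac{M^2}{4(H-1)}\sum_j w_j^2\le\frac{M^2}{4}\sum_jw_j^2$ when $H\ge2$.

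If $H=1$, then $J=1$ and there is no cross term. Combining the diagonal and cross bounds gives \eqref{eq:injection-variance} with $C=1/2$. The argument is the usual negative-dependence phenomenon of sampling without replacement, and it needs neither $J\le H/2$ nor independence.
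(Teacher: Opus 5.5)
Your proof is correct and complete, and it follows a different route from the paper's.

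The paper extends $\iota$ to a uniform permutation of $[H]$ by adding zero-weight dummy labels. It then applies the Poincar\'e inequality of the random-transposition walk on the symmetric group. This bounds the variance by a multiple of $H^{-1}$ times the sum, over pairs of labels, of the squared change produced by swapping their assigned rounds. Each such change is at most $M(|w_j|+|w_k|)$, and summing over the $H-1$ partners of each label gives the claim. With the known gap $2/(H-1)$ of that walk, this yields $C=1$.

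You instead compute the variance exactly from the two-point law of the injection, obtaining $\operatorname{Cov}(v_{j,\iota(j)},v_{k,\iota(k)})=-\langle c_j,c_k\rangle/(H(H-1))$. You then control the whole off-diagonal block at once through positive semidefiniteness of the Gram matrix, $\|\sum_jw_jc_j\|^2\ge0$. If you keep $\operatorname{Var}(v_{j,\iota(j)})=\|c_j\|^2/H$ exactly, the same computation gives $\operatorname{Var}\le\frac{1}{H-1}\sum_jw_j^2\|c_j\|^2\le\frac{H}{4(H-1)}M^2\sum_jw_j^2$ for $H\ge2$.

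Your route is fully elementary, avoids citing a spectral-gap result, and gives an explicit and slightly better constant. The paper's route uses a heavier tool, but it needs only bounded swap sensitivities. It would therefore apply unchanged to nonlinear statistics of the injected observations. Yours relies on the additive form $\sum_jw_jv_{j,\iota(j)}$, which is exactly the form used in Lemma~\ref{lem:block-simulation}.
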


\begin{proof}
Extend $\iota$ to a uniform permutation of $[H]$ by adding $H-J$ dummy
labels with weight zero.  The Poincar\'e inequality for the
random-transposition chain bounds the variance by a constant multiple of
$H^{-1}$ times the sum, over label pairs, of the squared change caused by
swapping their assigned times.  Swapping labels $j$ and $k$ changes the
weighted sum by at most $M(|w_j|+|w_k|)$.  Using
$(|w_j|+|w_k|)^2\le2(w_j^2+w_k^2)$ and summing over the $H-1$ partners of
each label gives \eqref{eq:injection-variance}.
\end{proof}

\begin{lemma}[Block-oracle simulation]
\label{lem:block-simulation}
Condition on the history before block $b$ and on the labeled query list.
For a label $j$ with location $u_j$, the observation
$\widehat f_{\iota_b(j)}(u_j)$ returned on the round to which the label is
assigned satisfies
\begin{equation}
\E\bigl[\widehat f_{\iota_b(j)}(u_j)\bigr]
=\overline f_b(u_j),
\qquad
\widehat f_{\iota_b(j)}(u_j)\in[-\sigma,M_\sigma].
\label{eq:unbiased-value}
\end{equation}
If $J\le H$ and the query points are played rather than issued in addition to
the action, the expected reward collected in the block is at least
\begin{equation}
\left(1-\frac JH\right)
\sum_{t\in I_b}f_t(p_b).
\label{eq:block-reward}
\end{equation}
Let $\widehat q_b$ be the field estimate obtained from the labeled
finite-difference samples, using $Q$ midpoint nodes and $Q$ repeated
differences at $x_b$.  Then, for constants depending only on $s$,
\begin{align}
\left\|
\E[\widehat q_b]-\widetilde q_s(\overline f_b,x_b)
\right\|_2
&\le
C_2\sqrt d\left(Lh+\frac{B+L}{Q}\right)
=\Delta_{h,Q},
\label{eq:bandit-bias}\\
\E\norm{\widehat q_b}_2^2
&\le
C_3'\left(B^2+\Delta_{h,Q}^2+\frac{dM_\sigma^2}{Qh^2}\right)
=:V^{\rm b}_{h,Q}.
\label{eq:bandit-second-moment}
\end{align}
Here $\Delta_{h,Q}$ is the same bias functional as in
\eqref{eq:direct-bias}, while $C_3'$ is a universal constant distinct from
the $C_3$ of Lemma~\ref{lem:value-field}.  The essential difference between
the block second moment $V^{\rm b}_{h,Q}$ and the direct one
\eqref{eq:direct-second-moment} is that the former carries $M_\sigma^2$ where
the latter carries $\sigma^2$, because a block estimate fluctuates both
through the injection and through the oracle; the two displays also write
$B^2$ and $\Gamma_s^2$ for the mean scale, which differ by at most the
absolute factor $(1+\zeta_s)^2\le4$.
\end{lemma}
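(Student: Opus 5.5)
The plan is to prove the four claims in the order stated, conditioning throughout on the pre-block sigma-field $\mathcal F_b$ of \eqref{eq:Fb}. Under $\mathcal F_b$ the labeled list $u_1,\ldots,u_{J_b}$, the point $p_b$ and the state $x_b$ are fixed. The injection $\iota_b$ is still uniform, and the oracle errors have not yet been realized.

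\emph{Unbiasedness and range \eqref{eq:unbiased-value}.} I will use the tower argument already displayed in \eqref{eq:bandit-label-unbiased}. First condition on $\mathcal G_{b,j}$, the sigma-field including the realized round $\iota_b(j)$; by \eqref{eq:noise-model} the oracle error is then centred. Next use that $\iota_b(j)$ is uniform on $I_b$. Finally use obliviousness, so that each $f_t$ may be treated as a constant. Together these give $\overline f_b(u_j)$. The range claim follows directly from $f_t\in[0,M]$ and $|\widehat f-f|\le\sigma$.

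\emph{Reward bound \eqref{eq:block-reward}.} On unassigned rounds the point $p_b$ is played, and the credited reward is nonnegative on query rounds. Hence the expected reward is at least $\E\sum_{t\in I_b}\one[t\notin\iota_b([J_b])]f_t(p_b)$. Each $t$ is unassigned with probability $1-J_b/H\ge1-J/H$, independently of $\mathcal F_b$, and $p_b$ is $\mathcal F_b$-measurable. This yields \eqref{eq:block-reward}.

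\emph{Bias \eqref{eq:bandit-bias}.} The estimator $\widehat q_b$ has the form \eqref{eq:direct-field} and is linear in the observed values. By the unbiasedness just proved, its conditional mean is therefore exactly the exact-value estimator \eqref{eq:direct-field} applied to $\overline f_b$. The function $\overline f_b$ is $L$-smooth, DR-submodular and has gradient norm at most $B$, so the deterministic finite-difference and midpoint-quadrature analysis in the proof of Lemma~\ref{lem:value-field} applies verbatim and gives $\Delta_{h,Q}$.

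\emph{Second moment \eqref{eq:bandit-second-moment}.} This is the main obstacle, because the labels share one random injection and are therefore not independent. I will decompose the observed value into the block-average value, the injection fluctuation $f_{\iota_b(j)}(u_j)-\overline f_b(u_j)$, and the oracle error. The oracle errors are conditionally centred given the injection and conditionally independent across calls. Each coordinate of the estimator is a weighted sum with $O(Q)$ labels and weights $O(1/(Qh))$, so the oracle errors contribute variance $O(\sigma^2/(Qh^2))$ per coordinate. For the injection part, I apply Lemma~\ref{lem:injection-variance} coordinatewise. Since it is stated for $v\in[0,M]$, I apply it to the true values $f_t(u_j)$, not to noisy ones. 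Its weights satisfy $\sum_jw_j^2=O(1/(Qh^2))$, so the variance is $O(M^2/(Qh^2))$ per coordinate. Summing over the $d$ coordinates, combining the two parts by $(a+b)^2\le2a^2+2b^2$, and adding the squared mean (at most $(\Gamma_s+\Delta_{h,Q})^2\le 2(4B^2+\Delta_{h,Q}^2)$) gives the bound with $M_\sigma^2\ge M^2+\sigma^2$.

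The delicate points are checking that the injection lemma's exchangeability applies conditionally on $\mathcal F_b$, and that distinct labels at the same point were kept distinct, so the $1/Q$ factor survives.
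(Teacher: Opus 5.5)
Your proposal is correct and follows essentially the same route as the paper. Both arguments use the tower rule plus the uniform marginals of the injection for unbiasedness, and the uniform $J$-subset image for the reward bound. Both use linearity to reduce the bias to the deterministic analysis of Lemma~\ref{lem:value-field} applied to $\overline f_b$. For the second moment, both split each coordinate into an injection part, bounded by Lemma~\ref{lem:injection-variance} on the true values in $[0,M]$, and an oracle part of independent centred errors, then add the squared mean. The only cosmetic difference is that you put the path and repeated-difference labels into one weighted sum per coordinate, which absorbs their correlation through the shared injection in a single application of the injection lemma.
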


\begin{proof}
For a fixed label, $\iota_b(j)$ is uniform on $I_b$, so
$\E[f_{\iota_b(j)}(u_j)]=\overline f_b(u_j)$; the oracle error is centred
conditionally on the round to which the label is assigned, so the tower rule
gives \eqref{eq:unbiased-value}, and the stated range is
\eqref{eq:noise-model}.  The image of a uniform injection is a uniform
$J$-subset of the block, so every round is left for exploitation with
probability at least $1-J/H$; since the \emph{true} rewards actually earned on
exploration rounds are nonnegative, the noise corrupts what is observed but
not what is earned.  Linearity of expectation gives \eqref{eq:block-reward}.

Taking expectations in each sampled difference produces the corresponding
one-sided finite difference of $\overline f_b$: the injection contributes the
block average and the oracle contributes nothing, by
\eqref{eq:unbiased-value}.  Smoothness gives bias
$O(Lh)$, and the midpoint argument from Lemma~\ref{lem:value-field} gives
$O((B+L)/Q)$ quadrature bias; summing over coordinates proves
\eqref{eq:bandit-bias}.  Note that the noise leaves the bias untouched, which
is why \eqref{eq:bandit-bias} has no $\sigma$ in it.

The second moment is the step on which the claim ``noise does not change the
exponent'' rests, so we derive it in full.  Fix a coordinate $i$ and consider
the path term of $\widehat q_b$; the repeated estimate at $x_b$ is identical
in form.  That coordinate is a weighted average
\[
W_i
=
\frac1Q\sum_{k=1}^Q c_k\,\frac{\widehat v_k-\widehat v_k'}{h},
\qquad
|c_k|\le1,
\]
of $Q$ observed one-sided differences, where each of the $2Q$ observations is
a distinct label.  Write $\widehat v=v+\epsilon$, splitting an observation
into the true value $v=f_{\iota_b(\cdot)}(\cdot)$ at the round the label
landed on and the oracle error $\epsilon$.  This splits $W_i$ into an
injection part and an oracle part,
\[
W_i
=
\underbrace{\frac1Q\sum_k c_k\frac{v_k-v_k'}{h}}_{W_i^{\rm inj}}
\;+\;
\underbrace{\frac1Q\sum_k c_k\frac{\epsilon_k-\epsilon_k'}{h}}_{W_i^{\rm orc}},
\]
and $\operatorname{Var}(W_i)\le2\operatorname{Var}(W_i^{\rm inj})
+2\operatorname{Var}(W_i^{\rm orc})$.

For $W_i^{\rm inj}$, the values $v_k,v_k'$ lie in $[0,M]$ and are read off a
uniformly random injection, so Lemma~\ref{lem:injection-variance} applies
with $2Q$ weights of magnitude $O(1/(Qh))$ and gives
\[
\operatorname{Var}(W_i^{\rm inj})
\le
C M^2\cdot 2Q\cdot O\!\left(\frac1{Q^2h^2}\right)
=
O\!\left(\frac{M^2}{Qh^2}\right).
\]
For $W_i^{\rm orc}$, the $2Q$ errors are conditionally independent and
centred with $|\epsilon|\le\sigma$, so their variances simply add:
\[
\operatorname{Var}(W_i^{\rm orc})
\le
\frac{1}{Q^2h^2}\sum_{k=1}^Q\bigl(\operatorname{Var}(\epsilon_k)
+\operatorname{Var}(\epsilon_k')\bigr)
\le
\frac{2Q\sigma^2}{Q^2h^2}
=
\frac{2\sigma^2}{Qh^2}.
\]
Adding the two and using $M^2+\sigma^2\le(M+\sigma)^2=M_\sigma^2$ gives
$\operatorname{Var}(W_i)=O(M_\sigma^2/(Qh^2))$ per coordinate, hence
$O(dM_\sigma^2/(Qh^2))$ after summing over the $d$ coordinates.  The point of
the display is that the two sources of fluctuation are of exactly the same
shape and are both damped by the same factor $1/Q$; the oracle noise
therefore cannot change the exponent unless it changes the order of
$M_\sigma$; and in the frequently assumed case $\sigma\le M$ it does not even
change the constant by more than a factor of four, since then
$M_\sigma^2\le4M^2$.

Finally, the conditional mean differs from a vector of norm at most
$(1+\zeta_s)B$ by at most $\Delta_{h,Q}$.  Summing the coordinate variances
and adding the squared norm of this mean proves
\eqref{eq:bandit-second-moment}.
\end{proof}

The inner layer also remains valid with sampled values, and no new lemma is
needed: the marginal estimate for a lifted element is now a difference of
four observations, each drawn from a uniformly injected round and each
corrupted by oracle noise, so it lies in $[-2M_\sigma,2M_\sigma]$ and, by
\eqref{eq:unbiased-value} together with \eqref{eq:noise-model}, is
conditionally unbiased for the corresponding true marginal of
$\overline f_b$.  Lemma~\ref{lem:noisy-balance} applies with
$\bar M=2M_\sigma$.  Applying it independently to the $dm$ lifted elements
gives an $O(dmM_\sigma\sqrt N)$ inner learning term, which for $\sigma=0$
is the exact-oracle bound.  Note that the observed marginal estimates need not
satisfy $\widehat g^+_b+\widehat g^-_b\ge0$: admissibility is required only of
the true marginals of the submodular function $\overline f_b$, and those are
what Proposition~\ref{prop:weighted-reduction} manipulates.

\section{Proof of the limited-query frontier}
\label{sec:frontier-proof}

\begin{proof}[Proof of Theorem~\ref{thm:zo-frontier}]
All oracle locations needed for one update are fixed before any value
feedback from the current objective is received.  We may therefore hold the
algorithmic state fixed
over a block, play its reward-bearing action on every physical round, and
distribute the post-decision calls across the block.  Unlike in the bandit
conversion, these calls do not replace the played action, so the block
reward is exactly $H\overline f_b(p_b)$ and no feasibility restriction on the
probe points is needed.  Throughout, every observation is a call to the noisy
oracle \eqref{eq:noise-model}; the block machinery of
Appendix~\ref{sec:blocks} is stated in that model, so noise enters only
through $M_\sigma$.

Set
\begin{equation}
\ell=\frac{1-4\delta}{5},
\qquad
\xi=\frac{1+\delta}{5},
\qquad
k=\lceil T^\delta\rceil,
\label{eq:zo-frontier-parameters}
\end{equation}
and take
\[
H=\left\lceil C_{\rm q}dT^\ell\right\rceil,
\qquad
m=Q=\lceil T^\xi\rceil,
\qquad
h=\frac{T^{-\xi}}{2},
\]
where $C_{\rm q}$ is a sufficiently large numerical constant.  These are
exactly the parameters \eqref{eq:frontier-eps} of the theorem: the field
accuracy is $\varepsilon_\delta=T^{-\xi}=T^{-(1+\delta)/5}$, and
$h=\varepsilon_\delta/2$, $Q=\lceil\varepsilon_\delta^{-1}\rceil$ is the
direct algorithm's own convention, so the only thing this proof changes about
the query list is the value of $\varepsilon$.  Note
$\ell\ge0$ exactly because $\delta\le1/4$, and that $\delta=1/4$ gives
$\ell=0$, $\varepsilon_{1/4}=T^{-1/4}$, recovering the parameters of
Corollary~\ref{cor:zo-balanced} with a block length constant in $T$.  Divide the horizon into
$N=\lfloor T/H\rfloor$ complete blocks and adopt the notation of
Appendix~\ref{sec:blocks}.  At the start of block $b$, choose the outer state,
all inner binary decisions, and the exploitation action $p_b$ from
previous-block feedback and fresh pre-block randomization only, and play
$p_b$ throughout $I_b$.

One update on $\overline f_b$ needs a nonadaptive list of
$J=O(d(m+Q))=O(dT^\xi)$ value labels.  Because $\delta+\ell=\xi$, the choice
of $C_{\rm q}$ ensures $J\le kH$.  Partition the labels into $k$ groups of
size at most $H$ and independently inject each group uniformly without
replacement into the $H$ rounds.  After the played action on a round is
committed, issue the labels assigned to that round.  This uses at most $k$
oracle calls per round, and every observed label at location $u$ has
expectation $\overline f_b(u)$ by \eqref{eq:unbiased-value}, the two sources
of randomness being which round realizes the label and the oracle error on
that call.  Both are centred.

Use the sampled values to form all Double-Greedy marginals and the outer
finite-difference estimator $\widehat q_b$.  Applying
Lemma~\ref{lem:injection-variance} within each group and using independence
across groups, Lemma~\ref{lem:block-simulation} gives
\begin{align}
\left\|\E[\widehat q_b]-
\widetilde q_s(\overline f_b,x_b)\right\|_2
&\le
O\!\left(\sqrt d\left[Lh+\frac{B+L}{Q}\right]\right)
=\Delta_{h,Q},
\label{eq:zo-frontier-bias}\\
\E\norm{\widehat q_b}_2^2
&\le
O\!\left(B^2+\Delta_{h,Q}^2+
\frac{dM_\sigma^2}{Qh^2}\right)
=V^{\rm b}_{h,Q}.
\label{eq:zo-frontier-variance}
\end{align}
The sampled Double-Greedy marginals are bounded by $2M_\sigma$ and unbiased,
so Lemma~\ref{lem:noisy-balance} gives $O(M_\sigma\sqrt N)$ expected balance
regret per lifted element, and Lemma~\ref{lem:olo} applied to the
meta-rounds, with step size $\eta=D/\sqrt{NV^{\rm b}_{h,Q}}$ and the
splitting \eqref{eq:field-split}, incurs
$O(D\sqrt{NV^{\rm b}_{h,Q}})$ expected linear regret plus the bias term
already accounted for.

Applying the factor-revealing composition of Section~\ref{sec:master} to the
block averages $\overline f_1,\ldots,\overline f_N$ and multiplying by $H$
therefore gives
\begin{equation}
\begin{aligned}
\Reg_{\alpha^\star}(T)
\le{}&
O\!\left(dmM_\sigma\sqrt{TH}\right)
+O\!\left(\frac{B\sqrt d\,T}{m}\right)
+O\!\left(D\sqrt{THV^{\rm b}_{h,Q}}\right)\\
&+O(D\Delta_{h,Q}T)+O(MH),
\end{aligned}
\label{eq:zo-frontier-master}
\end{equation}
where the final term covers the incomplete block.  With the parameter
choices above, $\Delta_{h,Q}=O(T^{-\xi})$ and
$V^{\rm b}_{h,Q}=O(1+T^\xi)$ up to problem-dependent factors, the latter now
carrying $M_\sigma^2$ in place of $M^2$.  The first, second, and bias terms have exponent
\[
\xi+\frac{1+\ell}{2}
=1-\xi
=\frac{4-\delta}{5},
\]
the stochastic-field term has exponent $(1+\ell+\xi)/2=(7-3\delta)/10$, and
the incomplete-block term has exponent $\ell$; both of the latter are smaller
than $(8-2\delta)/10=(4-\delta)/5$.  Substituting in
\eqref{eq:zo-frontier-master} proves \eqref{eq:zo-frontier}.  The noise level
appears only inside $M_\sigma$, which sits in the constant, so the frontier
holds at every noise level and its exponents are unaffected by any $\sigma$
held fixed as $T$ grows.
\end{proof}


\section{Offline implications of constant sequences}
\label{app:offline-ceiling}

This appendix records the standard constant-sequence argument behind the
brief discussion in Section~\ref{sec:offline-ceiling}.  Throughout it,
$\varepsilon$ denotes an offline approximation slack and is unrelated to the
field accuracy of Section~\ref{sec:zeroth-order}, which does not appear here.  A computational
conclusion requires both an explicit regret rate and a normalization of the
offline optimum.

\begin{proposition}[Constant-sequence implication]
\label{prop:online-offline}
Suppose that, on the constant sequence \(f_t\equiv f\), an online algorithm
satisfies
\[
\E\sum_{t=1}^T f(p_t)
\ge
\alpha T\max_{o\in K}f(o)-R(T).
\]
If \(\varsigma\) is uniform on \([T]\) and independent of the algorithm, then
\begin{equation}
\E f(p_\varsigma)
\ge
\alpha\max_{o\in K}f(o)-\frac{R(T)}{T}.
\label{eq:constant-sequence}
\end{equation}
In particular, if \(R(T)=o(T)\), the expected approximation converges to
\(\alpha\) for every fixed instance.  If moreover
\(R(T)\le CT^\varkappa\) for some \(\varkappa<1\), and a known
\(\mathsf v>0\) satisfies \(\max_{o\in K}f(o)\ge\mathsf v\), then
\[
T\ge
\left(\frac{C}{\varepsilon\mathsf v}\right)^{1/(1-\varkappa)}
\]
gives an \((\alpha-\varepsilon)\)-approximation in expectation.
\end{proposition}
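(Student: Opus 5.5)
The plan is a direct averaging argument followed by two elementary consequences.

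For \eqref{eq:constant-sequence}, condition on the full run of the online algorithm, which produces $p_1,\ldots,p_T$. Draw $\varsigma$ uniformly from $[T]$, independently of that run. Then
\[
\E\bigl[f(p_\varsigma)\mid p_1,\ldots,p_T\bigr]=\frac1T\sum_{t=1}^Tf(p_t).
\]
Taking expectations and using the tower rule gives $\E f(p_\varsigma)=\frac1T\E\sum_tf(p_t)$. Dividing the hypothesis by $T$ then yields \eqref{eq:constant-sequence} at once. Two points need a brief check. First, each $p_t\in K$, so $p_\varsigma$ is a feasible random output. Second, independence of $\varsigma$ is what makes the conditional average legitimate: the algorithm's randomization never sees $\varsigma$.

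For the first consequence, $R(T)=o(T)$ makes $R(T)/T\to0$. So $\E f(p_\varsigma)\ge\alpha\max_Kf-o(1)$ for a fixed instance, and the expected approximation converges to $\alpha$ in the sense of a liminf lower bound.

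For the quantitative consequence, start from $R(T)/T\le CT^{\varkappa-1}$. Since $\varkappa<1$, this quantity is at most $\varepsilon\mathsf v$ exactly when $T^{1-\varkappa}\ge C/(\varepsilon\mathsf v)$, which is the stated threshold. Because $\mathsf v\le\max_Kf$, we get $\varepsilon\mathsf v\le\varepsilon\max_Kf$. Hence $\E f(p_\varsigma)\ge(\alpha-\varepsilon)\max_Kf$.

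No step is a real obstacle. The only subtlety is the normalization: $R(T)$ is an additive error, so converting it into a multiplicative slack needs the known lower bound $\mathsf v$. That is why $\mathsf v$ appears in the threshold and why I use $\mathsf v\le\max f$ rather than an upper bound on $\max f$. If one wants the threshold to be an integer, take the ceiling of the displayed value; this does not affect the argument.
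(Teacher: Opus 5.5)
Your proposal is correct and follows the paper's own proof. Both average over an independent uniform index to get $\E f(p_\varsigma)=T^{-1}\E\sum_{t\le T}f(p_t)$, divide the hypothesis by $T$, and then substitute the threshold on $T$ together with $\mathsf v\le\max_{o\in K}f(o)$. Your explicit tower-rule step, and your reading of ``converges to $\alpha$'' as a liminf lower bound, are clarifications the paper leaves implicit.
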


\begin{proof}
Divide the assumed guarantee by \(T\) and use
\(\E f(p_\varsigma)=T^{-1}\E\sum_{t\le T}f(p_t)\).  The two consequences
follow by letting \(R(T)/T\to0\) and by substituting the displayed lower
bound on \(T\).
\end{proof}

\begin{proposition}[Online factors imply offline factors]
\label{prop:online-le-offline}
Let \(\mathcal A\) attain
\(\Reg_\alpha(T)\le CT^\varkappa\), with fixed \(\varkappa<1\), on every
admissible sequence in either feedback model of Section~\ref{sec:model}.
Assume its per-round computation and query budget are polynomial in \(T\)
and the problem parameters.  Suppose also that a known
\(\mathsf v>0\) satisfies \(\max_{o\in K}f(o)\ge\mathsf v\), and that a
computable upper bound on \(C\), together with \(\mathsf v^{-1}\), is
polynomial in the offline input parameters.  Then, for every
\(\varepsilon>0\), simulating \(\mathcal A\) on a constant sequence gives a
polynomial-time offline \((\alpha-\varepsilon)\)-approximation in
expectation.
\end{proposition}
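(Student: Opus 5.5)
The plan is to reduce the statement to Proposition~\ref{prop:online-offline} by running $\mathcal A$ as an offline subroutine. Given an offline instance $(f,K)$, feed $\mathcal A$ the constant sequence $f_t\equiv f$ for $t=1,\ldots,T$. Every oracle query that $\mathcal A$ makes to the round-$t$ function is answered with the offline value oracle for $f$. This is legitimate for the following reasons:
\begin{itemize}
\item $f$ satisfies every standing assumption of Section~\ref{sec:model}, so the constant sequence is admissible and is fixed obliviously.
\item Exact offline values are the $\sigma=0$ case of \eqref{eq:noise-model}.
\item In the bandit model, the single observation $f(p_t)$ is itself an allowed offline query.
\end{itemize}
Hence the regret hypothesis $\Reg_\alpha(T)\le CT^\varkappa$ holds on this run. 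This is exactly the hypothesis of Proposition~\ref{prop:online-offline} with $R(T)=CT^\varkappa$.

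Next, draw $\varsigma$ uniformly from $[T]$, independently of $\mathcal A$, and output $p_\varsigma$. The point $p_\varsigma$ lies in $K$ because every action of $\mathcal A$ is feasible. By \eqref{eq:constant-sequence},
\[
\E f(p_\varsigma)\ge\alpha\max_K f-CT^{\varkappa-1}.
\]
I then choose
\[
T=\left\lceil(\bar C/(\varepsilon\mathsf v))^{1/(1-\varkappa)}\right\rceil,
\]
where $\bar C\ge C$ is the computable upper bound. With this choice $CT^{\varkappa-1}\le\varepsilon\mathsf v\le\varepsilon\max_K f$, which gives the $(\alpha-\varepsilon)$-approximation in expectation. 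Equivalently, this is the second consequence of Proposition~\ref{prop:online-offline}.

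The remaining step, which I expect to be the only real obstacle, is the complexity accounting. Because $\varkappa<1$ is fixed, $T$ is polynomial in $\bar C$, $\mathsf v^{-1}$ and $\varepsilon^{-1}$, and these are polynomial in the offline input by hypothesis. The total work is $T$ times the per-round computation and query budget, and that budget is polynomial in $T$ and the problem parameters. A polynomial composed with a polynomial is polynomial, so the whole simulation runs in polynomial time.

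Two points need care here. First, $T$ must be computable in advance, which holds because the bound $\bar C$ is computable rather than only existent. Second, the parameters $D,M,B,L$ that $\mathcal A$ needs must be supplied from the offline input, and I would state them as part of that input. Sampling $\varsigma$ can be done without storing all $T$ actions, for example by fixing $\varsigma$ before the run starts.
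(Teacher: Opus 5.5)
Your proposal is correct and follows essentially the same route as the paper. Both simulate $\mathcal A$ on $f_t\equiv f$, return $p_\varsigma$ for an independent uniform $\varsigma\in[T]$, take $T=\lceil(\bar C/(\varepsilon\mathsf v))^{1/(1-\varkappa)}\rceil$, invoke Proposition~\ref{prop:online-offline}, and finish with polynomial complexity accounting. The only difference is that the paper also lets the simulator see only noisy values, which is why it insists on the random rather than the best played point, whereas you treat the exact $\sigma=0$ case; the argument is identical in either case.
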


\begin{proof}
Given an offline instance \(f\), run \(\mathcal A\) on \(f_t\equiv f\),
record the played points \(p_1,\ldots,p_T\), and return \(p_\varsigma\) for a
uniformly random \(\varsigma\in[T]\) independent of the run.  No value of
\(f\) is needed to make this selection, which matters because under
\eqref{eq:noise-model} the simulator observes only noisy values and, in the
bandit model, never observes the reward it earns; returning ``the best point
played'' would not be implementable.  With
\[
T=
\left\lceil
\left(\frac{C}{\varepsilon\mathsf v}\right)^{1/(1-\varkappa)}
\right\rceil,
\]
Proposition~\ref{prop:online-offline} yields
\[
\E f(p_\varsigma)
\ge \alpha\max_{o\in K}f(o)-\varepsilon\mathsf v
\ge(\alpha-\varepsilon)\max_{o\in K}f(o).
\]
If exact values happen to be available one may instead return the best played
point, which can only increase the left-hand side.  The assumptions make the
horizon and the total simulation cost polynomial.
\end{proof}

The lower bound \(\mathsf v\) is essential for this quantitative
polynomial-time implication: without a normalization of the optimum, an
additive \(o(T)\) term need not become relatively negligible in polynomially
many rounds.  This observation does not prove that \(0.401\) is optimal; it
only says that an online improvement with the stated complexity properties
would also yield an offline improvement.

\section{Proof of the endpoint comparison inequalities}
\label{app:endpoint}

This appendix proves the two inequalities
\eqref{eq:early-comparison}--\eqref{eq:late-comparison} used in
Lemma~\ref{lem:endpoint}.  They are the specializations to $z=x$ and to the
constant direction $x(\tau)\equiv x$ of Lemmas~5.7 and~5.9 of
\cite{BF2024}, restated with an arbitrary comparator $o$ in place of an
optimizer.  Since these are the nonstandard offline inequalities imported by
the outer certificate, and since comparator-uniformity is used
quantitatively, we give complete proofs.  Throughout,
$f:[0,1]^d\to\R_{\ge0}$ is differentiable, nonnegative
and DR-submodular, and $o\in[0,1]^d$ is arbitrary.  The two general-purpose
lemmas of Appendices~\ref{app:toolbox} and~\ref{app:expbound} are stated in an
unspecified dimension $n$, since Lemma~\ref{lem:expbound2} applies the second
of them on a doubled ground set with $n=2d$.

\subsection{DR-submodular toolbox}
\label{app:toolbox}

\begin{lemma}[Basic properties]
\label{lem:dr-toolbox}
Let $f:[0,1]^n\to\R_{\ge0}$ be differentiable and DR-submodular.  Then:
\begin{enumerate}[leftmargin=26pt,label=\textup{(P\arabic*)}]
\item \label{it:P1}
      for all $u\le w$ in $[0,1]^n$ and $\varDelta\ge0$ with
      $w+\varDelta\le\one$,
      \[
      f(u+\varDelta)-f(u)\ge f(w+\varDelta)-f(w);
      \]
\item \label{it:P2}
      for all $u\le w$ in $[0,1]^n$ and $\lambda\in[0,1]$,
      \[
      f(\lambda u+(1-\lambda)w)\ge\lambda f(u)+(1-\lambda)f(w);
      \]
\item \label{it:P3}
      $\ip{\nabla f(u)}{\varDelta}\ge f(u+\varDelta)-f(u)$ for
      $\varDelta\ge0$ and $u+\varDelta\le\one$;
\item \label{it:P4}
      $\ip{\nabla f(u)}{\varDelta}\le f(u)-f(u-\varDelta)$ for
      $\varDelta\ge0$ and $u-\varDelta\ge\zero$.
\end{enumerate}
\end{lemma}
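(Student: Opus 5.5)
All four properties follow from one-dimensional integration of the antitone gradient, so I will work with directional derivatives along nonnegative directions and reduce every claim to a comparison of gradients at ordered points.

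For \ref{it:P1}, write
\[
f(u+\varDelta)-f(u)=\int_0^1\ip{\nabla f(u+\vartheta\varDelta)}{\varDelta}\,d\vartheta ,
\]
and similarly for $w$. Since $u+\vartheta\varDelta\le w+\vartheta\varDelta$, both points lie in the cube: the second because $w+\varDelta\le\one$, and the first because it is dominated by the second. Antitonicity then gives $\partial_if(u+\vartheta\varDelta)\ge\partial_if(w+\vartheta\varDelta)$ coordinatewise. Because $\varDelta\ge0$, the integrands compare in the same direction, and integrating proves \ref{it:P1}.

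For \ref{it:P3} and \ref{it:P4}, restrict $f$ to a segment with nonnegative direction. The point $u+\vartheta\varDelta\ge u$ gives
\[
\ip{\nabla f(u+\vartheta\varDelta)}{\varDelta}\le\ip{\nabla f(u)}{\varDelta},
\]
and integrating over $\vartheta\in[0,1]$ gives \ref{it:P3}. Symmetrically, $u-\vartheta\varDelta\le u$ gives
\[
\ip{\nabla f(u-\vartheta\varDelta)}{\varDelta}\ge\ip{\nabla f(u)}{\varDelta},
\]
and integrating gives
\[
f(u)-f(u-\varDelta)\ge\ip{\nabla f(u)}{\varDelta},
\]
which is \ref{it:P4}. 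This is exactly the argument the paper already used for the lattice comparison \eqref{eq:lattice}.

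For \ref{it:P2}, set $\varDelta=w-u\ge0$ and $g(\vartheta)=f(u+\vartheta\varDelta)$ for $\vartheta\in[0,1]$. Its derivative $g'(\vartheta)=\ip{\nabla f(u+\vartheta\varDelta)}{\varDelta}$ is nonincreasing in $\vartheta$, because the argument increases coordinatewise and $\varDelta\ge0$. Hence $g$ is concave on $[0,1]$, and concavity at $\vartheta=1-\lambda$ yields
\[
f(\lambda u+(1-\lambda)w)=g(1-\lambda)\ge\lambda g(0)+(1-\lambda)g(1),
\]
which is \ref{it:P2}. Nonnegativity of $f$ is not needed for any of these four items; it is listed only because the ambient standing hypothesis is used later.

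The only point needing care is regularity, since $f$ is assumed differentiable but not $C^1$. I would justify the fundamental-theorem step through monotonicity of the derivative. Along a nonnegative direction, the map $\vartheta\mapsto\ip{\nabla f(u+\vartheta\varDelta)}{\varDelta}$ is monotone, hence Riemann integrable. A differentiable function on an interval with integrable derivative satisfies the fundamental theorem of calculus. Alternatively, for \ref{it:P2}--\ref{it:P4} one can appeal to the mean value theorem together with the fact that a differentiable function with nonincreasing derivative is concave, which avoids integration entirely. I expect this regularity remark to be the only obstacle; the inequalities themselves are routine.
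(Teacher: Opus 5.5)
Your proof is correct and follows the paper's argument essentially line by line. For (P1), (P3) and (P4) you integrate the antitone gradient along a nonnegative segment, and for (P2) you use concavity of $\vartheta\mapsto f(u+\vartheta(w-u))$. Your extra remark, that the fundamental theorem of calculus applies because the directional derivative is monotone and hence Riemann integrable, fills a regularity point the paper leaves implicit.
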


\begin{proof}
DR-submodularity says that $\nabla f$ is antitone: $u\le w$ implies
$\nabla f(u)\ge\nabla f(w)$ coordinatewise.

Item~\ref{it:P1}: since $u+\vartheta\varDelta\le w+\vartheta\varDelta$ for every
$\vartheta\in[0,1]$ and $\varDelta\ge0$,
\[
f(u+\varDelta)-f(u)
=\int_0^1\ip{\nabla f(u+\vartheta\varDelta)}{\varDelta}\,d\vartheta
\ge\int_0^1\ip{\nabla f(w+\vartheta\varDelta)}{\varDelta}\,d\vartheta
=f(w+\varDelta)-f(w).
\]

Item~\ref{it:P2}: the map $\vartheta\mapsto f(u+\vartheta(w-u))$ has derivative
$\ip{\nabla f(u+\vartheta(w-u))}{w-u}$, which is nonincreasing in $\vartheta$
because $w-u\ge0$ and $\nabla f$ is antitone.  Hence the map is concave on
$[0,1]$, and item~\ref{it:P2} is concavity evaluated at the endpoints.

Item~\ref{it:P3}: $f(u+\varDelta)-f(u)
=\int_0^1\ip{\nabla f(u+\vartheta\varDelta)}{\varDelta}d\vartheta
\le\ip{\nabla f(u)}{\varDelta}$, again by antitonicity and
$\varDelta\ge0$.  Item~\ref{it:P4} is the same computation on the segment from
$u-\varDelta$ to $u$.
\end{proof}

\begin{lemma}[Closure under $\odot$ and $\oplus$]
\label{lem:closure}
For every $y\in[0,1]^n$, the functions $a\mapsto f(a\odot y)$ and
$a\mapsto f(a\oplus y)$ are nonnegative and DR-submodular on $[0,1]^n$.
\end{lemma}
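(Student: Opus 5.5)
Nonnegativity is immediate, since both maps take values of $f$ at points of $[0,1]^n$: $a\odot y\in[0,1]^n$ and $(a\oplus y)_i=a_i+y_i-a_iy_i\in[0,1]$. For DR-submodularity I will verify the diminishing-returns inequality directly rather than differentiate. This mirrors the proof of Lemma~\ref{lem:box}, which is the case $y=x$ of the $\odot$ statement.

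Fix $a\le b$ in $[0,1]^n$, a coordinate $i$, and $\delta\ge0$ with $b_i+\delta\le1$. The key observation is that both maps are coordinatewise affine and nondecreasing in $a$:
\[
a\odot y=a\odot y,\qquad a\oplus y=y+a\odot(\one-y).
\]
Consequently $a\le b$ implies $a\odot y\le b\odot y$ and $a\oplus y\le b\oplus y$. Moreover, the increment $\delta e_i$ in $a$ becomes the increment $\delta y_ie_i$ in the $\odot$ case and $\delta(1-y_i)e_i$ in the $\oplus$ case. Both are nonnegative and both are the same vector $\varDelta$ at $a$ and at $b$.

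Set $u=a\odot y$, $w=b\odot y$ and $\varDelta=\delta y_ie_i$ in the first case; in the second, $u=a\oplus y$, $w=b\oplus y$ and $\varDelta=\delta(1-y_i)e_i$. Then
\[
w+\varDelta=(b+\delta e_i)\odot y\le\one
\quad\text{or}\quad
w+\varDelta=(b+\delta e_i)\oplus y\le\one .
\]
Property~\ref{it:P1} of Lemma~\ref{lem:dr-toolbox} now gives $f(u+\varDelta)-f(u)\ge f(w+\varDelta)-f(w)$. Because $u+\varDelta$ equals $(a+\delta e_i)\odot y$ or $(a+\delta e_i)\oplus y$, and likewise for $w+\varDelta$, this is exactly the diminishing-returns inequality for $g(a)=f(a\odot y)$ and for $g(a)=f(a\oplus y)$.

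If the paper's gradient definition of DR-submodularity is required instead, I would note that differentiability of $g$ follows from the chain rule. The formulas are $\nabla g(a)=y\odot\nabla f(a\odot y)$ in the $\odot$ case and $\nabla g(a)=(\one-y)\odot\nabla f(a\oplus y)$ in the $\oplus$ case. Antitonicity of $\nabla g$ then follows from antitonicity of $\nabla f$, monotonicity of the inner maps, and the nonnegative multipliers $y$ and $\one-y$.

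There is no real obstacle. The only point needing care is checking that the shifted points stay inside the cube, so that~\ref{it:P1} applies. That is the displayed containment $w+\varDelta\le\one$, which holds because $b+\delta e_i\in[0,1]^n$.
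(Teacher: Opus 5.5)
Your proof is correct. The chain-rule paragraph you give as a fallback is essentially the paper's proof. Both maps have the form $a\mapsto f(c+Aa)$, with $A=\operatorname{diag}(y)$ and $c=\zero$, or with $A=\operatorname{diag}(\one-y)$ and $c=y$. Since $A$ is a nonnegative diagonal matrix and the inner map is order preserving, $\nabla g(a)=A\nabla f(c+Aa)$ is antitone.

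Your primary route, the direct finite-difference verification through \ref{it:P1}, is instead the argument the paper uses for Lemma~\ref{lem:box}. It leaves one step implicit. The paper defines DR-submodularity by antitonicity of the gradient, so to conclude from the difference inequality you still have to divide by $\delta$ and let $\delta\downarrow0$ to get $\partial_ig(a)\ge\partial_ig(b)$. You also need a small one-sided or intermediate-point argument in the boundary case $b_i=1$, where there is no room for a forward increment. The chain-rule version avoids both issues, so it is the cleaner way to close the lemma against the paper's definition; your proposal includes it, so nothing is missing.
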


\begin{proof}
Both maps are of the form $a\mapsto f(c+ A a)$ with $A$ a diagonal matrix
with entries $y_i\ge0$ (for $\odot$, with $c=\zero$) or $1-y_i\ge0$ (for
$\oplus$, with $c=y$), and both are order preserving.  Hence the gradient of
the composition is $A\nabla f(c+Aa)$, a nonnegative diagonal scaling of an
antitone map composed with an order-preserving map, which is antitone.
Nonnegativity is inherited from $f$.
\end{proof}

\subsection{An exponential lower bound}
\label{app:expbound}

The next lemma is the engine of both comparisons.  It is Lemma~4.1 of
\cite{BF2024}; we reproduce its proof.

\begin{lemma}[Basic exponential bound]
\label{lem:expbound}
Let $F:[0,1]^n\to\R_{\ge0}$ be differentiable, nonnegative and
DR-submodular, let $t\ge0$, let $\xi:[0,t]\to[0,1]^n$ be integrable and let
$a\in[0,1]^n$.  Write $y(t)=\int_0^t\xi(\tau)d\tau$.  Then
\begin{equation}
F\bigl(\one-a\odot e^{-y(t)}\bigr)
\ge
e^{-t}
\left[
F(\one-a)
+
\sum_{i=1}^\infty\frac1{i!}
\int_{\tau\in[0,t]^i}
F\Bigl((\one-a)\oplus\textstyle\bigoplus_{j=1}^i\xi(\tau_j)\Bigr)d\tau
\right].
\label{eq:expbound}
\end{equation}
\end{lemma}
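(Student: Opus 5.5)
We will prove \eqref{eq:expbound} by a Gr\"onwall-type comparison in the time variable $t$. The natural route is to show that both sides, viewed as functions of $t$, satisfy differential inequalities of the same linear form, with the left side dominating.

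Write $\Phi(t)=F(\one-a\odot e^{-y(t)})$ and let $\Psi(t)$ denote the bracketed series. Write $z(t)=\one-a\odot e^{-y(t)}$, so that $\dot z(t)=a\odot e^{-y(t)}\odot\xi(t)=(\one-z(t))\odot\xi(t)$ almost everywhere.

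By the chain rule, $\dot\Phi(t)=\ip{\nabla F(z)}{(\one-z)\odot\xi(t)}$. Here $(\one-z)\odot\xi\ge0$ and $z+(\one-z)\odot\xi=z\oplus\xi\le\one$, so we cannot use \ref{it:P3} directly, since it gives the wrong direction. Instead we use the concavity of $\vartheta\mapsto F(z+\vartheta(\one-z)\odot\xi)$ along a nonnegative direction, combined with the fact that this derivative lower bounds the difference quotient at the far end. Concretely:
\[
\ip{\nabla F(z)}{(\one-z)\odot\xi}\ge F(z\oplus\xi)-F(z).
\]
This is \ref{it:P3} read in reverse. It fails as stated, since \ref{it:P3} gives an upper bound on the increment. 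So we must instead argue at the level of the integral form: $\Phi(t)=\Phi(0)+\int_0^t\dot\Phi$. Concavity on segments yields $\dot\Phi\ge F(z\oplus\xi)-F(z)$ only after replacing the gradient at $z$ by an averaged increment.

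The cleaner route, which I would adopt, is the multiplier argument on $e^{t}\Phi(t)$. We aim to prove
\[
\frac{d}{dt}\bigl(e^t\Phi(t)\bigr)\ge e^t F\bigl(z(t)\oplus\xi(t)\bigr).
\]
Iterating this inequality, after expanding $F(z\oplus\xi)$ again as $z$ evolves, produces the series term by term. The $i$th term arises from $i$ nested applications, symmetrised over $[0,t]^i$ with the factor $1/i!$. Equivalently, we proceed by induction on a truncation level: define $\Psi_k$ as the sum of the first $k$ terms, and show $\Phi\ge e^{-t}\Psi_k$ by applying the same inequality to the DR-submodular function $G(w)=F(w\oplus\xi(\tau))$. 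This $G$ is DR-submodular and nonnegative by Lemma~\ref{lem:closure}, and it is evaluated along the same exponential path, since $(\one-a\odot e^{-y})\oplus\xi=\one-(a\odot(\one-\xi))\odot e^{-y}$. This identity closes the recursion. Monotone convergence then passes $k\to\infty$; all terms are nonnegative, and the series converges because each term is bounded by $\sup F\cdot t^i/i!$.

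The main obstacle is the one-step inequality $\dot\Phi+\Phi\ge F(z\oplus\xi)$. What actually holds is the following. Because $\xi$ is integrable rather than continuous, I would first prove the discretised version: over a small step $\delta$,
\[
z(t+\delta)=z\oplus\bigl(\one-e^{-\delta\xi}\bigr)\approx(1-\delta)\,z+\delta\,(z\oplus\xi)\quad\text{in the sense of coordinatewise lower bounds.}
\]
Since $z\le z\oplus\xi$, property \ref{it:P2} applies and gives
\[
F(z(t+\delta))\ge(1-\delta)F(z)+\delta F(z\oplus\xi)+o(\delta).
\]
This is exactly the required inequality. Making this rigorous requires monotonicity along the nonnegative direction, via \ref{it:P1}, to handle the gap between $\one-e^{-\delta\xi}$ and $\delta\xi$. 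I would then pass to the limit using absolute continuity of $y$ and Lebesgue differentiation. Once this one-step inequality is in hand, the rest reduces to the induction and the convergence argument above.
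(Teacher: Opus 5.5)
Your architecture matches the paper's proof: the integrating factor $e^t$, induction on the truncation level, the shift $a\mapsto a\odot(\one-\xi(t))$, the symmetric-integrand count that produces $1/i!$, and monotone convergence. Your $G(w)=F(w\oplus\xi(\tau))$ is the same device as the paper's shift, via the identity you state.

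The problem is the step you call the main obstacle, which rests on a misreading of property~\ref{it:P3} of Lemma~\ref{lem:dr-toolbox}. That property says $\ip{\nabla F(u)}{\varDelta}\ge F(u+\varDelta)-F(u)$ for $\varDelta\ge0$ with $u+\varDelta\le\one$. An upper bound on the increment \emph{is} a lower bound on the directional derivative. So your assertion that $\ip{\nabla F(z)}{(\one-z)\odot\xi}\ge F(z\oplus\xi)-F(z)$ ``fails as stated'' is false: it is \ref{it:P3} with $u=z(t)$ and $\varDelta=(\one-z(t))\odot\xi(t)$. At every $t$ with $\dot y(t)=\xi(t)$ you have $\dot\Phi(t)=\ip{\nabla F(z(t))}{(\one-z(t))\odot\xi(t)}$, so this gives $\dot\Phi+\Phi\ge F(z\oplus\xi)$ almost everywhere in one line. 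This is exactly how the paper proceeds, with $a'=a\odot(\one-\xi(t))$ and $\one-a'=(\one-a)\oplus\xi(t)$. The proposal is also internally inconsistent here: the inequality you later set out to prove, $\dot\Phi+\Phi\ge F(z\oplus\xi)$, is literally the one you declared false.

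Your discretised detour effectively re-derives \ref{it:P3} from \ref{it:P2}, and as written its justification does not hold. Since $1-e^{-u}\le u$, the point $z(t+\delta)$ lies coordinatewise \emph{below} the segment point $(1-\delta)z+\delta(z\oplus\xi)$. Because $F$ is not monotone, neither \ref{it:P1} nor any monotonicity carries the \ref{it:P2} bound across that gap. What would close it is the Lipschitz bound that antitonicity provides, $\nabla F(\one)\le\nabla F(u)\le\nabla F(\zero)$, which makes the $O(\delta^2)$ displacement cost only $o(\delta)$. You would also need Lebesgue differentiation of $y$ when $\xi$ is not constant. Rather than repair this, drop it and apply \ref{it:P3} directly.

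Once that is done, the base case follows from $\frac{d}{dt}(e^t\Phi)\ge e^tF(z\oplus\xi)\ge0$; the paper instead proves it directly from \ref{it:P2}. You should also write out the step you only gesture at. The integrand is symmetric in $(\tau,\tau_1,\ldots,\tau_i)$ because $\oplus$ is commutative and associative. Hence integrating the level-$(k-1)$ bound over $\{\tau_1,\ldots,\tau_i\le\tau\le t\}$ gives $\frac1{(i+1)!}\int_{[0,t]^{i+1}}$ of that integrand. This is the paper's Leibniz-rule identity $h'=-h+\Theta_k$ in integrated form.
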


\begin{proof}
We prove by induction on $k\ge0$ that \eqref{eq:expbound} holds with the
infinite sum replaced by $\sum_{i=1}^k$; since every term of the sum is
nonnegative, monotone convergence then gives the lemma.

\emph{Base case $k=0$.}  Because $0\le y(t)\le t\one$, we have
$e^{-t}\le e^{-y(t)}\le\one$ coordinatewise, so
\[
\varpi:=\frac{e^{-y(t)}-e^{-t}\one}{1-e^{-t}}\in[0,1]^n
\]
(for $t=0$ the claim is trivial).  A direct computation gives
\[
\one-a\odot e^{-y(t)}
=
e^{-t}(\one-a)+(1-e^{-t})\bigl(\one-a\odot\varpi\bigr),
\]
and $\one-a\le\one-a\odot\varpi$ since $\varpi\le\one$.  Item~\ref{it:P2}
of Lemma~\ref{lem:dr-toolbox} and nonnegativity of $F$ give
\[
F\bigl(\one-a\odot e^{-y(t)}\bigr)
\ge
e^{-t}F(\one-a)+(1-e^{-t})F\bigl(\one-a\odot\varpi\bigr)
\ge
e^{-t}F(\one-a).
\]

\emph{Induction step.}  Assume the claim for $k-1$ (for every $a$ and every
$t$).  Let $g(t)=F(\one-a\odot e^{-y(t)})$; $g$ is absolutely continuous, and
for almost every $t$ the chain rule gives
\[
g'(t)
=
\ip{\nabla F\bigl(\one-a\odot e^{-y(t)}\bigr)}
{a\odot\xi(t)\odot e^{-y(t)}}.
\]
The direction $a\odot\xi(t)\odot e^{-y(t)}$ is nonnegative and
\[
\one-a\odot e^{-y(t)}+a\odot\xi(t)\odot e^{-y(t)}
=
\one-a\odot(\one-\xi(t))\odot e^{-y(t)}
\le\one,
\]
so item~\ref{it:P3} gives
\[
g'(t)
\ge
F\bigl(\one-a'\odot e^{-y(t)}\bigr)-g(t),
\qquad
a':=a\odot(\one-\xi(t)),
\]
and $\one-a'=(\one-a)\oplus\xi(t)$.  Applying the induction hypothesis at
level $k-1$ to $a'$,
\[
g'(t)+g(t)
\ge
\Theta_k(t)
:=
e^{-t}
\left[
F\bigl((\one-a)\oplus\xi(t)\bigr)
+
\sum_{i=1}^{k-1}\frac1{i!}
\int_{\tau\in[0,t]^i}
F\Bigl((\one-a)\oplus\xi(t)\oplus\textstyle\bigoplus_{j=1}^i\xi(\tau_j)\Bigr)
d\tau
\right].
\]
Let $h(t)$ denote the right-hand side of \eqref{eq:expbound} truncated at
$k$.  Since $\oplus$ is symmetric and associative, each integrand is a
symmetric function of $(\tau_1,\ldots,\tau_i)$, so the Leibniz rule gives
\[
\frac{d}{dt}\int_{[0,t]^i}F\Bigl((\one-a)\oplus\textstyle\bigoplus_{j\le i}
\xi(\tau_j)\Bigr)d\tau
=
i\int_{[0,t]^{i-1}}
F\Bigl((\one-a)\oplus\xi(t)\oplus\textstyle\bigoplus_{j\le i-1}\xi(\tau_j)
\Bigr)d\tau,
\]
whence, after reindexing,
\[
\begin{aligned}
h'(t)
&=
-h(t)
+
e^{-t}\left[
F\bigl((\one-a)\oplus\xi(t)\bigr)
+\sum_{i=1}^{k-1}\frac1{i!}\int_{[0,t]^i}
F\Bigl((\one-a)\oplus\xi(t)\oplus\textstyle\bigoplus_{j\le i}\xi(\tau_j)
\Bigr)d\tau\right]\\
&=
-h(t)+\Theta_k(t).
\end{aligned}
\]
Therefore $\frac{d}{dt}\bigl[e^t(g(t)-h(t))\bigr]
=e^t\bigl[g'(t)+g(t)-h'(t)-h(t)\bigr]\ge0$ almost everywhere, and
$g(0)=h(0)=F(\one-a)$, so $g(t)\ge h(t)$, which is the claim at level $k$.
\end{proof}

\begin{lemma}[Two-block exponential bound]
\label{lem:expbound2}
Let $f:[0,1]^d\to\R_{\ge0}$ be differentiable, nonnegative and
DR-submodular, let $b\in[0,1]^d$, let $a^{(1)},a^{(2)}\in[0,1]^d$, let
$\xi^{(1)},\xi^{(2)}:[0,t]\to[0,1]^d$ be integrable, and write
$y^{(i)}(t)=\int_0^t\xi^{(i)}$.  Then
\begin{align}
&f\Bigl(\one-b\odot a^{(1)}\odot e^{-y^{(1)}(t)}
-(\one-b)\odot a^{(2)}\odot e^{-y^{(2)}(t)}\Bigr)
\nonumber\\
&\qquad\ge
e^{-t}\Biggl[
f\bigl(\one-b\odot a^{(1)}-(\one-b)\odot a^{(2)}\bigr)
\nonumber\\
&\qquad\qquad
+\sum_{i=1}^\infty\frac1{i!}\int_{\tau\in[0,t]^i}
f\Bigl(
b\odot\bigl((\one-a^{(1)})\oplus\textstyle\bigoplus_{j\le i}\xi^{(1)}(\tau_j)
\bigr)
\nonumber\\
&\qquad\qquad\qquad\qquad\qquad\qquad
+(\one-b)\odot\bigl((\one-a^{(2)})\oplus\textstyle\bigoplus_{j\le i}
\xi^{(2)}(\tau_j)\bigr)
\Bigr)d\tau
\Biggr].
\label{eq:expbound2}
\end{align}
\end{lemma}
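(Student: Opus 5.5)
We reduce Lemma~\ref{lem:expbound2} to Lemma~\ref{lem:expbound}, applied on the doubled ground set $[0,1]^{2d}$ with $n=2d$. Define the linear map $\Lambda:[0,1]^{2d}\to[0,1]^d$ by
\[
\Lambda(w^{(1)},w^{(2)})=b\odot w^{(1)}+(\one-b)\odot w^{(2)},
\]
and set $F(w)=f(\Lambda(w))$. Each coordinate of $\Lambda(w)$ is a convex combination of $w^{(1)}_i$ and $w^{(2)}_i$, so $\Lambda$ maps the doubled cube into $[0,1]^d$. Hence $F$ is well defined, nonnegative and differentiable.

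The first step is to check that $F$ is DR-submodular. Its gradient is
\[
\nabla F(w)=\bigl(b\odot\nabla f(\Lambda w),\,(\one-b)\odot\nabla f(\Lambda w)\bigr).
\]
Because $\Lambda$ has nonnegative diagonal blocks, it is order preserving. Composing the antitone map $\nabla f$ with $\Lambda$ and rescaling by nonnegative diagonals yields an antitone map. This is the argument of Lemma~\ref{lem:closure}, now applied to a nonnegative $d\times 2d$ matrix instead of a square diagonal one.

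Next we apply Lemma~\ref{lem:expbound} to $F$ with $a=(a^{(1)},a^{(2)})$ and $\xi=(\xi^{(1)},\xi^{(2)})$, so that $y=(y^{(1)},y^{(2)})$. The left-hand side becomes
\[
F\bigl(\one-a\odot e^{-y(t)}\bigr)=f\bigl(b\odot(\one-a^{(1)}\odot e^{-y^{(1)}})+(\one-b)\odot(\one-a^{(2)}\odot e^{-y^{(2)}})\bigr).
\]
Since $b+(\one-b)=\one$, this equals the left side of \eqref{eq:expbound2}. The leading term $F(\one-a)$ becomes $f(\one-b\odot a^{(1)}-(\one-b)\odot a^{(2)})$ by the same identity.

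For the series terms we use that $\oplus$ acts coordinatewise on $[0,1]^{2d}$. Consequently
\[
(\one-a)\oplus\bigoplus_j\xi(\tau_j)=\Bigl((\one-a^{(1)})\oplus\bigoplus_j\xi^{(1)}(\tau_j),\;(\one-a^{(2)})\oplus\bigoplus_j\xi^{(2)}(\tau_j)\Bigr).
\]
Applying $\Lambda$ to this pair gives exactly the integrand of \eqref{eq:expbound2}.

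No step is a serious obstacle. The only point needing care is the DR-submodularity of $F$, which requires $\nabla f$ to be antitone on all of $[0,1]^d$. This holds because $\Lambda$ preserves order between points of the doubled cube and lands in $[0,1]^d$. The integrability and measurability hypotheses of Lemma~\ref{lem:expbound} transfer componentwise.
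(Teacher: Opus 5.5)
Your proposal is correct and follows the paper's proof essentially verbatim: you define $F(w^{(1)},w^{(2)})=f\bigl(b\odot w^{(1)}+(\one-b)\odot w^{(2)}\bigr)$ on the doubled ground set and verify DR-submodularity via the nonnegatively rescaled antitone gradient. You then apply Lemma~\ref{lem:expbound} with $a=(a^{(1)},a^{(2)})$ and $\xi=(\xi^{(1)},\xi^{(2)})$, using $b+(\one-b)=\one$ and the blockwise action of $\oplus$ to read off \eqref{eq:expbound2}.
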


\begin{proof}
Work on the ground set $[d]\sqcup[d]$ of size $n=2d$ and define
$F:[0,1]^{2d}\to\R_{\ge0}$ by
\[
F\bigl(c^{(1)},c^{(2)}\bigr)
=
f\bigl(b\odot c^{(1)}+(\one-b)\odot c^{(2)}\bigr).
\]
$F$ is nonnegative, and for $u\in[d]$,
\[
\frac{\partial F}{\partial c^{(1)}_u}
=
b_u\,\partial_uf\bigl(b\odot c^{(1)}+(\one-b)\odot c^{(2)}\bigr),
\qquad
\frac{\partial F}{\partial c^{(2)}_u}
=
(1-b_u)\,\partial_uf\bigl(\cdot\bigr),
\]
with nonnegative prefactors.  The argument
$b\odot c^{(1)}+(\one-b)\odot c^{(2)}$ is coordinatewise nondecreasing in
$(c^{(1)},c^{(2)})$, and $\nabla f$ is antitone, so $\nabla F$ is antitone
and $F$ is DR-submodular.

Apply Lemma~\ref{lem:expbound} to $F$ with $a=(a^{(1)},a^{(2)})$ and
$\xi=(\xi^{(1)},\xi^{(2)})$.  All operations act blockwise, and
\[
F(\one-a)
=
f\bigl(b\odot(\one-a^{(1)})+(\one-b)\odot(\one-a^{(2)})\bigr)
=
f\bigl(\one-b\odot a^{(1)}-(\one-b)\odot a^{(2)}\bigr),
\]
which gives exactly \eqref{eq:expbound2}.
\end{proof}

\subsection{A comparison lemma}

\begin{lemma}[Comparison lemma]
\label{lem:comparison}
Let $x\in[0,1]^d$.  For every $\varDelta\in[0,1]^d$ with
$\varDelta\le\one-o$ and every $o'\in[0,1]^d$ with $o'\le o$,
\begin{equation}
f\bigl(o'+(\one-x)\odot \varDelta\bigr)
\ge
f(o)+f(\zero)-f(x\oplus o)-f(o-o')
\ge
f(o)-f(x\oplus o)-f(o-o').
\label{eq:comparison}
\end{equation}
\end{lemma}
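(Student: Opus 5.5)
The plan is to chain two applications of the diminishing-returns inequality \ref{it:P1} of Lemma~\ref{lem:dr-toolbox} and then discard nonnegative terms. Throughout I write $A=o'+(\one-x)\odot\varDelta$ for the evaluation point on the left of \eqref{eq:comparison}. All vectors involved are nonnegative, since $o'\ge\zero$, $x\le\one$ and $\varDelta\ge\zero$.

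\emph{Step 1: split off $o-o'$.} Apply \ref{it:P1} with $u=\zero$, $w=o-o'\ge\zero$ and increment $A\ge\zero$. The admissibility condition is
\[
w+A=o+(\one-x)\odot\varDelta\le o+\varDelta\le\one,
\]
which uses $\varDelta\le\one-o$. This yields
\[
f(A)-f(\zero)\ge f\bigl(o+(\one-x)\odot\varDelta\bigr)-f(o-o'),
\]
that is,
\[
f(A)\ge f\bigl(o+(\one-x)\odot\varDelta\bigr)+f(\zero)-f(o-o').
\]

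\emph{Step 2: lower-bound $f(o+(\one-x)\odot\varDelta)$.} Apply \ref{it:P1} again with $u=o$, $w=x\oplus o$ and increment $(\one-x)\odot\varDelta\ge\zero$. We have $o\le x\oplus o$ coordinatewise. The key check, and the only delicate point, is that the increment still fits above $w$. Coordinatewise,
\[
(x\oplus o)_i=1-(1-x_i)(1-o_i),
\qquad
(1-x_i)\varDelta_i\le(1-x_i)(1-o_i),
\]
so $w+(\one-x)\odot\varDelta\le\one$. Hence
\[
f\bigl(o+(\one-x)\odot\varDelta\bigr)-f(o)\ge f\bigl(x\oplus o+(\one-x)\odot\varDelta\bigr)-f(x\oplus o)\ge -f(x\oplus o),
\]
where the last inequality uses nonnegativity of $f$.

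\emph{Step 3: combine.} Substituting Step 2 into Step 1 gives the first inequality of \eqref{eq:comparison}. The second inequality follows by dropping $f(\zero)\ge0$.

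The only obstacle is the choice of increments so that every point stays in $[0,1]^d$. This is exactly why the mixing factor $(\one-x)$ appears, and why Step 2 compares against $x\oplus o$ rather than a larger point.
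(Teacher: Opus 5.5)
Your proof is correct and follows the paper's argument essentially verbatim. Both use the same two applications of (P1): first with base points $\zero\le o-o'$ and increment $o'+(\one-x)\odot\varDelta$, then with base points $o\le x\oplus o$ and increment $(\one-x)\odot\varDelta$, with the same admissibility checks, after which the nonnegative terms $f\bigl(x\oplus o+(\one-x)\odot\varDelta\bigr)$ and $f(\zero)$ are dropped.
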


\begin{proof}
Apply item~\ref{it:P1} with base points $\zero\le o-o'$ and increment
$\varDelta'=o'+(\one-x)\odot\varDelta\ge0$.  This is legitimate because
\[
(o-o')+\varDelta'
=
o+(\one-x)\odot\varDelta
\le
o+(\one-o)
=\one.
\]
It gives
\[
f\bigl(o'+(\one-x)\odot\varDelta\bigr)-f(\zero)
\ge
f\bigl(o+(\one-x)\odot\varDelta\bigr)-f(o-o').
\]
Next apply item~\ref{it:P1} again, now with base points $o\le x\oplus o$ and
increment $(\one-x)\odot\varDelta$; this is legitimate because
$\one-(x\oplus o)=(\one-x)\odot(\one-o)\ge(\one-x)\odot\varDelta$.  It gives
\[
f\bigl(o+(\one-x)\odot\varDelta\bigr)-f(o)
\ge
f\bigl(x\oplus o+(\one-x)\odot\varDelta\bigr)-f(x\oplus o)
\ge
-f(x\oplus o),
\]
using nonnegativity of $f$ in the last step.  Adding the two displays yields
the first inequality of \eqref{eq:comparison}, and $f(\zero)\ge0$ yields the
second.
\end{proof}

\subsection{The two comparisons}

Recall from \eqref{eq:path} and \eqref{eq:omega} that, with the constant
direction $x$ and delay $s$,
\[
\one-Y_\tau(x)=
\begin{cases}
x\oplus e^{-\tau x},&\tau<s,\\
\bigl(x\oplus e^{-sx}\bigr)\odot e^{-(\tau-s)x},&\tau\ge s,
\end{cases}
\]
as is checked by expanding $x\oplus e^{-\tau x}=x+e^{-\tau x}-x\odot
e^{-\tau x}$ and comparing with \eqref{eq:path}.

\begin{lemma}[Early comparison]
\label{lem:early}
For every $x\in[0,1]^d$, every $o\in[0,1]^d$ and every $\tau\in[0,s)$,
\[
f\bigl(Y_\tau(x)+\omega_\tau(x)\odot o\bigr)
\ge
f(o)-f(x\odot o)-(1-e^{-\tau})f(x\oplus o).
\]
\end{lemma}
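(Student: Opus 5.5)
The plan is to recognise the point $Y_\tau(x)+\omega_\tau(x)\odot o$ as the left-hand side of the exponential bound, Lemma~\ref{lem:expbound}, applied to a rescaled function, and then to lower-bound each term of the resulting series with the comparison lemma, Lemma~\ref{lem:comparison}.

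\emph{Step 1: rewrite the point.}
For $\tau<s$, definitions \eqref{eq:path} and \eqref{eq:omega} give $Y_\tau=(\one-x)\odot(\one-e^{-\tau x})$ and $\omega_\tau=(\one-x)\odot e^{-\tau x}$. Hence
\[
Y_\tau+\omega_\tau\odot o=(\one-x)\odot\bigl(\one-(\one-o)\odot e^{-\tau x}\bigr).
\]
Set $F(c)=f((\one-x)\odot c)$. By Lemma~\ref{lem:closure}, $F$ is nonnegative and DR-submodular on $[0,1]^d$.

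\emph{Step 2: apply the exponential bound.}
Apply Lemma~\ref{lem:expbound} to $F$ with $a=\one-o$, the constant direction $\xi\equiv x$, and horizon $t=\tau$, so that $y(\tau)=\tau x$. This gives
\[
f(Y_\tau+\omega_\tau\odot o)\ge e^{-\tau}\Bigl[F(o)+\sum_{i\ge1}\frac{\tau^i}{i!}F\bigl(o\oplus x^{(i)}\bigr)\Bigr],
\]
where $x^{(i)}=x\oplus\cdots\oplus x$ ($i$ copies) is constant in the integration variables, so each integral is just $\tau^i$ times the integrand.

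\emph{Step 3: bound the series terms, $i\ge1$.}
Write $o'=(\one-x)\odot o\le o$, so that $o-o'=x\odot o$. Expanding the $\oplus$ gives
\[
(\one-x)\odot(o\oplus y)=o'+(\one-x)\odot\varDelta,\qquad \varDelta=(\one-o)\odot y\le\one-o.
\]
Lemma~\ref{lem:comparison} then yields $F(o\oplus x^{(i)})\ge f(o)-f(x\oplus o)-f(x\odot o)$.

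\emph{Step 4: bound the $i=0$ term.}
Here the debt $f(x\oplus o)$ must not appear. I will use item~\ref{it:P1} of Lemma~\ref{lem:dr-toolbox} with base points $\zero\le x\odot o$ and increment $(\one-x)\odot o$. This gives $f((\one-x)\odot o)-f(\zero)\ge f(o)-f(x\odot o)$. Dropping $f(\zero)\ge0$ leaves $F(o)\ge f(o)-f(x\odot o)$.

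\emph{Step 5: sum.}
The left-hand terms of the series are the actual values, and each is at least the stated lower bound, so negative lower bounds cause no trouble. Using $\sum_{i\ge1}\tau^i/i!=e^\tau-1$, the total is
\[
e^{-\tau}\bigl[e^\tau(f(o)-f(x\odot o))-(e^\tau-1)f(x\oplus o)\bigr],
\]
which equals the claimed right-hand side $f(o)-f(x\odot o)-(1-e^{-\tau})f(x\oplus o)$.

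The main obstacle is the bookkeeping in Step 3: checking the $\oplus$ identity and the constraint $\varDelta\le\one-o$ required by Lemma~\ref{lem:comparison}. The step that makes the lemma work is treating the $i=0$ term separately, as in Step 4, so that it carries no $f(x\oplus o)$ debt.
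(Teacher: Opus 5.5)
Your proof is correct and is essentially the paper's argument. The paper invokes the two-block bound (Lemma~\ref{lem:expbound2}) with $b=x$, $a^{(1)}=\one$, $\xi^{(1)}\equiv\zero$, which freezes the first block at zero and so reduces exactly to your single-block application of Lemma~\ref{lem:expbound} to $F(c)=f((\one-x)\odot c)$; it then bounds the base term by item~\ref{it:P1} (with the roles of $x\odot o$ and $(\one-x)\odot o$ swapped, which is equivalent) and the higher-order terms by Lemma~\ref{lem:comparison} with $o'=(\one-x)\odot o$ and $\varDelta=(\one-o)\odot W_i$, just as you do.
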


\begin{proof}
Write $Y_\tau=Y_\tau(x)$ and $\omega_\tau=\omega_\tau(x)$.  By
\eqref{eq:omega-identity}, $\omega_\tau=\one-Y_\tau-x$ in the early phase,
so
\[
P_\tau:=Y_\tau+\omega_\tau\odot o
=
Y_\tau\oplus o-x\odot o
=
(\one-x)-(\one-x)\odot(\one-o)\odot e^{-\tau x},
\]
where the last equality follows by substituting
$Y_\tau=(\one-x)\odot(\one-e^{-\tau x})$ and simplifying.  Equivalently,
\[
P_\tau
=
\one-x\odot\one\odot e^{-\int_0^\tau\zero}
-(\one-x)\odot(\one-o)\odot e^{-\int_0^\tau x}.
\]
Apply Lemma~\ref{lem:expbound2} with $t=\tau$, $b=x$, $a^{(1)}=\one$,
$\xi^{(1)}\equiv\zero$, $a^{(2)}=\one-o$ and $\xi^{(2)}\equiv x$.

The base term is
$f(\one-x-(\one-x)\odot(\one-o))=f((\one-x)\odot o)$.  Applying
item~\ref{it:P1}
with base points $\zero\le(\one-x)\odot o$ and increment $x\odot o$, whose
sum with the larger base point is $o\le\one$, gives
\[
f(x\odot o)-f(\zero)\ge f(o)-f\bigl((\one-x)\odot o\bigr),
\qquad\text{i.e.}\qquad
f\bigl((\one-x)\odot o\bigr)\ge f(o)-f(x\odot o).
\]

For the term of order $i\ge1$, note $\one-a^{(1)}=\zero$ and
$\bigoplus_{j\le i}\zero=\zero$, so the first block contributes
$x\odot\zero=\zero$, while $\one-a^{(2)}=o$; writing
$W_i=\bigoplus_{j=1}^ix\in[0,1]^d$ the integrand equals
\[
f\bigl((\one-x)\odot(o\oplus W_i)\bigr)
=
f\bigl(o'+(\one-x)\odot\varDelta\bigr),
\qquad
o'=(\one-x)\odot o,
\quad
\varDelta=(\one-o)\odot W_i,
\]
which is of the form treated by Lemma~\ref{lem:comparison} because
$o'\le o$ and $\varDelta\le\one-o$.  Since $o-o'=x\odot o$, that lemma gives
\[
f\bigl((\one-x)\odot(o\oplus W_i)\bigr)
\ge
f(o)-f(x\oplus o)-f(x\odot o).
\]
The integrand is independent of $\tau$, and
$\sum_{i\ge1}\tau^i/i!=e^\tau-1$.  Substituting the two bounds into
\eqref{eq:expbound2},
\[
\begin{aligned}
f(P_\tau)
&\ge
e^{-\tau}\Bigl[
\bigl(f(o)-f(x\odot o)\bigr)
+(e^\tau-1)\bigl(f(o)-f(x\oplus o)-f(x\odot o)\bigr)
\Bigr]\\
&=
f(o)-f(x\odot o)-(1-e^{-\tau})f(x\oplus o),
\end{aligned}
\]
as claimed.
\end{proof}

\begin{lemma}[Late comparison]
\label{lem:late}
For every $x\in[0,1]^d$, every $o\in[0,1]^d$ and every $\tau\in[s,1]$,
\[
f\bigl(Y_\tau(x)+\omega_\tau(x)\odot o\bigr)
\ge
e^{-\tau}
\Bigl[
e^sf(o)-(e^s-1)f(x\oplus o)+(\tau-s)f(x\oplus o)
\Bigr].
\]
\end{lemma}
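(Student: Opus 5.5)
The plan is to run the same argument as Lemma~\ref{lem:early}: write the late-phase probe point in the two-block exponential form, apply Lemma~\ref{lem:expbound2}, and bound the resulting terms separately.

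\emph{Step 1: the probe point.} Write $Y_\tau=Y_\tau(x)$ and $\omega_\tau=\omega_\tau(x)$. By \eqref{eq:omega-identity}, in the late phase $\omega_\tau=\one-Y_\tau$, so
\[
P_\tau:=Y_\tau+\omega_\tau\odot o=Y_\tau\oplus o=\one-(\one-o)\odot(\one-Y_\tau).
\]
From \eqref{eq:omega} we have $\one-Y_\tau=x\odot e^{-(\tau-s)x}+(\one-x)\odot e^{-\tau x}$. Substituting,
\[
P_\tau=\one-x\odot(\one-o)\odot e^{-y^{(1)}(\tau)}-(\one-x)\odot(\one-o)\odot e^{-y^{(2)}(\tau)}.
\]
Here $\xi^{(1)}(\tau')=\zero$ for $\tau'<s$, $\xi^{(1)}(\tau')=x$ for $\tau'\ge s$, and $\xi^{(2)}\equiv x$. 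With these choices $y^{(1)}(\tau)=(\tau-s)x$ and $y^{(2)}(\tau)=\tau x$.

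\emph{Step 2: apply Lemma~\ref{lem:expbound2}.} Take $t=\tau$, $b=x$, $a^{(1)}=a^{(2)}=\one-o$, and the two fields $\xi^{(1)},\xi^{(2)}$ above. The base term is $f(\one-(\one-o))=f(o)$. The order-$i$ integrand is
\[
f\Bigl(x\odot\bigl(o\oplus\textstyle\bigoplus_j\xi^{(1)}(\tau_j)\bigr)+(\one-x)\odot(o\oplus W_i)\Bigr),
\qquad W_i=\textstyle\bigoplus_{j\le i}x .
\]

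\emph{Step 3: split the domain $[0,\tau]^i$ and bound each piece.}
\begin{enumerate}
\item[(a)] Points with all $\tau_j<s$. The first block is $x\odot o$, so the argument is $o+(\one-x)\odot\varDelta$ with $\varDelta=(\one-o)\odot W_i\le\one-o$. Lemma~\ref{lem:comparison} with $o'=o$ gives the lower bound $f(o)-f(x\oplus o)$. This piece has measure $s^i$, and $\sum_{i\ge1}s^i/i!=e^s-1$.
\item[(b)] $i=1$ with $\tau_1\in[s,\tau]$. Both blocks equal $o\oplus x$, so the integrand is exactly $f(x\oplus o)$. This piece has measure $\tau-s$.
\item[(c)] All remaining points. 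The integrand is bounded below by $0$ using nonnegativity of $f$.
\end{enumerate}

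\emph{Step 4: collect terms.} Summing the base term and the three pieces and multiplying by $e^{-\tau}$ gives
\[
e^{-\tau}\Bigl[f(o)+(e^s-1)\bigl(f(o)-f(x\oplus o)\bigr)+(\tau-s)f(x\oplus o)\Bigr],
\]
which equals the claimed bound.

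The main obstacle is the bookkeeping in Step 1. One has to confirm that $\one-Y_\tau$ really has the two-block exponential form, and that the choice of $\xi^{(1)}$ as piecewise-constant (zero, then $x$) is admissible: it must be integrable, which it is, and it must produce $e^{-(\tau-s)x}$, which it does. After that, piece~(a) is a direct application of Lemma~\ref{lem:comparison}, and pieces~(b) and~(c) are immediate.
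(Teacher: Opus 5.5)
Your proposal is correct and follows the paper's proof step for step. It uses the same rewriting of $Y_\tau+\omega_\tau\odot o$ as $Y_\tau\oplus o$ in two-block exponential form, the same application of Lemma~\ref{lem:expbound2} with $b=x$, $a^{(1)}=a^{(2)}=\one-o$ and the piecewise-constant $\xi^{(1)}$, and the same three-way split of $[0,\tau]^i$. One detail in piece (a) is worth stating explicitly: you need the first, sharper inequality in \eqref{eq:comparison}, where the $f(\zero)$ terms cancel because $o-o'=\zero$, since the second inequality would leave an extra $-f(\zero)$.
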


\begin{proof}
By \eqref{eq:omega-identity}, $\omega_\tau=\one-Y_\tau$ in the late phase, so
$Y_\tau+\omega_\tau\odot o=Y_\tau\oplus o$ and
\[
\one-\bigl(Y_\tau\oplus o\bigr)
=
(\one-Y_\tau)\odot(\one-o)
=
x\odot(\one-o)\odot e^{-(\tau-s)x}
+
(\one-x)\odot(\one-o)\odot e^{-\tau x},
\]
using the displayed formula for $\one-Y_\tau$ and expanding
$x\oplus e^{-sx}$.  Define the piecewise-constant direction
\[
\xi'(\varsigma)=
\begin{cases}
\zero,&\varsigma<s,\\
x,&\varsigma\ge s,
\end{cases}
\]
so that $\int_0^\tau\xi'=(\tau-s)x$.  Apply Lemma~\ref{lem:expbound2} with
$t=\tau$, $b=x$, $a^{(1)}=a^{(2)}=\one-o$, $\xi^{(1)}=\xi'$ and
$\xi^{(2)}\equiv x$.

The base term is
$f\bigl(\one-x\odot(\one-o)-(\one-x)\odot(\one-o)\bigr)=f(o)$.

For the term of order $i\ge1$ at a point $\varsigma\in[0,\tau]^i$, the
integrand is
\[
\Xi_i(\varsigma)
=
f\Bigl(
x\odot\bigl(o\oplus\textstyle\bigoplus_{j\le i}\xi'(\varsigma_j)\bigr)
+
(\one-x)\odot\bigl(o\oplus\textstyle\bigoplus_{j\le i}x\bigr)
\Bigr).
\]
We distinguish three cases.

\emph{(a) All coordinates of $\varsigma$ lie in $[0,s)$.}  Then
$\xi'(\varsigma_j)=\zero$ for all $j$, so the first block contributes
$x\odot o$ and
\[
\Xi_i(\varsigma)
=
f\bigl(x\odot o+(\one-x)\odot(o\oplus W_i)\bigr)
=
f\bigl(o+(\one-x)\odot(\one-o)\odot W_i\bigr),
\qquad W_i=\textstyle\bigoplus_{j\le i}x.
\]
Lemma~\ref{lem:comparison} with $o'=o$ and $\varDelta=(\one-o)\odot W_i$
gives $\Xi_i(\varsigma)\ge f(o)-f(x\oplus o)$.

\emph{(b) $i=1$ and $\varsigma_1\ge s$.}  Then $\xi'(\varsigma_1)=x$ and both
blocks carry the same argument $o\oplus x$, so
$\Xi_1(\varsigma)=f(x\oplus o)$ exactly.

\emph{(c) $i\ge2$ and at least one coordinate of $\varsigma$ is at least
$s$.}  We use only $\Xi_i(\varsigma)\ge0$.

The set in case (a) has measure $s^i$ within $[0,\tau]^i$, and case (b)
contributes the interval $[s,\tau]$ of length $\tau-s$.  Hence
\[
\sum_{i\ge1}\frac1{i!}\int_{[0,\tau]^i}\Xi_i
\;\ge\;
\sum_{i\ge1}\frac{s^i}{i!}\bigl(f(o)-f(x\oplus o)\bigr)
+(\tau-s)f(x\oplus o)
=
(e^s-1)\bigl(f(o)-f(x\oplus o)\bigr)+(\tau-s)f(x\oplus o).
\]
Substituting into \eqref{eq:expbound2},
\[
f\bigl(Y_\tau\oplus o\bigr)
\ge
e^{-\tau}\Bigl[
f(o)+(e^s-1)\bigl(f(o)-f(x\oplus o)\bigr)+(\tau-s)f(x\oplus o)
\Bigr],
\]
which is the claim after collecting the $f(o)$ terms into $e^sf(o)$.
\end{proof}

Lemmas~\ref{lem:early} and~\ref{lem:late} are exactly
\eqref{eq:early-comparison} and \eqref{eq:late-comparison}, completing the
proof of Lemma~\ref{lem:endpoint}.  We note that only nonnegativity and
DR-submodularity of $f$ were used, and that $o$ was an arbitrary point of
$[0,1]^d$: no optimality of $o$, and no property of $K$ beyond
$x,o\in[0,1]^d$, enters these two lemmas.  This is the comparator-uniformity
on which the online argument depends.

\end{document}